\pretocmd{\appendix}{\booltrue{inappendix}}{}{}
\newcommand{\showsimplified}[1]{\ifbool{inappendix}{}{#1}}
\newcommand{\showfull}[1]{\ifbool{inappendix}{#1}{}}
\definecolor{OurColor}{RGB}{232, 245, 233}
\definecolor{HomophilyColor}{RGB}{224, 247, 248} 
\definecolor{BottleneckColor}{RGB}{242, 242, 242}
\definecolor{sectioncolor}{RGB}{0,0,90}
\definecolor{textblue}{HTML}{1E90FF}
\definecolor{textgreen}{HTML}{2E8B57}
\definecolor{textyellow}{HTML}{B8860B}
\definecolor{textorange}{HTML}{FF8C00}
\definecolor{textred}{HTML}{DC143C}
\definecolor{HomophilyColor}{gray}{0.92}
\definecolor{BottleneckColor}{gray}{0.96}
\newcommand{\greencheck}{\textcolor{green!70!black}{\ding{51}}}
\newcommand{\redcross}{\textcolor{red}{\ding{55}}}
\newcommand{\sigsenexp}[3][]{S^{(#2)}_{#1,#3}}
\newcommand{\noisesenexp}[3][]{N^{(#2)}_{#1,#3}}
\newcommand{\globsenexp}[3][]{T^{(#2)}_{#1,#3}}
\definecolor{myblue}{HTML}{4169E1}
\definecolor{myred}{HTML}{B22222}
\definecolor{mygreen}{HTML}{228B22}
\definecolor{mypurple}{HTML}{8A2BE2}
\newcommand{\citen}[2][]{
  [\citenum{#2}
    \ifthenelse{\equal{#1}{}}{}{, #1}
  ]
}
\title[Class-bottlenecks restrict the signal-to-noise ratio in message passing]{Limits of message passing for node classification:\\ How class-bottlenecks restrict signal-to-noise ratio}
\author[J. Rubin et al.]{
Jonathan Rubin\\
\institute{Department of Mathematics\\\small Department of Computing \\\small UKRI Centre for Doctoral Training in AI for Healthcare\\\small Imperial College London}\\
\email{jonathan.rubin19@imperial.ac.uk}\And
Sahil Loomba\\
\institute{Department of Mathematics\\\small Imperial College London}\\
\institute{MIT Institute for Data, Systems, and Society}\\
\email{sloomba@mit.edu}\And
Nick S. Jones\\
\institute{Department of Mathematics\\\small I-X Centre for AI in Science\\\small EPSRC Centre for the Mathematics of Precision Healthcare\\\small Imperial College London}\\
\email{nick.jones@imperial.ac.uk}
}
\begin{document}

\maketitle
\begin{abstract}Message passing neural networks (MPNNs) are powerful models for node classification but suffer from performance limitations under heterophily (low same-class connectivity) and structural bottlenecks in the graph. We provide a unifying statistical framework exposing the relationship between heterophily and bottlenecks through the signal-to-noise ratio (SNR) of MPNN representations. The SNR decomposes model performance into feature-dependent parameters and feature-independent sensitivities. We prove that the sensitivity to class-wise signals is bounded by higher-order homophily---a generalisation of classical homophily to multi-hop neighbourhoods---and show that low higher-order homophily manifests locally as the interaction between structural bottlenecks and class labels (class-bottlenecks). Through analysis of graph ensembles, we provide a further quantitative decomposition of bottlenecking into underreaching (lack of depth implying signals cannot arrive) and oversquashing (lack of breadth implying signals arriving on fewer paths) with closed-form expressions. We prove that optimal graph structures for maximising higher-order homophily are disjoint unions of single-class and two-class-bipartite clusters. This yields BRIDGE, a graph ensemble-based rewiring algorithm that achieves near-perfect classification accuracy across all homophily regimes on synthetic benchmarks and significant improvements on real-world benchmarks, by eliminating the ``mid-homophily pitfall'' where MPNNs typically struggle, surpassing current standard rewiring techniques from the literature. Our framework, whose code we make available for public use, provides both diagnostic tools for assessing MPNN performance, and simple yet effective methods for enhancing performance through principled graph modification.

\end{abstract}

\section*{Main}

Geometric deep learning has emerged as a powerful framework for learning representations of structured data \cite{kipf2017semi, Vaswani2017Attention, wu2020comprehensive, bronstein2021geometric}, leveraging dependencies between entities to capture complex patterns \cite{battaglia2018relational, waikhom2021graph}. These dependencies often come in the form of graphs, where entities are represented by nodes and relations by edges. Message passing neural networks (MPNNs) are prominent models in this framework that operate by iteratively updating each node's representation based on its neighbours' features, propagating information across the graph to build expressive node representations \cite{gilmer2017neural, hamilton2017inductive}.

\begin{figure}[!htbp]
  \centering
  
  \begin{subfigure}[t]{\linewidth}
    \centering
    \makebox[\textwidth][c]{\includegraphics[width=1.3\linewidth]{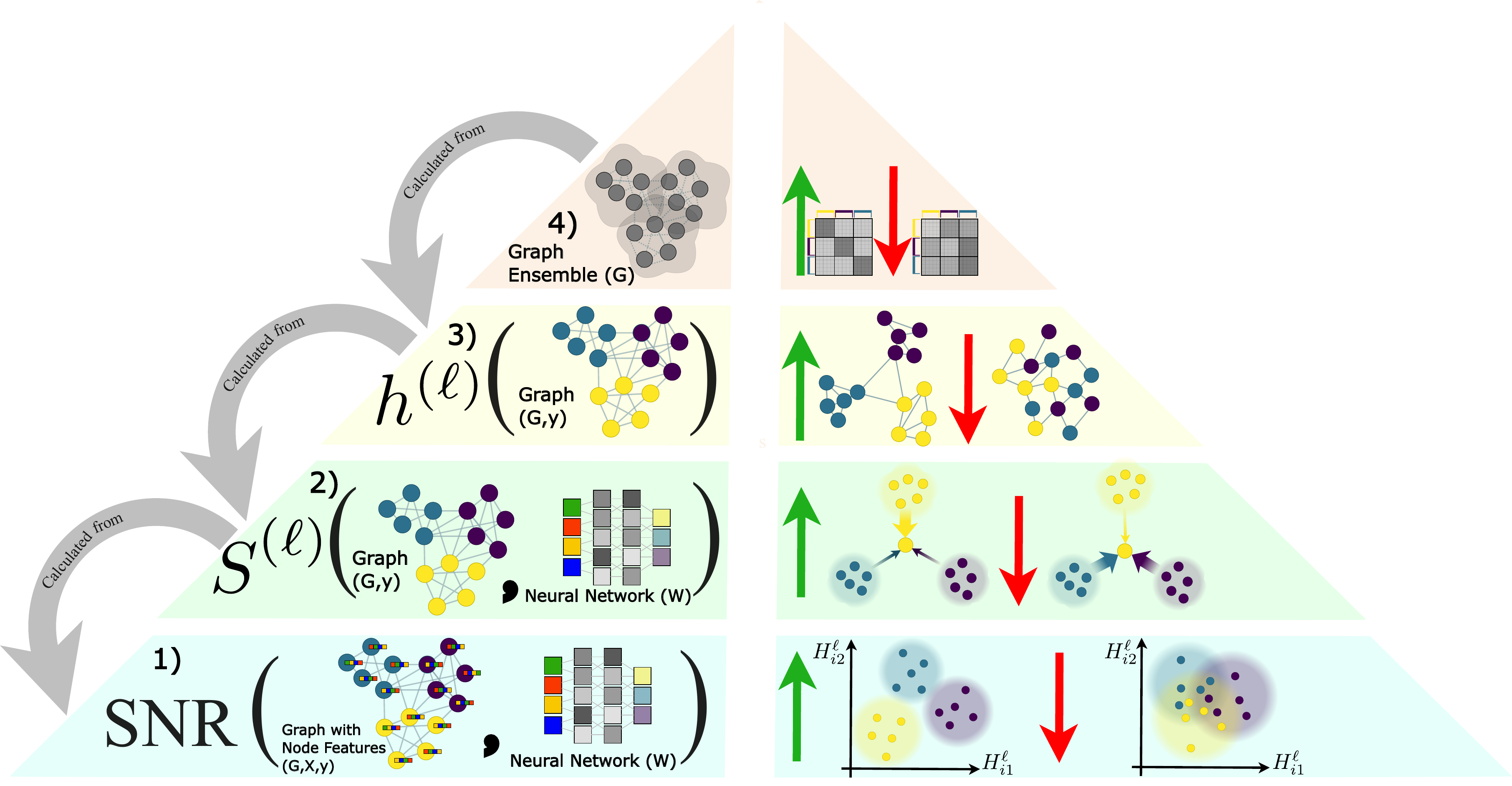}}
    \caption{Hierarchical decomposition of MPNN performance in node classification tasks.}
    \label{fig:framework_outline}
  \end{subfigure}
  
  \vspace{0.5em} 
  
  \centering
  \begin{subfigure}[t]{0.49\linewidth}
    \centering
    \includegraphics[width=\linewidth]{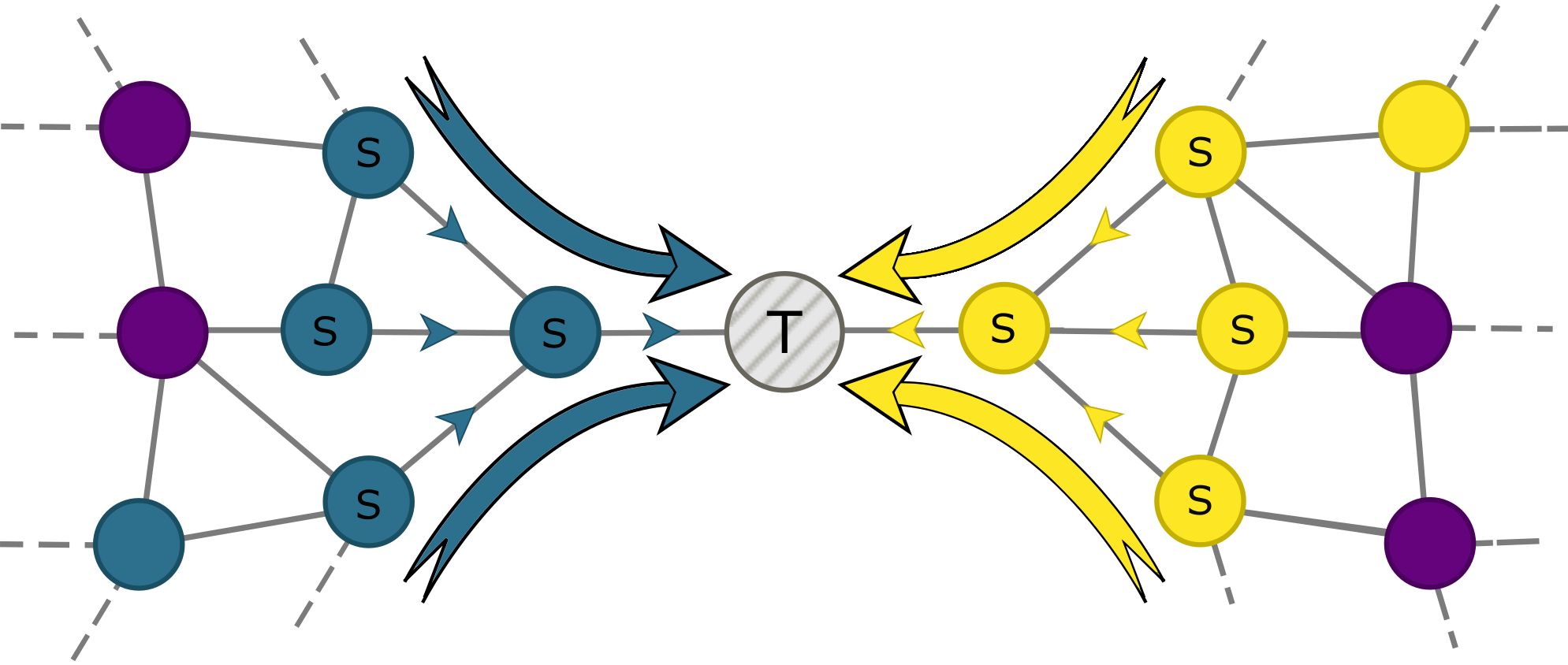}
    \caption{A ``bad'' class-bottleneck.}
    \label{fig:homophilic_bottleneck_bad}
  \end{subfigure}
  \hfill
  \begin{subfigure}[t]{0.49\linewidth}
    \centering
    \includegraphics[width=\linewidth]{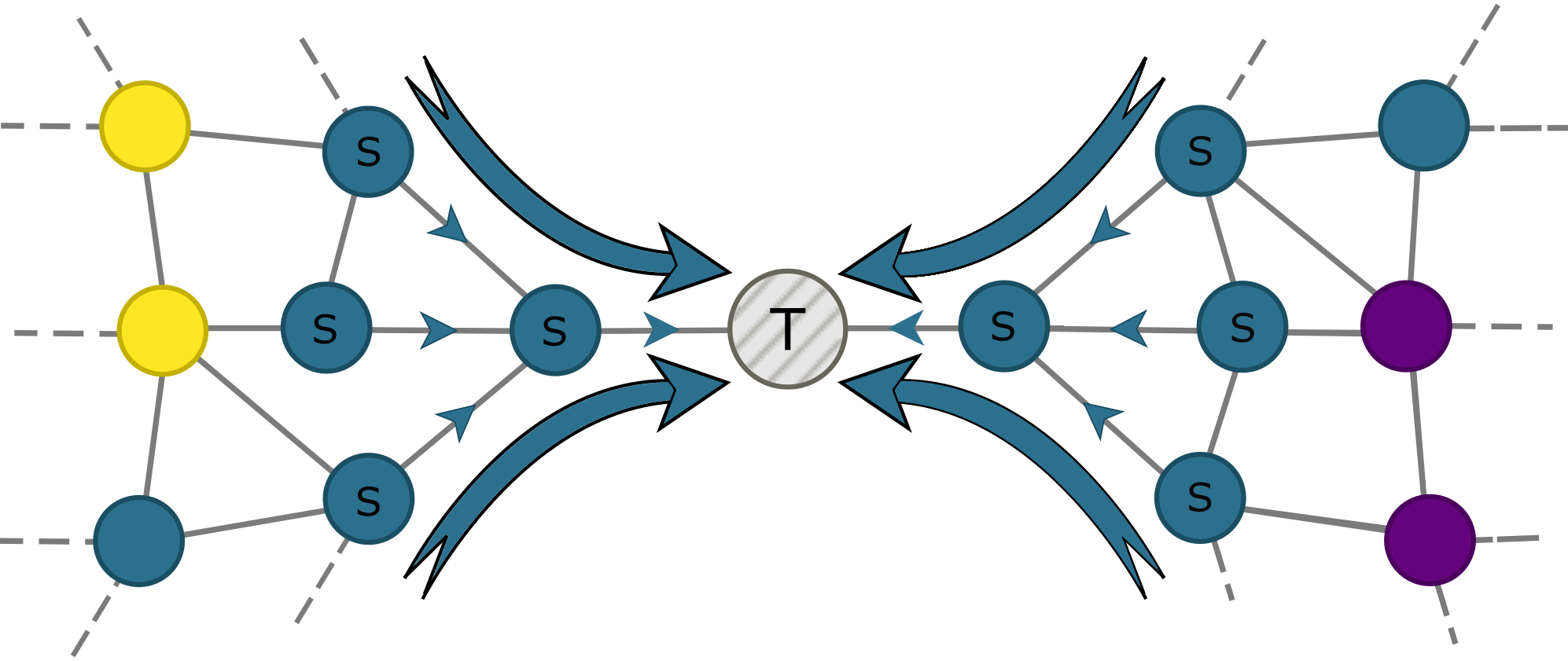}
    \caption{A ``mild'' class-bottleneck.}
    \label{fig:homophilic_bottleneck_good}
  \end{subfigure}
  
  \vspace{0.5em}

  \caption{\textbf{(a) Analysis of MPNN performance in node classification can be hierarchically decomposed; Eq.~\eqref{eq:framework-hierarchy}.}~We incrementally decouple the different factors that contribute to MPNN performance on a node classification task---the graph structure $G$, the node labels $y$, the model weights $W$, and the input features $X$. The signal-to-noise ratio (1; SNR) depends on the signal sensitivity (2; $S^{(\ell)}$), which is bounded by higher-order homophily (3; $h^{(\ell)}$; (Eq.~\eqref{eq:avg_sensitivity_bound}) that can be approximated using the expected adjacency matrix (4; $\mathbb{E}[\mathbf{A}]$) of the graph ensemble through oversquashing/underreaching analysis (Eq.~\eqref{eq:underover}, Theorem~\ref{theorem:underreaching_oversquashing}). \textbf{(b), (c) Not all structural bottlenecks are equal: the interaction between class labels and structural bottlenecks (class-bottlenecks) determines node classification performance of MPNNs.}~Both graphs in (b) and (c) depict a structural bottleneck. However, in (b) a ``bad'' bottleneck where messages from source nodes (S) of different classes interfere at the target node (T), limiting the local class-bottlenecking score $\localhomo{\ell,\ell}{\hat{\mat{A}}}{T}$ (Eq.~\eqref{eq:p_order_local_homophily}) and thus restricting signal sensitivity (Eq.~\eqref{eq:individual_signal_sensitivity_sgc}, Eq.~\eqref{eq:individual_signal_sensitivity_mpnn}). In (c), a more ``mild'' bottleneck still throttles signals coming from the source nodes, but the same-class source nodes positively reinforce the signal.}\label{fig:framework_homophilic}
\end{figure}

However, the performance of MPNNs can be substantially hindered in certain graph structures, especially for the task of node classification. Heterophilic graphs, which contain a high proportion of edges connecting nodes of different classes, pose a challenge as they limit the aggregation of class-specific information. Homophily, the tendency of nodes within the same class to preferentially connect to one another, thus plays a crucial role in determining MPNN performance \cite{zhu2020homophily, luan2022revisiting, zheng2022graph, ma2021homophily,gong2024survey,luan2024heterophilicgraphlearninghandbook}.

Additionally, bottleneck structures in the graph have been shown to impede the flow of information as a result of underreaching---where information from distant nodes fails to propagate through the network---and oversquashing---where information from multiple source nodes is compressed into a fixed-size vector---leading to loss of important signals \cite{alon2021bottleneck, topping2021understanding, black2023understanding}. Our work provides a unified framework to analyse these phenomena and assess their impact on node classification performance.

Prior work has investigated these behaviours in isolation, focusing on specific failure modes and proposing tailored architectures to mitigate them. For example, Di  Giovanni et al. \cite{digiovanni2023does} analyse how poor MPNN sensitivity as a result of bottlenecks, measured through the Jacobian of the MPNN, restrict their expressive power, whilst Novak et al. \cite{novak2018sensitivity} show that neural network sensitivity, measured using the mean Jacobian norm, reduces generalisation power. On the other hand, Zhu et al. and Luan et al. \cite{zhu2020homophily,luan2024graphneuralnetworkshelp} investigate the impact of homophily on intra-class and inter-class node distinguishability and empirically study when graph-aware models outperform graph-agnostic models.

These varying viewpoints present different and sometimes conflicting implications for MPNN sensitivity and homophily. For instance, following Di  Giovanni et al. \cite{digiovanni2023does}, graphs with strong bottlenecks lead to less expressive MPNN models, however these same structures would result in MPNNs with lower sensitivity and thus Novak et al. \cite{novak2018sensitivity} suggest they would exhibit better generalisation. This contrast highlights the need to distinguish between different \emph{types} of sensitivity in MPNNs. Additionally, a graph with a strong community structure will be highly bottlenecked at the intersection of the communities; yet, if those communities align with node classes the graph would be highly homophilic and, by Luan et al. \cite{luan2024graphneuralnetworkshelp}, the MPNN would distinguish node classes more effectively. 
These examples show the need for a unified understanding of how graph structure affects MPNN performance in node classification, since a holistic view is crucial for designing MPNNs that can robustly learn distinct representations for node classification. In this paper, we answer the following ultimate question:

\begin{center}
\textit{What is the precise relationship between homophily and bottlenecks, and how does this relationship dictate the fundamental performance limits of MPNNs?}
\vspace{10pt}
\end{center}
Specifically, our work makes the following contributions to understanding and improving MPNNs:
\begin{enumerate}[leftmargin=*]
\item \textbf{Signal-to-noise ratio of message passing.} 
   We introduce a signal-to-noise ratio (SNR) that quantifies MPNN performance through two orthogonal components: feature-independent model sensitivity measures---$S^{(\ell)}(\cdot)$ in Figure \ref{fig:framework_outline}---that capture how MPNNs respond to input changes, and model-independent statistics that characterise input feature quality.
\item \textbf{Higher‑order homophily bounds sensitivity.} 
   We show that the average signal sensitivity is provably restricted by higher‑order homophily $h^{(\ell)}(\cdot)$; low homophily manifests locally as \emph{class-bottlenecks}, depicted in Figure \ref{fig:homophilic_bottleneck_bad}, that throttle class‑specific information regardless of architecture. 
\item \textbf{Bottlenecks decompose into underreaching and oversquashing.} 
   Assuming a graph ensemble, we quantitatively decompose higher‑order homophily into \emph{underreaching} (lack of depth for distant signals) and \emph{oversquashing} (lack of breadth for signals arriving on too few paths)---whose joint contributions to bottlenecking are only heuristically explained in the literature---and provide closed‑form expressions for both effects. 
\item \textbf{Optimal structure and principled rewiring.}
We prove that the graph structures that maximise higher-order homophily are disjoint unions of single-class and two-class-bipartite clusters.  This theoretical result yields BRIDGE, Block Resampling from Inference-Derived Graph Ensembles, an iterative edge-resampling algorithm that reshapes the graph structure toward this optimum.
\end{enumerate}

Our framework builds a simple hierarchical view of how different factors affect MPNN performance in node classification, visualised in Figure \ref{fig:framework_outline}. We incrementally decouple the different factors by focusing on a central quantity at each level as given by Eq.~\eqref{eq:framework-hierarchy}. The ultimate measure, the signal-to-noise ratio (SNR), depends on the complete setup: the graph structure $G$, the node labels $y$, the model weights $W$, and the input features $X$. The  SNR is shown to be a direct function of the model's ``signal sensitivity'' $S^{(\ell)}$, which captures how the model $W$ processes label-relevant signals $y$ on the given graph $G$. This sensitivity, in turn, is bounded by the graph's higher-order homophily $h^{(\ell)}$, a structural property capturing multi-hop class-wise connectivity that depends only on the graph structure $G$ and the true class labels $y$. Finally, this higher-order homophily can be approximated by analysing the properties of the underlying graph ensemble, represented by the expected adjacency matrix $\mathbb{E}[\mathbf{A}]$, inferred from the given instance of the graph $G$.

\begin{equation}
\label{eq:framework-hierarchy}
\begin{tikzcd}[
  column sep=1.8cm,
  row sep=0cm,
  every arrow/.append style={thick},
  nodes={font=\sffamily}
]
 \mathbb{E}[\mathbf{A}]
  \arrow[r, -{Stealth[length=3mm]},
         "{\small\text{approximates}}",
         "{\scriptsize \mathrm{Eq.}~\eqref{eq:expected_l_homophily_approx}}"'] &
 {h}^{(\ell)}
  \arrow[r, -{Stealth[length=3mm]},
         "{\small\text{bounds}}",
         "{\scriptsize \mathrm{Eq.}~\eqref{eq:avg_sensitivity_bound}}"'] &
 {S}^{(\ell)}
  \arrow[r, -{Stealth[length=3mm]},
         "{\small\text{controls}}",
         "{\scriptsize \mathrm{Eq.}~\eqref{eq:snrsensitivity}}"'] &
 \mathrm{SNR}
\\
 |[font=\rm]|{\textit{(G)}} &
 |[font=\rm]|{\textit{(G, y)}} &
 |[font=\rm]|{\textit{(G, y, W)}} &
 |[font=\rm]|{\textit{(G, y, W, X)}}
\end{tikzcd}
\end{equation}

Through extensive experiments on standard benchmark synthetic graphs and real-world graph datasets, we validate our theoretical analysis and demonstrate the practical utility of our framework. Overall, our work offers a deeper understanding of the mechanisms driving MPNN performance and provides guiding principles for model design. Our results pave the way for a more statistically grounded analysis of MPNNs, unlocking their potential for a wider range of applications. Code for all SNR calculations as well as the BRIDGE algorithm is available at: \href{https://github.com/jr419/BRIDGE}{\texttt{https://github.com/jr419/BRIDGE}}, where we provide additional documentation on how to use it.

\begin{table}[htbp]
  \centering
  \makebox[\textwidth]{
  \begin{tabular}{l*{4}{c}}
    \toprule
    \multirow{2}{*}{\textbf{Paper}}
      & \multicolumn{2}{c}{\textbf{Homophily}}
      & \multicolumn{2}{c}{\textbf{Bottlenecks}} \\
    \cmidrule(lr){2-3}\cmidrule(lr){4-5}
      & \textbf{First-order}
      & \textbf{Higher-order}
      & \textbf{Oversquashing}
      & \textbf{Underreaching} \\
    \midrule
    \rowcolor{OurColor}
    Our Paper & \greencheck & \greencheck & \greencheck & \greencheck \\

    \rowcolor{HomophilyColor}
    Zhu et al. \cite{zhu2020homophily} & \greencheck & \redcross & \redcross & \redcross \\
    \rowcolor{HomophilyColor}
    Luan et al. \cite{luan2022revisiting} & \greencheck & \redcross & \redcross & \redcross \\
    \rowcolor{HomophilyColor}
    Rossi et al. \cite{rossi2023edgedirectionalityimproveslearning} & \greencheck & \greencheck & \redcross & \redcross \\
    \rowcolor{HomophilyColor}
    Ma et al. \cite{ma2021homophily} & \greencheck & \redcross & \redcross & \redcross \\
    \rowcolor{HomophilyColor}
    Luan et al. \cite{luan2024graphneuralnetworkshelp} & \greencheck & \redcross & \redcross & \redcross \\

    \rowcolor{BottleneckColor}
    Alon and Yahav \cite{alon2021bottleneck} & \redcross & \redcross & \greencheck & \greencheck \\
    \rowcolor{BottleneckColor}
    Topping et al. \cite{topping2021understanding} & \redcross & \redcross & \greencheck & \redcross \\
    \rowcolor{BottleneckColor}
    Black et al. \cite{black2023understanding} & \redcross & \redcross & \greencheck & \redcross \\
    \rowcolor{BottleneckColor}
    Di  Giovanni et al. \cite{digiovanni2023does} & \redcross & \redcross & \greencheck & \redcross \\

    \bottomrule
  \end{tabular}
  }
  \caption{\textbf{Comparison of various aspects of node classification performance of MPNNs considered in the literature.}  
  Row shading differentiates the homophily and bottleneck literatures.}
  \label{tab:theory-comparison}
\end{table}

\subsection*{Problem setup}
We consider semi-supervised node classification on an attributed graph $G = (V, E)$ with node set $V:=[n]$ consisting of $n$ nodes and possibly directed edge set $E:=\{(i,j)\in V^2\,:\,i\text{ and }j\text{ are directly connected}\}$, encoded by the adjacency matrix $\mat{A} \in \{0,1\}^{n\times n}$, with feature matrix $\mat{X} \in \mathbb{R}^{n\times d_{\mathrm{in}}}$. Each node $i$ belongs to a class $y_i \in [k]$, and the objective is to learn discriminative node representations $\mat{H}_i \in \mathbb{R}^{d_{\mathrm{out}}}$ that enable accurate class predictions.

\paragraph{Message passing neural networks.}

MPNNs learn node representations by iteratively aggregating and transforming feature information from each node's local neighbourhood \cite{gilmer2017neural, bronstein2021geometric, battaglia2018relational}. Formally, an MPNN computes the representation of node $i$ at layer $\ell+1$ as:

\begin{align}
  \mat{H}_i^{(\ell+1)} := U_\ell\left(\mat{H}_i^{(\ell)}, \sum_{j \in V} \hat{A}_{ij} M_\ell\left(\mat{H}_i^{(\ell)}, \mat{H}_j^{(\ell)}\right)\right) \label{eq:mpnn}
\end{align}

where $\hat{\mat{A}} \in \mathbb{R}^{n \times n}$ is a graph shift operator, typically a normalised version of the adjacency matrix, $U_\ell$ and $M_\ell$ are learnable transformations. Initialised with $\mat{H}_i^{(0)} = \mat{X}_i$, stacked layers ($\ell = 0,\dots,L-1$) sequentially integrate multi-hop dependencies, with final representations $\mat{H}_i^{(L)}$ fed to a softmax classifier for class prediction.

\paragraph{Feature distribution.}
Consider a reparameterisation of node $i$’s feature vector in terms of its class-wise mean vector $\vect{\mu}$, global shift $\vect{\gamma}$ and corresponding residual or ``noise'' vector $\vect{\epsilon}$, akin to the reparameterisation used in variational autoencoders to learn latent data distributions in a differentiable manner \cite{kingma2022autoencoding}:
\begin{equation}\label{eq:featuredecomp}
  \mat{X}_j = \underbrace{\vect{\mu}_{y_j}}_{\text{class signal}} + \underbrace{\vect{\gamma}}_{\text{global shift}} + \underbrace{\vect{\epsilon}_j}_{\text{node noise}}.
\end{equation}
We make reasonable assumptions on these three terms that encompass most existing feature distributions in the literature (such as the CSBM model \cite{ma2021homophily}): $\vect{\mu}_{y_j}$ represents class-specific signals i.e. $\mathbb{E}[\vect{\mu}_{c}] = \mathbb{E}[\mat{X}_j\,|\,y_j=c]$, $\vect{\gamma}$ captures zero-mean global variations, and $\vect{\epsilon}_j$ denotes IID zero-mean node-level noise. The feature-covariance structure is characterized by signal covariance $\var{\vect{\mu}} :=\mat{\Sigma}$, global shift covariance $\var{\vect{\gamma}} :=\mat{\Phi}$, and noise covariance $\var{\vect{\epsilon}_i} :=\mat{\Psi}$. In other words, for nodes $j$ and $k$, their feature covariance satisfies:

\begin{align}\label{eq:total_covariance}
\mathrm{Cov}(\mathbf{X}_j,\mathbf{X}_k)
=
\begin{cases}
\underbrace{\begin{bmatrix}
\Phi_{11} & \cdots & \Phi_{1d} \\
\vdots & \ddots & \vdots \\
\Phi_{d1} & \cdots & \Phi_{dd}
\end{bmatrix}}_{\mathbf{\Phi}},
&
y_j \neq y_k, \\[6pt]
\underbrace{\begin{bmatrix}
\Sigma_{11} & \cdots & \Sigma_{1d} \\
\vdots & \ddots & \vdots \\
\Sigma_{d1} & \cdots & \Sigma_{dd}
\end{bmatrix}}_{\mathbf{\Sigma}}
+
\underbrace{\begin{bmatrix}
\Phi_{11} & \cdots & \Phi_{1d} \\
\vdots & \ddots & \vdots \\
\Phi_{d1} & \cdots & \Phi_{dd}
\end{bmatrix}}_{\mathbf{\Phi}},
&
y_j = y_k,\ j \neq k, \\[6pt]
\underbrace{\begin{bmatrix}
\Sigma_{11} & \cdots & \Sigma_{1d} \\
\vdots & \ddots & \vdots \\
\Sigma_{d1} & \cdots & \Sigma_{dd}
\end{bmatrix}}_{\mathbf{\Sigma}}
+
\underbrace{\begin{bmatrix}
\Phi_{11} & \cdots & \Phi_{1d} \\
\vdots & \ddots & \vdots \\
\Phi_{d1} & \cdots & \Phi_{dd}
\end{bmatrix}}_{\mathbf{\Phi}}
+
\underbrace{\begin{bmatrix}
\Psi_{11} & \cdots & \Psi_{1d} \\
\vdots & \ddots & \vdots \\
\Psi_{d1} & \cdots & \Psi_{dd}
\end{bmatrix}}_{\mathbf{\Psi}},
&
j = k.
\end{cases}
\end{align}

Notably, we are not treating $\vect{y}$ as a random variable, but as a fixed class label set, defining the distribution over possible feature sets $\mat{X}$.
We thus explicitly separate class-driven structure from global and node-specific stochasticity. The class-wise covariance structure, $\mat{\Sigma}$, controls the degree of consistency among node features within each class, making unique aspects of each class easier or harder to discern.

\paragraph{Homophily.}
In graph-based learning, homophily refers to the tendency of similar nodes (e.g., nodes with the same class label) to be preferentially connected. This property is quantified in various ways in the literature, but most commonly using two measures: \emph{edge homophily} and \emph{node homophily} \cite{zhu2020homophily}. Edge homophily is defined as the fraction of edges in the graph that connect nodes of the same class, while node homophily measures the proportion of same-class neighbours for each node, averaged over all nodes. Formally, for a graph $ G = (V, E) $, they are expressed as:
\begin{align}\label{eq:edge_node_homophily}
h_\text{edge} := \frac{\big|\{(i, j) \in E : y_i = y_j\}\big|}{|E|}, \quad h_\text{node} := \frac{1}{|V|} \sum_{i \in V} \frac{\big|\{j \in V : (i,j)\in E, y_i = y_j\}\big|}{\big|\{j \in V : (i,j)\in E\}\big|}.
\end{align}
Intuitively, high homophily aligns with better MPNN performance because the message-passing mechanism relies on aggregating information from neighbouring nodes. When nodes with the same class label are more likely to be connected, the aggregated features are more likely to contain relevant information for predicting the node's label, leading to improved representations and model accuracy. However in practice, high homophily is not always necessary---many works in the literature have presented cases where MPNNs perform well in heterophilic (low homophily) settings, and have proposed their own measures of homophily to more accurately capture MPNN performance in heterophilic graphs \cite{zhu2020homophily,ma2021homophily,luan2021heterophilyrealnightmaregraph}.

These measures primarily focus on \emph{direct} connections. For a more generalised form that can extend to multi-hop relationships, we consider weighted homophily, as introduced by Rossi et al. \cite{rossi2023edgedirectionalityimproveslearning}:
\begin{align}\label{eq:def_weighted_homophily}
h(\mat{S}) := \frac{1}{|V|} \sum_{i,j \in V} S_{ij}\delta_{y_i y_j},
\end{align} 
where $\mat{S}$ is a choice of message-passing matrix, and $\delta_{y_i y_j}$ is the Kronecker delta. This measure can be seen as a generalisation of edge and node homophily: reducing to $h_\text{edge}$ when $\mat{S} = \frac{1}{\avg{d}} \mat{A}$ (where $\avg{d}:= \frac{1}{|V|}\sum_{i,j\in V}{A_{ij}}$ is the mean degree), and $h_\text{node}$ when $\mat{S} = \mat{D}^{-1} \mat{A}$ is the random-walk normalised adjacency matrix (where $\mat{D}_{ii} := \sum_{j\in V}{A_{ij}}$ is the diagonal degree matrix). We note that in the original form of weighted homophily as defined by Rossi et al. \cite{rossi2023edgedirectionalityimproveslearning}, the authors consider a normalised definition using a normalised $\mat{S}'$ defined as $S'_{ij} := \frac{S_{ij}}{\sum{j\in V}{S_{ij}}}$ instead of $\mat{S}$. We show in this paper that with the correct choice of message passing matrices $\mat{S}$, the unnormalised definition is more natural to use.

\section*{Results}

\subsection*{Signal-to-noise ratio can be decomposed into feature covariances and feature-agnostic model sensitivities}

To analyse the behaviour of MPNNs, we introduce three key feature-agnostic metrics that capture the model's sensitivity to different aspects of the input data:

\begin{equation}\label{eq:sensitivities_intro}
\begin{aligned}
  \sigsenexp[i]{\ell}{p,q,r} &:= \left[\nabla_{\vect{\mu}} H_{ip}^{(\ell)} \left(\nabla_{\vect{\mu}} H_{ip}^{(\ell)}\right)^T\big|_{\mat{X}=\mat{0}}\right]_{qr}, \quad \noisesenexp[i]{\ell}{p,q,r}:= \left[\nabla_{\mat{\epsilon}} H_{ip}^{(\ell)} \left(\nabla_{\mat{\epsilon}} H_{ip}^{(\ell)}\right)^T\big|_{\mat{X}=\mat{0}}\right]_{qr}, \\
  & \qquad \qquad \globsenexp[i]{\ell}{p,q,r}:= \left[\nabla_{\mat{\gamma}} H_{ip}^{(\ell)} \left(\nabla_{\mat{\gamma}} H_{ip}^{(\ell)}\right)^T\big|_{\mat{X}=\mat{0}}\right]_{qr}.
\end{aligned}
\end{equation}

We term these as signal, noise and global sensitivity, respectively. These sensitivities can be viewed as induced metrics on the latent representation space, quantifying the MPNN's local sensitivity to variations in class-wise signal, node-level noise, and global shifts of the input features. Importantly, all three sensitivity measures are feature-independent, as they depend only on the model architecture and the class labels, not on the specific feature values. In other words, these measures depend on the graph structure, its partition into classes, and the node representation update function from Eq. \eqref{eq:mpnn}, but they do not depend on a specific choice of node features $X$. Our analysis is thus robust across different feature distributions, and also allows us to isolate the effects of the graph structure.

Using the feature decomposition in Eq. \eqref{eq:featuredecomp}, signal sensitivity can be calculated as:
\begin{align}
\sigsenexp[i]{\ell}{p,q,r} = \sum_{j, k \in V} \frac{\partial H_{ip}^{(\ell)}}{\partial X_{jq}} \frac{\partial H_{ip}^{(\ell)}}{\partial X_{kr}}\Bigg|_{\mat{X}=\mat{0}} \delta_{y_j y_k}\label{eq:signal_sensitivity}
\end{align}

where $H_{ip}^{(\ell)}$ denotes the $\nth{p}$ feature of the representation of node $i$ at layer $\ell$ and $X_{jq}$ is the $\nth{q}$ feature of node $j$. We evaluate the derivatives at $\mat{X}=\mat{0}$ assuming the features are sufficiently concentrated near the origin. Signal sensitivity is equivalent to the sensitivity to \emph{coherent} changes among features of input nodes of the \emph{same} class, which provides an initial intuition behind the link between homophily and information propagation through the graph: the product of derivatives $\frac{\partial H^{(\ell)}_{ip}}{\partial X_{jq}} \frac{\partial H^{(\ell)}_{ip}}{\partial X_{kr}}$ measures whether the $\nth{p}$ output dimension of node $i$ changes in the same or different direction with changes to respectively the $\nth{q}$ and $\nth{r}$ inputs of nodes $j$ and $k$, while $\delta_{y_j y_k}$ collects terms corresponding to the same class.

Similarly, the noise and global sensitivities can be calculated as:
\begin{align}\label{eq:noise_and_global_sensitivity} 
  \noisesenexp[i]{\ell}{p,q,r} := \sum_{j \in V} \frac{\partial H_{ip}^{(\ell)}}{\partial X_{jq}}\frac{\partial H_{ip}^{(\ell)}}{\partial X_{jr}}\Bigg|_{\mat{X}=\mat{0}}\quad \globsenexp[i]{\ell}{p,q,r} := \sum_{j, k \in V} \frac{\partial H_{ip}^{(\ell)}}{\partial X_{jq}} \frac{\partial H_{ip}^{(\ell)}}{\partial X_{kr}}\Bigg|_{\mat{X}=\mat{0}} 
\end{align}

The noise sensitivity measures how responsive the MPNN is to random, unstructured variations in the input features (i.e., the IID noise component in the feature decomposition). The global sensitivity measures the MPNN's sensitivity to global background changes of the input features, regardless of their alignment with the class structure.

\paragraph{The signal-to-noise ratio of MPNNs.}

To evaluate the quality of feature embeddings of MPNNs and non-relational models, we consider the signal-to-noise ratio (SNR) of their feature representations. For an $\ell$-layer MPNN, we define the SNR as:

\begin{align}\label{eq:snr_def}
  \snrsub{H_{ip}^{(\ell)}}{} := \frac{\varsub{\condexpectsub{H_{ip}^{(\ell)}}{\vect{\mu}}{\vect{\gamma}, \vect{\epsilon}}}{\vect{\mu}}}{\expectsub{\varsub{H_{ip}^{(\ell)}\,\Big|\, \vect{\mu}}{\vect{\gamma}, \vect{\epsilon}}}{\vect{\mu}}}
\end{align}

This formulation of the SNR aligns with the classical definition in statistical signal processing and information theory. The numerator, $\varsub{\condexpectsub{H_{ip}^{(\ell)}}{\vect{\mu}}{\vect{\gamma}, \vect{\epsilon}}}{\vect{\mu}}$, quantifies the variance in output feature dimension $p$ explained by class-wise feature variability, which can be interpreted as the ``signal'' strength---the extent to which the model distinguishes between classes. The denominator, $\expectsub{\varsub{H_{ip}^{(\ell)}\,\Big|\, \vect{\mu}}{\vect{\gamma}, \vect{\epsilon}}}{\vect{\mu}}$, represents the residual variation not explained by class-wise feature variability, which can be viewed as the ``noise''. By taking the ratio of these terms, $\snrsub{H_{ip}^{(\ell)}}{}$ measures how well the model separates classes (signal) relative to the intrinsic variability within classes (noise), making it a valid and meaningful measure of the model's discriminative power.

\begin{restatable}[SNR sensitivity relation]{theorem}{snrsensitivity}
\label{theorem:snrsensitivity}

Consider a feature distribution following the covariance structure in Eq.
\eqref{eq:total_covariance}. Assuming the feature distribution is concentrated near the origin, the SNR of an MPNN for the
$\nth{p}$ output feature of node $i$ at layer $\ell$, in Eq. \eqref{eq:snr_def}, is approximated by
\begin{align}\label{eq:snrsensitivity}
  \snrsub{H_{ip}^{(\ell)}}{}
  \simeq
  \frac{\displaystyle
    \sum_{q,r=1}^{d_{\mathrm{in}}}
      \Sigma_{qr}\,
      \sigsenexp[i]{\ell}{p,q,r}
    }
       {\displaystyle
    \sum_{q,r=1}^{d_{\mathrm{in}}}
      \Phi_{qr}\,\globsenexp[i]{\ell}{p,q,r}
    \;+\;
    \sum_{q,r=1}^{d_{\mathrm{in}}}
      \Psi_{qr}\,\noisesenexp[i]{\ell}{p,q,r}},
\end{align}
where the approximation denoted by $\simeq$ relies on the first-order Taylor expansion of
$H_{ip}^{(\ell)}$ around $\mat{X}=\mathbf{0}$ when computing the
variances that define the SNR.
\end{restatable}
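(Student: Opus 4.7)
The plan is to apply a first-order Taylor expansion of $H_{ip}^{(\ell)}$ in the input features $\mat{X}$ about $\mat{X}=\mathbf{0}$, substitute the feature reparameterisation $X_{jq} = \mu_{y_j,q} + \gamma_q + \epsilon_{j,q}$ from Eq.~\eqref{eq:featuredecomp}, and then directly compute the conditional moments that define the SNR. After linearisation, $H_{ip}^{(\ell)}$ becomes an affine function of $\vect{\mu}$, $\vect{\gamma}$, and $\vect{\epsilon}$ whose coefficients are precisely the derivatives $\partial H_{ip}^{(\ell)}/\partial X_{jq}\big|_{\mat{X}=\mathbf{0}}$ that appear in the three sensitivity definitions in Eq.~\eqref{eq:sensitivities_intro}. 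Because $\vect{\gamma}$ and $\vect{\epsilon}$ are independent of $\vect{\mu}$ with zero mean, the inner conditional expectation in the numerator collapses onto the signal term, while the conditional variance in the denominator isolates the global and noise contributions.

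For the numerator, $\mathrm{Var}_{\vect{\mu}}\bigl(\mathbb{E}[H_{ip}^{(\ell)}\mid\vect{\mu}]\bigr)$ becomes a double sum over node pairs $(j,k)$ weighted by $\mathrm{Cov}(\mu_{y_j,q},\mu_{y_k,r})$. By the covariance assumption in Eq.~\eqref{eq:total_covariance}, this covariance equals $\Sigma_{qr}\delta_{y_j y_k}$, which matches the defining Kronecker structure of signal sensitivity in Eq.~\eqref{eq:signal_sensitivity} and thereby yields $\sum_{q,r}\Sigma_{qr}\,\sigsenexp[i]{\ell}{p,q,r}$. For the denominator, independence of $\vect{\gamma}$ and $\vect{\epsilon}$ lets me split the conditional variance into two pieces. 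The $\vect{\gamma}$ piece couples every ordered pair of nodes because $\gamma_q$ is shared across the whole graph, reproducing the global sensitivity structure of Eq.~\eqref{eq:noise_and_global_sensitivity} weighted by $\Phi_{qr}$. The $\vect{\epsilon}$ piece collapses onto the diagonal $j=k$ by the IID-ness of the node noise, reproducing the noise sensitivity weighted by $\Psi_{qr}$. Since the linearised conditional variance does not depend on $\vect{\mu}$, the outer expectation over $\vect{\mu}$ acts trivially, and taking the ratio of the two expressions gives Eq.~\eqref{eq:snrsensitivity}.

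The main obstacle is justifying the $\simeq$ rather than an exact equality: the linearisation discards the Taylor remainder, which contains products of the centred random variables multiplied by higher-order derivatives of $H_{ip}^{(\ell)}$. Under the stated hypothesis that the feature distribution is concentrated near the origin, these neglected terms contribute only at higher order in the operator norms of $\mat{\Sigma}$, $\mat{\Phi}$, and $\mat{\Psi}$. A standard delta-method-style bound, in which the Hessian of $H_{ip}^{(\ell)}$ is controlled uniformly on the support of $\mat{X}$, then shows that both numerator and denominator differ from their linearised counterparts by vanishing relative errors as the covariance scale shrinks, so the ratio converges to the claimed closed form. Beyond this analytic step, the rest of the argument is bookkeeping of how the three decomposed feature components propagate through the linearised MPNN.
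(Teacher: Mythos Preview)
Your proposal is correct and follows essentially the same approach as the paper: linearise $H_{ip}^{(\ell)}$ about $\mat{X}=\mathbf{0}$, substitute the decomposition $X_{jq}=\mu_{y_j,q}+\gamma_q+\epsilon_{jq}$, and compute the conditional moments directly, recognising the Kronecker-delta, all-pairs, and diagonal sums as the signal, global, and noise sensitivities respectively. Your additional remark on controlling the Taylor remainder via a delta-method-style bound is more explicit than the paper, which simply invokes the concentration assumption without quantifying the higher-order error.
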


It is intuitive that as class-specific feature variability $\left(\Sigma_{qr}\right)$ increases relative to node and global noise $\left(\Psi_{qr}, \Phi_{qr}\right)$, we expect the SNR to increase and classification performance to improve. If we further assume that different feature dimensions are IID, with variance of signal, local and global noise components defined as $\sigma^2:=\Sigma_{ii}, \psi^2 := \Psi_{ii}, \phi^2:=\Phi_{qq}$ respectively, then Theorem \ref{theorem:snrsensitivity} shows that (non-relational) feedforward neural network (FNN) models are fundamentally limited in their ability to improve the signal-to-noise ratio of their input:

\begin{align*}
  \snrsub{H_{ip}^{(\ell)}}{} \simeq \frac{ \sigma^2}{\phi^2+\psi^2} = \snrsub{X_{ip}^{(\ell)}}{},
\end{align*}

as for a given FNN model, $\sigsenexp[i]{\ell}{p,q,r} = \noisesenexp[i]{\ell}{p,q,r} = \globsenexp[i]{\ell}{p,q,r}$ due to the lack of interaction between nodes in the forward pass computation. Theorem \ref{theorem:snrsensitivity} shows that MPNNs have the potential to enhance the SNR beyond this limit. However, this improved performance is subject to the following condition, which we term the ``sensitivity condition''.\\

\begin{restatable}[Sensitivity condition]{corollary}{sensitivityratiocondition}\label{corollary:gdl_criterion}
Consider a feature distribution following the covariance structure in Eq. \eqref{eq:total_covariance}, and having IID feature dimensions. Let $\rho := \frac{\psi^2}{\phi^2+\psi^2}$ be the local noise proportion, i.e. the proportion of noise accounted for by local perturbations where $0\leq \rho \leq 1$. Then an MPNN improves the SNR of any input feature distribution for the $\nth{p}$ output feature of node $i$ if and only if:
\begin{align}\label{eq:gdl_criterion}
 \sum_{q=1}^{d_\mathrm{in}}\sigsenexp[i]{\ell}{p,q,q} >\rho \sum_{q=1}^{d_\mathrm{in}}\noisesenexp[i]{\ell}{p,q,q} + (1-\rho)\sum_{q=1}^{d_\mathrm{in}}\globsenexp[i]{\ell}{p,q,q}.
\end{align}
\end{restatable}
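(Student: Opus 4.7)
The plan is to specialise Theorem~\ref{theorem:snrsensitivity} to the IID feature-dimension setting and then perform one direct algebraic rearrangement. Under the IID assumption the three covariance matrices collapse to scalar multiples of the identity---$\Sigma_{qr}=\sigma^2\delta_{qr}$, $\Phi_{qr}=\phi^2\delta_{qr}$, $\Psi_{qr}=\psi^2\delta_{qr}$---so substituting into Eq.~\eqref{eq:snrsensitivity} reduces each double sum to a single diagonal sum and yields
\begin{align*}
\snrsub{H_{ip}^{(\ell)}}{} \simeq \frac{\sigma^2\sum_{q=1}^{d_\mathrm{in}}\sigsenexp[i]{\ell}{p,q,q}}{\phi^2\sum_{q=1}^{d_\mathrm{in}}\globsenexp[i]{\ell}{p,q,q}+\psi^2\sum_{q=1}^{d_\mathrm{in}}\noisesenexp[i]{\ell}{p,q,q}}.
\end{align*}

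For the input features themselves, I would apply the same theorem to the identity map: every partial derivative collapses to a Kronecker delta, forcing the three sensitivities to coincide (as the paper notes explicitly for FNNs right after Theorem~\ref{theorem:snrsensitivity}), and so the baseline input SNR is $\sigma^2/(\phi^2+\psi^2)$. The MPNN improves the SNR precisely when the ratio above strictly exceeds this baseline.

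I would then rearrange the strict inequality $\snrsub{H_{ip}^{(\ell)}}{}>\sigma^2/(\phi^2+\psi^2)$. Cross-multiplying by the two strictly positive denominators cancels the common factor of $\sigma^2$, so the resulting criterion is scale-invariant in the signal strength---this is what justifies reading "any input feature distribution" as "any $\sigma^2>0$ once the noise split $\rho$ is fixed". Dividing through by $\phi^2+\psi^2$ and identifying $\rho=\psi^2/(\phi^2+\psi^2)$ together with $1-\rho=\phi^2/(\phi^2+\psi^2)$ delivers Eq.~\eqref{eq:gdl_criterion}. Each step is reversible, so the \textbf{if and only if} follows without additional work.

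The argument is essentially bookkeeping, and I expect no substantive obstacle. The only points that warrant care are (i) verifying that the two denominators stay strictly positive so that cross-multiplication preserves the inequality direction---which holds whenever $\phi^2+\psi^2>0$, since each sensitivity is a diagonal entry of a Gram matrix $J J^\top$ and therefore non-negative---and (ii) noting that the "$\simeq$" inherited from Theorem~\ref{theorem:snrsensitivity} propagates through the algebra unchanged, because it refers to the same first-order Taylor regime around $\mat{X}=\mat{0}$ used throughout the derivation.
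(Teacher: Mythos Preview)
Your proposal is correct and follows essentially the same route as the paper's proof: specialise Theorem~\ref{theorem:snrsensitivity} to diagonal covariances, compare against the FNN baseline $\sigma^2/(\phi^2+\psi^2)$, cross-multiply and divide by $\phi^2+\psi^2$ to extract $\rho$, then invoke reversibility for the equivalence. Your added remarks on denominator positivity and the propagation of the first-order approximation are sound and slightly more explicit than the paper's version.
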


\begin{figure*}[!htbp]
  \centering
  
  \begin{subfigure}[t]{0.55\textwidth}
    \includegraphics[width=\textwidth]{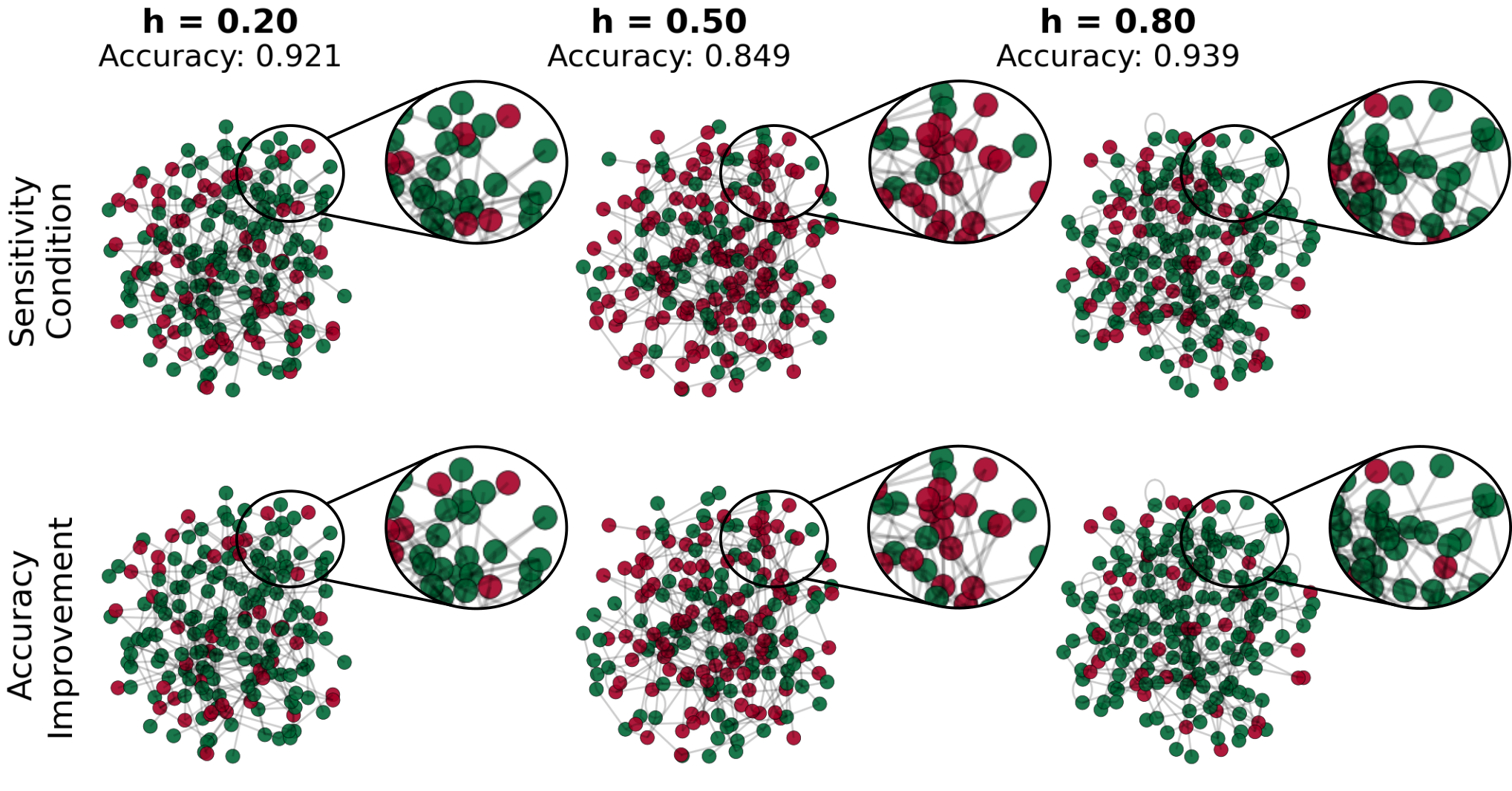}
    \caption{Node-level sensitivity condition vs. accuracy improvement}
    \label{fig:snr_node_predictions}
  \end{subfigure}
  \hfill
  \begin{subfigure}[t]{0.44\textwidth}
    \includegraphics[width=\textwidth]{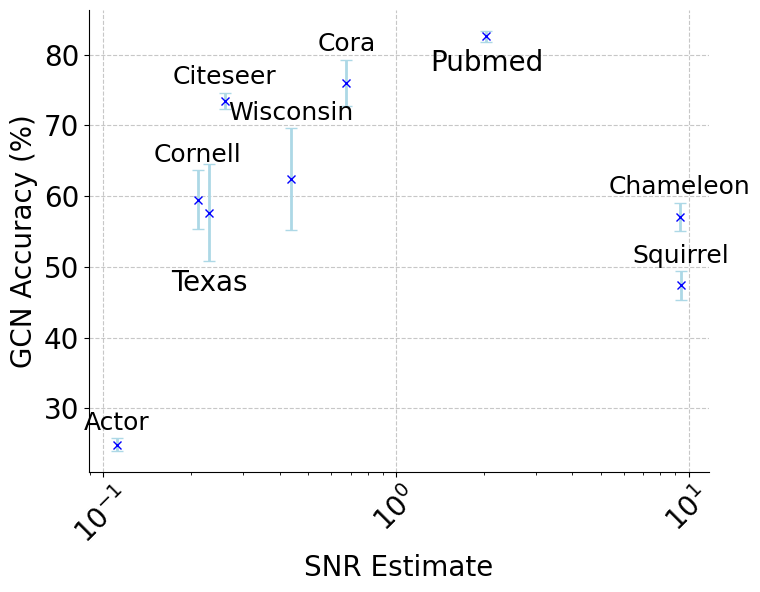}
    \caption{Mean SNR and accuracy on real-world graphs}
    \label{fig:homophily_snr}
  \end{subfigure}
  
  \vspace{0.6cm}
  
  \begin{subfigure}[t]{0.9\textwidth}
    \includegraphics[width=\textwidth]{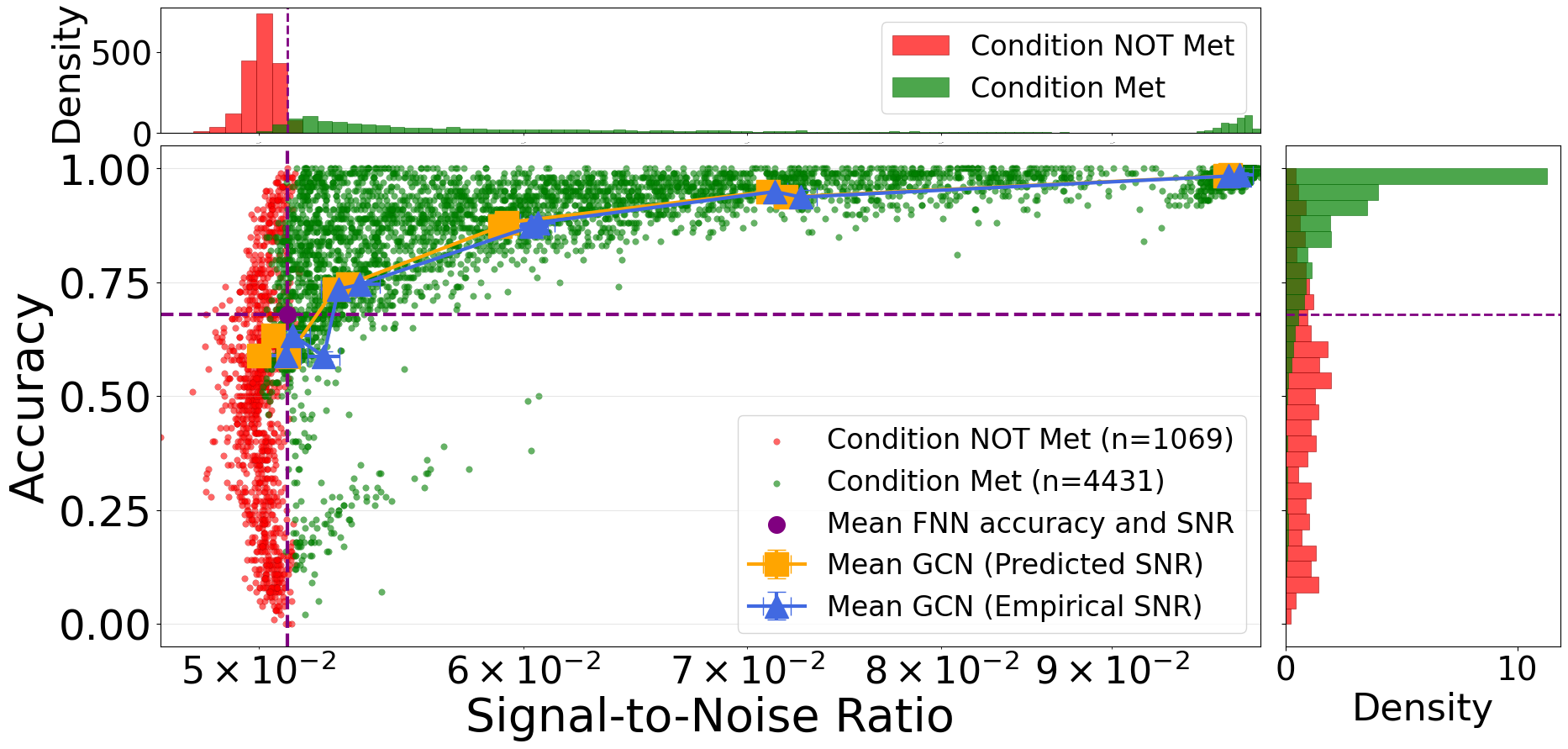}
    \caption{Node-level SNR vs. accuracy}
    \label{fig:snr_vs_acc}
  \end{subfigure}

  \caption{ 
  \textbf{The sensitivity condition correctly identifies nodes for which MPNNs outperform FNNs.} ~\textbf{(a)} ~The sensitivity condition (Eq.~\eqref{eq:gdl_criterion}) provides a local, node-level predictor for a GCN's performance advantage over an FNN. Nodes coloured {green}/{red} indicate (i) where the condition is satisfied/not satisfied in the top row of graphs, and (ii) whether the GCN accuracy is improved/not improved over the FNN in the bottom row, respectively. The accuracy of the sensitivity condition ranges between 0.8 and 0.9, which highlights the condition's ability to identify nodes where the graph structure aids classification.
  \textbf{(b)} The predicted SNR from Theorem \ref{theorem:snrsensitivity} averaged over the whole graph correlates with GCN test accuracy for various real-world graph datasets, demonstrating the applicability of this estimate as a diagnostic tool in a wide range of settings. \textsc{Chameleon} and \textsc{Squirrel} datasets appear to break the trend, as they are widely known to be problematic datasets in the GNN literature due to having duplicate nodes and train/test data leakage \cite{platonov2024criticallookevaluationgnns}.
  \textbf{(c)} Empirical relationship between predicted SNR and test accuracy, with their marginal distributions. Higher SNR strongly correlates with improved accuracy, validating SNR as a meaningful performance metric. Individual nodes' SNR are plotted, coloured by whether they satisfy the sensitivity condition. The empirical and predicted SNR averaged over all nodes for each graph are shown in {blue} and {orange} respectively, and can be seen to closely match. The {purple} dashed lines indicate the baseline FNN accuracy (0.7) and corresponding SNR threshold (0.05). In the marginal distribution plots, we can see that the majority of nodes which satisfy the sensitivity condition tend to lie right of the purple dashed line for SNR and above the dashed line for accuracy; and vice versa for nodes that do not satisfy the sensitivity condition.
  Experimental details, including graph generation, feature sampling, model training, empirical SNR estimation (Eq.~\eqref{eq:final_snr_est}), and sensitivity calculation via Jacobians, are provided in the \nameref{sec:methods} section; see \nameref{sec:experimental_details}. 
  }
  \label{fig:snr_analysis}
\end{figure*}

Theorem \ref{theorem:snrsensitivity} and Corollary \ref{corollary:gdl_criterion} reveal how the SNR of an MPNN is directly influenced by its sensitivity to the signal $\sigsenexp[i]{\ell}{p,q,r}$, to the noise $\noisesenexp[i]{\ell}{p,q,r}$, and to global shifts $\globsenexp[i]{\ell}{p,q,r}$ in the features. The sensitivity condition in Eq. \eqref{eq:gdl_criterion} establishes that for an MPNN to outperform a non-relational FFN, the signal sensitivity must exceed a convex combination of the noise and global sensitivities, controlled by the local noise
proportion $\rho$. The condition surprisingly does not depend on the class-wise variance $\sigma^2$, suggesting that the degree to which message passing may improve class-specific separability over FNNs does not depend on class-wise signal quality, but on having the appropriate \emph{kind} of noise. The local noise proportion $\rho$ in Eq. \eqref{eq:gdl_criterion} controls the difficulty of the classification task on a particular feature distribution: In the high \textit{global} sensitivity regime where $\globsenexp[i]{\ell}{p,q,q}>\noisesenexp[i]{\ell}{p,q,q}$ (such as GCNs with low-pass graph filters) \textit{larger} $\rho$ makes the condition easier to satisfy; but the high \textit{local} sensitivity regime where $\globsenexp[i]{\ell}{p,q,q}<\noisesenexp[i]{\ell}{p,q,q}$ (such as GCNs with high-pass graph filters) \textit{smaller} $\rho$ makes the condition easier to satisfy.

By quantifying this balance, practitioners can use the sensitivity measures as a \emph{feature-independent} and \emph{localised} (i.e. dependent on $i$) diagnostic tool to evaluate whether their MPNN architecture is suitable for the given task, and predict when and where the model will struggle in noise-dominated environments. Figure \ref{fig:homophily_snr} demonstrates how classification accuracy correlates with the SNR calculated using Theorem \ref{theorem:snrsensitivity}, and Figures \ref{fig:snr_node_predictions} and \ref{fig:snr_vs_acc} show how accuracies can be predicted based on the sensitivity criterion in Corollary \ref{corollary:gdl_criterion}, in both synthetic and real-world datasets.

Having established the relationship between an MPNN's SNR and its signal, noise, and global sensitivities (Theorem \ref{theorem:snrsensitivity}), an important question arises: what determines these sensitivities? Since the sensitivities are feature-independent, their values must be governed by the underlying graph structure and the MPNN architecture. The following section explores this relationship, introducing the concept of class-bottlenecks and higher-order homophily to quantify how graph connectivity patterns directly influence and bound the signal sensitivity, thereby impacting the potential SNR gains.

\subsection*{Class-bottlenecks restrict the signal sensitivity of message passing}

The problem of how homophily and bottlenecks dictate the fundamental performance limits of MPNNs can be first tackled by examining the condition for relational learning (Corollary \ref{corollary:gdl_criterion}) through the lens of graph structural properties, by specifically focusing on how connectivity patterns affect signal propagation. We demonstrate that limits on signal propagation arise from specific graph structures, which we term as ``class-bottlenecks''.

\paragraph{Motivating example: simple graph convolution.}
To illustrate the concept of class-bottlenecking, consider a Simple Graph Convolution (SGC) model \cite{wu2019simplifyinggraphconvolutionalnetworks}. In an SGC, node representations are updated linearly by averaging over neighbours' features, followed by a linear transformation. The $\ell$-layer update rule is:
\begin{align}\label{eq:sgc}
\mat{H}^{(\ell)} := \hat{\mat{A}}_{\mathrm{sym}} \mat{H}^{(\ell-1)} \mat{W}^{(\ell)},
\end{align}
where $ \mat{H}^{(\ell)} $ is the representation matrix at layer $ \ell $, $ \hat{\mat{A}}_{\mathrm{sym}} := \mat{D}^{-\frac{1}{2}} \mat{A} \mat{D}^{-\frac{1}{2}}$ is the symmetric normalised adjacency matrix, $\mat{D}$ is the diagonal degree matrix, and $ \mat{W}^{(\ell)} $ is the layer's weight matrix. Self-loops are also typically added to the graph for stability when calculating $A_{\mathrm{sym}}$. The overall transformation after $\ell$ layers is $\mat{H}^{(\ell)} = \hat{\mat{A}}_{\mathrm{sym}}^{\ell} \mat{X}\mat{W}$, where $\mat{W}:=\mat{W}^{(1)} \cdots \mat{W}^{(\ell)}$. Due to linearity, the sensitivities at a specific node $i$ for output dimension $p$ with respect to input dimensions $q,r$ can be calculated exactly; they are directly proportional to specific local graph structural properties:
\begin{equation}\label{eq:individual_signal_sensitivity_sgc}
  \resizebox{\hsize}{!}{
    $\sigsenexp[i]{\ell}{p,q,r} = W_{pq}W_{pr} \cdot \localhomo{\ell,\ell}{\hat{\mat{A}}_{\mathrm{sym}}}{i}, \quad \globsenexp[i]{\ell}{p,q,r} = W_{pq}W_{pr} \cdot \localconn{\ell,\ell}{\hat{\mat{A}}_{\mathrm{sym}}}{i}, \quad \noisesenexp[i]{\ell}{p,q,r} = W_{pq}W_{pr}\cdot \localselfconn{\ell,\ell}{\hat{\mat{A}}_{\mathrm{sym}}}{i}.$
    }
\end{equation}

Here, we define the local quantities based on the graph shift operator $\hat{\mat{A}}$, over (potentially equal) pairs of source nodes $j,k$:
\begin{equation}\label{eq:p_order_local_homophily}
\begin{aligned}
  \textit{Class-bottlenecking score: } \localhomo{r,s}{\mat{\hat{A}}}{i} &:= \sum_{j,k \in V} \left[{\hat{\mat{A}}^r}\right]_{ij}\left[{\hat{\mat{A}}^s}\right]_{ik}\delta_{y_j y_k}, \\
  \textit{Self-bottlenecking score: } \localselfconn{r,s}{\mat{\hat{A}}}{i} &:= \sum_{j \in V} \left[{\hat{\mat{A}}^r}\right]_{ij}\left[{\hat{\mat{A}}^s}\right]_{ij}, \\
  \textit{Total-bottlenecking score: } \localconn{r,s}{\mat{\hat{A}}}{i} &:= \sum_{j,k\in V} \left[{\hat{\mat{A}}^r}\right]_{ij}\left[{\hat{\mat{A}}^s}\right]_{ik}.
\end{aligned}
\end{equation}

The class-bottlenecking score $\localhomo{r,s}{\hat{\mat{A}}}{i}$ quantifies the number of path pairs of lengths $r$ and $s$, originating from pairs of source nodes that belong to the \emph{same class}, and terminating at target node $i$. A low score indicates that same-class signals arriving at node $i$ via paths of length $r$ and $s$ are scarce, creating a bottleneck for class-specific information aggregation at node $i$. Importantly, these scores are agnostic to model parameters and depend purely on the graph structure and node class labels. 

\paragraph{Class-bottlenecks.} Class-bottlenecks occur at target nodes $i$ where $\localhomo{r,s}{\hat{\mat{A}}}{i}$ is low. The score in Eq. \eqref{eq:p_order_local_homophily} depends on two factors: the class alignment or ``homophily'' $\delta_{y_j y_k}$, and the strength of connectivity $[{\hat{\mat{A}}^r}]_{ij}[{\hat{\mat{A}}^s}]_{ik}$. The connectivity factors capture the amount of structural bottlenecking, as demonstrated by Topping et al. \cite{topping2021understanding}, where powers of the symmetric normalised adjacency matrix are bounded by the Cheeger constant---a quantity well known in the graph theory literature to capture structural bottlenecks in the graph \cite{MOHAR1989274}. We show that it is more specifically class-dependent bottlenecking that determines the MPNN's signal sensitivity. For a fixed graph structure with a structural bottleneck, if most source node pairs at radii $r,s$ from $i$ are of different classes, i.e. $\delta_{y_j y_k}=0$, as shown in Figure \ref{fig:homophilic_bottleneck_bad}, we have a more severe class-bottleneck with a lower $\localhomo{r,s}{\hat{\mat{A}}}{i}$. If most pairs share the same class, the bottleneck shown in Figure \ref{fig:homophilic_bottleneck_good}, results in a milder reduction in $\localhomo{r,s}{\hat{\mat{A}}}{i}$.

\paragraph{The SNR of an SGC is given by feature and graph-level quantities.}
Assuming IID feature dimensions as in Corollary \ref{corollary:gdl_criterion}, substituting the closed-form sensitivities from
Eq.~\eqref{eq:individual_signal_sensitivity_sgc}
into Theorem~\ref{theorem:snrsensitivity} gives an explicit expression for the SNR of an $\ell$-layer SGC at node $i$ along output dimension $p$:
\begin{equation}
  \snrsub{H_{ip}^{(\ell)}}{\mathrm{}}
  \;=\;
  \frac{\sigma^{2}}{\phi^{2}+\psi^{2}}
  \cdot
  \frac{\localhomo{\ell,\ell}{\hat{\mat{A}}_{\mathrm{sym}}}{i}}
       {\rho\,\localselfconn{\ell,\ell}{\hat{\mat{A}}_{\mathrm{sym}}}{i}
        \;+\;
        (1-\rho)\,\localconn{\ell,\ell}{\hat{\mat{A}}_{\mathrm{sym}}}{i}},
  \qquad
  \rho:=\frac{\psi^{2}}{\phi^{2}+\psi^{2}},
  \label{eq:sgc_snr_isotropic}
\end{equation}

\noindent
where the weight factors cancel out. Hence all dependence on trainable parameters and raw feature statistics collapses into the scalar pre-factor
$\sigma^{2}\!/\!(\phi^{2}+\psi^{2})$ and local noise proportion $\rho:=\frac{\psi^{2}}{\phi^{2}+\psi^{2}}$, and the
$\ell$-hop connectivity patterns are captured by the three local scores introduced in Eq.~\eqref{eq:p_order_local_homophily}.

Figure \ref{fig:homophilic_bottleneck_bad} illustrates a class-bottleneck at  node $T$. If only paths between nodes of different classes pass through $T$ $\localhomo{\ell,\ell}{\hat{\mat{A}}}{i}$ will be low, directly reducing the signal sensitivity $\sigsenexp[i]{\ell}{p,q,r}$ according to Eq. \eqref{eq:individual_signal_sensitivity_sgc}. This low signal sensitivity makes it harder to satisfy the sensitivity condition (Corollary \ref{corollary:gdl_criterion}), potentially preventing the SGC from outperforming an FNN. For instance, in the example graph in Figure \ref{fig:homophilic_bottleneck_bad} if $\ell=1$, then $\localhomo{1,1}{\hat{\mat{A}}_{\mathrm{sym}}}{T} = \frac{1}{2}$, $\localconn{1,1}{\hat{\mat{A}}_{\mathrm{sym}}}{T} = 1$, and $\localselfconn{1,1}{\hat{\mat{A}}_{\mathrm{sym}}}{T} = \frac{1}{2}$. The SGC cannot improve the SNR over an FNN at node $T$ because the sensitivity condition requires $\localhomo{1,1}{\hat{\mat{A}}_{\mathrm{sym}}}{T} > \rho \ \localselfconn{1,1}{\hat{\mat{A}}_{\mathrm{sym}}}{T} + (1-\rho) \ \localconn{1,1}{\hat{\mat{A}}_{\mathrm{sym}}}{T}$, which simplifies to $\frac{1}{2} > \rho \cdot \frac{1}{2} + (1-\rho) \cdot 1$, implying $\rho > 1$, which cannot happen as $\rho \in [0,1]$.

\paragraph{Higher-order homophily measures average amount of class-bottlenecking.}
A key insight emerges when we examine the global behaviour of class-bottlenecks: the local class-bottlenecking scores, when averaged across all nodes, can be expressed exactly in terms of the weighted homophily measure from Eq. \eqref{eq:def_weighted_homophily}:
\begin{align}\label{eq:bottleneck_homophily_connection}
    \frac{1}{n}\sum_{i=1}^n \localhomo{r,s}{\hat{\mat{A}}_{\mathrm{sym}}}{i} = \rhomo{r+s}{\hat{\mat{A}}_{\mathrm{sym}}},
\end{align}

for an undirected graph $G$. Here, using $\hat{\mat{A}}_{\mathrm{sym}}^{r+s}$ as the argument to weighted homophily in Eq.~\eqref{eq:def_weighted_homophily}, results in a measure of \emph{higher-order homophily}---the tendency for same-class nodes to be preferentially connected through multi-hop paths. For directed graphs, the general form $[\hat{\mat{A}}_{\mathrm{sym}}^r]^T\hat{\mat{A}}_{\mathrm{sym}}^s$ should be used instead of $\hat{\mat{A}}_{\mathrm{sym}}^{r+s}$. This relationship establishes that higher-order homophily measures average amount of class-bottlenecking. Unlike first-order homophily measures like edge and node homophily---that only consider direct edges---higher-order homophily captures richer connectivity patterns that determine the effectiveness of message-passing in deeper networks.

To provide a complete characterisation of graph connectivity patterns relevant to MPNN performance, we define analogous global measures for self-connectivity---how well nodes connect back to themselves through multi-hop paths---and total connectivity:
\begin{align}\label{eq:p_order_connectivity}
  \rselfconn{}{\mat{S}} \coloneqq \frac{1}{n} \sum_{i=1}^n \left[{\mat{S}}\right]_{ii}, \quad \rconn{}{\mat{S}} \coloneqq \frac{1}{n} \sum_{i,j=1}^n \left[{\mat{S}}\right]_{ij}.
\end{align}
The relationship to their local counterparts is analogous: $\frac{1}{n}\sum_{i=1}^n \localselfconn{r,s}{\mat{\hat{A}}}{i} =\rselfconn{r+s}{\hat{\mat{A}}_{\mathrm{sym}}}$ and $\frac{1}{n}\sum_{i=1}^n \localconn{r,s}{\mat{\hat{A}}}{i} =\rconn{r+s}{\hat{\mat{A}}_{\mathrm{sym}}}$.

\paragraph{SGC sensitivities as higher-order homophily measures.} These global connectivity measures lead directly to an important result for SGCs. The average sensitivities, defined as $\overbar{\sigsenexp[p]{\ell}{q,r}}:=\frac{1}{n}\sum_{i=1}^{n}\sigsenexp[i]{\ell}{p,q,r}$, $\overbar{\globsenexp[p]{\ell}{q,r}}:=\frac{1}{n}\sum_{i=1}^{n}\globsenexp[i]{\ell}{p,q,r}$, and $\overbar{\noisesenexp[p]{\ell}{q,r}}:=\frac{1}{n}\sum_{i=1}^{n}\noisesenexp[i]{\ell}{p,q,r}$, can be expressed entirely in terms of these higher-order connectivity measures:
\begin{equation}\label{eq:mean_signal_sensitivity_sgc}
\resizebox{\hsize}{!}{
  $\overbar{\sigsenexp[p]{\ell}{q,r}} = W_{pq}W_{pr} \cdot \rhomo{2\ell}{\hat{\mat{A}}_{\mathrm{sym}}},
  \quad \overbar{\globsenexp[p]{\ell}{q,r}} = W_{pq}W_{pr} \cdot \rconn{2\ell}{\hat{\mat{A}}_{\mathrm{sym}}},
  \quad \overbar{\noisesenexp[p]{\ell}{q,r}} = W_{pq}W_{pr} \cdot \rselfconn{2\ell}{\hat{\mat{A}}_{\mathrm{sym}}}.$  
  }
\end{equation}
This result is significant because it shows that the average signal sensitivity of an $\ell$-layer SGC is completely determined by the $2\ell$-order homophily of the graph. In other words, the model's ability to distinguish between classes depends entirely on how well same-class nodes are connected through paths of length $2\ell$. Similarly, the noise and global sensitivities depend on self-connectivity and total connectivity at order $2\ell$, respectively. This provides a direct, computable link between graph structure and potential MPNN performance, independent of the specific feature values.

\paragraph{Bounding sensitivities of general isotropic MPNNs.}
The connection between class-bottlenecks and sensitivity extends beyond the linear SGC. For a general isotropic MPNN---where the message function $M_\ell$ in Eq. \eqref{eq:mpnn} does not depend on the source node's own representation, i.e., $\norm{\nabla_1 M_\ell}=0$---the sensitivities at node $i$ can be bounded. Assuming $\norm{ \nabla_1 U_s}\leq \alpha_{1}$, $\norm{ \nabla_2 U_s}\leq \alpha_{2}$, and $ \norm{\nabla_2 M_s} \leq \beta$ exist for layers $s=1,\dots,\ell$:
\begin{equation}\label{eq:individual_signal_sensitivity_mpnn}
\begin{aligned}
  \abs{\sigsenexp[i]{\ell}{p,q,r}}&\leq \sum_{s,t=0}^{\ell}{\binom{\ell}{s}\binom{\ell}{t}}\alpha_1^{2 \ell - s-t} \left(\alpha_2\beta\right)^{s+t}\localhomo{s,t}{\hat{\mat{A}}}{i}, \\
  \abs{\globsenexp[i]{\ell}{p,q,r}} &\leq \sum_{s,t=0}^{\ell}{\binom{\ell}{s}\binom{\ell}{t}}\alpha_1^{2 \ell - s-t} \left(\alpha_2\beta\right)^{s+t}\localconn{s,t}{\hat{\mat{A}}}{i}, \\
  \abs{\noisesenexp[i]{\ell}{p,q,r}} &\leq \sum_{s,t=0}^{\ell}{\binom{\ell}{s}\binom{\ell}{t}}\alpha_1^{2 \ell - s - t} \left(\alpha_2\beta\right)^{s+t}\localselfconn{s,t}{\hat{\mat{A}}}{i}.
\end{aligned}
\end{equation}
These bounds directly link the local signal sensitivity $\sigsenexp[i]{\ell}{p,q,r}$ to the class-bottlenecking score $\localhomo{s,t}{\hat{\mat{A}}}{i}$ at that node across different path lengths $s, t$. Nodes suffering from strong class-bottlenecks (low scores) will have inherently limited signal sensitivity, regardless of the specific MPNN architecture (within the isotropic class).

Averaging these bounds across all nodes and applying Vandermonde's identity (see Corollary \ref{corollary:avg_sensitivity_bounds} in \nameref{sec:appendix_A}) yields bounds on the average sensitivities in terms of higher-order homophily:
\begin{equation}\label{eq:avg_sensitivity_bound}
\begin{aligned}
\abs{\overbar{\sigsenexp[p]{\ell}{q,r}}} &\leq \sum_{u=0}^{2\ell} \binom{2\ell}{u}\,\alpha_1^{2\ell-u} (\alpha_2\beta)^u\, \rhomo{u}{\hat{\mat{A}}}, \\
\abs{\overbar{\globsenexp[p]{\ell}{q,r}}} &\leq \sum_{u=0}^{2\ell} \binom{2\ell}{u}\,\alpha_1^{2\ell-u} (\alpha_2\beta)^u\, \rconn{u}{\hat{\mat{A}}}, \\
\abs{\overbar{\noisesenexp[p]{\ell}{q,r}}} &\leq \sum_{u=0}^{2\ell} \binom{2\ell}{u}\,\alpha_1^{2\ell-u} (\alpha_2\beta)^u\, \rselfconn{u}{\hat{\mat{A}}},
\end{aligned}
\end{equation}
for a symmetric graph shift operator $\hat{\mat{A}}$. For asymmetric graph shift operators, such as the random-walk normalised adjacency matrix or directed graphs, the general form $\left[\hat{\mat{A}}^r\right]^T\hat{\mat{A}}^s$ should be used instead of $\hat{\mat{A}}^{r+s}$ (see more in the proof of Corollary \ref{corollary:avg_sensitivity_bounds} in \nameref{sec:proofs}). 

Eqs. \eqref{eq:individual_signal_sensitivity_mpnn} and \eqref{eq:avg_sensitivity_bound} establish that low class-bottlenecking scores restrict signal sensitivity locally, while low higher-order homophily restricts it globally. This provides a fundamental reason why MPNNs may struggle on graphs where same-class nodes are poorly connected over multiple hops (i.e., some heterophilic graphs or graphs with strong community structures misaligned with classes). A more general formulation for anisotropic models is given in Theorem \ref{theorem:general_signal_sensntivity_bound} in \nameref{sec:appendix_A}.

Applying the general bound in Eq. \eqref{eq:individual_signal_sensitivity_mpnn} to a standard GCN with $\mathbf{H}^{(\ell+1)} = \operatorname{ReLU}(\hat{\mathbf{A}}_{\mathrm{sym}}\,\mathbf{H}^{(\ell)}\,\mathbf{W}^{(\ell)})$, we note $\alpha_1 = 0$. Using the bound $\norm{\nabla_2 U_s} \le 1$ (due to ReLU) and $\norm{\nabla_2 M_s} \le \max_k \norm{\mat{W}^{(k)}} =: \beta$, the bounds resemble those for the SGC: $\sigsenexp[i]{\ell}{p,q,r} \leq \beta^{2\ell} \localhomo{\ell,\ell}{\mat{\hat{A}}_{\mathrm{sym}}}{i}$, $\globsenexp[i]{\ell}{p,q,r} \leq \beta^{2\ell} \localconn{\ell,\ell}{\mat{\hat{A}}_{\mathrm{sym}}}{i}$, and $ \noisesenexp[i]{\ell}{p,q,r} \leq \beta^{2\ell} \localselfconn{\ell,\ell}{\mat{\hat{A}}_{\mathrm{sym}}}{i}$. In Figure \ref{fig:bottleneckingscore_estimate_plot}, we empirically show that under low variance conditions these upper bounds, and those of Eq. \eqref{eq:individual_signal_sensitivity_mpnn} in general, are tight as the model is close to being linear.

Eqs. \eqref{eq:individual_signal_sensitivity_mpnn} and \eqref{eq:avg_sensitivity_bound} reveal the critical role of class-bottlenecking (locally) and higher-order homophily (globally) in bounding MPNN sensitivities and thus performance potential. Structures limiting same-class connectivity across multiple hops impede signal propagation. To gain a more quantitative understanding of how specific graph topologies create these bottlenecks and influence higher-order homophily, we now shift our view from the discrete analysis of specific graph instances to a statistical analysis using graph ensembles. By considering random graph models we can decompose the factors affecting information flow---oversquashing and underreaching---into interpretable graph properties and quantitatively link them to bottlenecking. In particular, we study how fundamental graph properties such as the mean degree and edge homophily affect class-bottlenecking scores (and consequently signal sensitivity), as detailed in the next section.

\subsection*{Graph ensembles enable a geodesic-based decomposition of higher-order homophily into oversquashing and underreaching}

A key aspect of understanding higher-order homophily and bottlenecking in MPNNs involves characterising matrix powers of the graph shift operator $\hat{\mat{A}}$, as in Eq. \eqref{eq:avg_sensitivity_bound}. These matrix powers appear in many sensitivity analyses and are a fundamental result of layered nature of MPNNs \cite{topping2021understanding,digiovanni2023does,black2023understanding}. We can make significant progress in understanding these matrix powers by relaxing from considering the performance of  a particular graph instance, $\mat{A}$, to a graph ensemble $\expect{\mat{A}}$ that could have generated the graph $\mat{A}$. We can then compute characteristic results for classification SNR in terms of expected higher-order homophily.

Specifically, let the (undirected and simple) graph be a sample from a general random graph family with conditionally independent edges, that is, without loss of generality for node indices $i<j:A_{ij}\sim\bernoulli{\left[\expect{\mat{A}}\right]_{ij}}$ and $A_{ji}=A_{ij}$. In other words, the graph ensemble is completely characterised by the expected adjacency matrix $\expect{\mat{A}}$, and includes many widely used random graph models like stochastic block models (SBMs, \cite{holland1983sbm}) and random dot product graphs \cite{young2007rdpg}. Let $\lambda_{ij}$ be the shortest path length between nodes $i,j$. The ensemble induces a distribution on these lengths \cite{loomba2021geodesic}, allowing us to decompose the expectation of powers of the graph shift operator $\hat{\mat{A}}$ as: 
\begin{align}
\label{eq:underover}
  \expect{\hat{\mat{A}}^r}_{ij} = \sum_{t=1}^r\underbrace{\condexpect{\left[\hat{\mat{A}}^r\right]_{ij}}{\lambda_{ij} = t}}_{\text{oversquashing}} \cdot \underbrace{\prob{\lambda_{ij} = t}}_{\text{underreaching}},
\end{align}
as $\lambda_{ij}> r \implies \left[\hat{\mat{A}}^r\right]_{ij}=0$. We call $r$ the receptive field size. The first factor captures connection density within a $r$-hop radius of node $i$, the receptive field of the $\nth{r}$ layer of the MPNN---contributing to oversquashing as shown by Topping et al. \cite{topping2021understanding}---while the second is the probability that node $j$ is reachable from $i$ in exactly $r$ hops---contributing to underreaching as defined by Alon and Yahav \cite{alon2021bottleneck}. Unlike previous works, we view these quantities in expectation, which allows us to make analytical progress. For sparse graph ensembles, i.e. bounded degree graphs, asymptotic approximations can be derived for the underreaching and oversquashing factors in terms of $\expect{\mat{A}}$: \\

\begin{restatable}[Underreaching and oversquashing in sparse graph ensembles]{theorem}{underreachingoversquashing}\label{theorem:underreaching_oversquashing}
For an undirected and simple graph with $n$ nodes encoded by the adjacency matrix $\mat{A}$, sampled from a general random graph family with conditionally independent edges and expected adjacency matrix $\expect{\mat{A}}$, under conditions for sufficient sparsity (see Lemma \ref{lemma:underreaching} in the \nameref{sec:appendix_A}), we have that:

\begin{align}
    & \prob{\lambda_{ij}= r} \approx 
    \left[\expect{\mat{A}}^r\right]_{ij},\label{eq:underreachingoversquashing_underreaching}\\ 
    & \condexpect{\left[ \hat{\mat{A}}^r_\mathrm{sym} \right]_{ij} }{ \lambda_{ij} = r } \approx \frac{ \left[ \left(\avg{\mat{D}}^{-\frac{1}{2}} \expect{\mat{A}}\avg{\mat{D}}^{-\frac{1}{2}} \right)^{r}\right]_{ij} }{ \left[ \expect{\mat{A}}^r \right]_{ij} } + O\left( \frac{1}{\avg{d}^{r+1}} \right),\label{eq:underreachingoversquashing_oversquashing}
\end{align}

where $\approx$ is used throughout to mean equality up to $o\left(\frac{1}{n}\right)$ terms as $n\to\infty$, $\avg{\mat{D}}\coloneqq \diag{\expect{\mat{A}}\ones{n}}$ is the diagonal matrix of expected degrees, $\vect{1}_n$ is the vector of all ones, and $\avg{d}$ is the overall mean degree which is assumed to be large but much smaller than the number of nodes, i.e. $\avg{d}=o(n)$. For all shortest paths of length $t <r$, the oversquashing factor scales as:

\begin{align}\label{eq:underreachingoversquashing_residual}
  \condexpect{\left[\hat{\mat{A}}^r_\mathrm{sym}\right]_{ij}}{\lambda_{ij} = t} \approx O\left(\frac{1}{\avg{d}^{r}}\right).
\end{align}
\end{restatable}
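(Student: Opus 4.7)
The plan is to prove the three formulas in order, leveraging two features of the sparse conditionally-independent edge regime: in expectation, short walks between two fixed nodes are dominated by self-avoiding (simple) paths, and node degrees concentrate about their expected values $\avg{d}_v$ when $\avg{d}$ is large but $o(n)$. Throughout I would rely on the sparsity lemma (Lemma~\ref{lemma:underreaching}) to quantify both effects uniformly in the graph size.

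For the underreaching identity~\eqref{eq:underreachingoversquashing_underreaching}, I would write $\prob{\lambda_{ij}=r}=\prob{\lambda_{ij}\le r}-\prob{\lambda_{ij}\le r-1}$ and apply a first-moment/inclusion-exclusion argument over simple $(i,j)$-paths. The leading order is the expected number of simple length-$r$ paths, $\sum_{v_1,\dots,v_{r-1}}\prod_{k=0}^{r-1}[\expect{\mat{A}}]_{v_k v_{k+1}}$. Sparsity ensures that (i) the difference between this sum over simple paths and the full matrix product $[\expect{\mat{A}}^r]_{ij}$ (which also contains non-simple walks) and (ii) the higher-order inclusion-exclusion terms (joint existence of several paths) are each $o(1/n)$, which yields the stated approximation. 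This closely mirrors the geodesic-distribution derivation of \cite{loomba2021geodesic} for conditionally independent edge models, so I would adapt that argument rather than redo it from scratch.

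For the oversquashing identity~\eqref{eq:underreachingoversquashing_oversquashing}, I would expand $[\hat{\mat{A}}_{\mathrm{sym}}^r]_{ij}=\sum_{w}\prod_{(u,v)\in w}A_{uv}/\sqrt{D_u D_v}$ over length-$r$ walks $w$ from $i$ to $j$, and split into simple length-$r$ paths versus non-simple walks. On simple paths I would apply degree concentration to replace each $D_v$ by $\avg{d}_v$ up to a vanishing multiplicative error—legitimate because conditioning on $\lambda_{ij}=r$ pins down only $O(r)$ edges while each $D_v$ is a sum of $\Theta(n)$ weakly dependent indicators. Taking conditional expectations and summing over simple paths then produces $[(\avg{\mat{D}}^{-1/2}\expect{\mat{A}}\avg{\mat{D}}^{-1/2})^r]_{ij}$; dividing by $\prob{\lambda_{ij}=r}\approx[\expect{\mat{A}}^r]_{ij}$ from the previous step yields the leading term. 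The residual $O(1/\avg{d}^{r+1})$ collects non-simple walks: every such walk reuses at least one edge, incurring an extra normalisation factor $1/\sqrt{D_u D_v}=O(1/\avg{d})$ beyond the $r$ factors carried by a simple walk, while the combinatorial multiplicity of walk topologies is bounded as $n\to\infty$. For~\eqref{eq:underreachingoversquashing_residual}, I would classify length-$r$ walks from $i$ to $j$ given $\lambda_{ij}=t<r$ into those that use the shortcut of length $t$ plus $r-t$ backtracking/detour steps, and those that bypass it through longer simple paths. Either way, the walk product carries $r$ independent normalisation factors $1/\sqrt{D_u D_v}=O(1/\avg{d})$, and sparsity keeps the enumeration of walk topologies bounded, giving the stated $O(1/\avg{d}^r)$ scaling.

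The main obstacle throughout is the careful handling of the conditioning on $\lambda_{ij}=r$ (or $=t$): this event is a global function of the entire edge set, and is not, a priori, independent of whether any specific candidate path or walk exists. The crux is showing that in the sparse regime, conditioning perturbs the marginal probability of each candidate path by only $o(1/n)$, so that both numerator and denominator of the conditional expectation factorise as if edges were independent. I would formalise this via a change-of-measure / coupling argument anchored to Lemma~\ref{lemma:underreaching}, exploiting that the edges of any candidate length-$r$ path are (almost) disjoint from the edges encoding the complement event ``no shorter $(i,j)$-path exists''.
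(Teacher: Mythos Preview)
Your overall strategy---walk expansion, degree replacement, and handling the conditioning via the sparsity/geodesic machinery of \cite{loomba2021geodesic}---matches the paper's approach closely, and your treatment of~\eqref{eq:underreachingoversquashing_underreaching} and~\eqref{eq:underreachingoversquashing_residual} is essentially correct. However, there is one genuine misconception in your handling of~\eqref{eq:underreachingoversquashing_oversquashing}.

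You attribute the residual $O(1/\avg{d}^{r+1})$ to non-simple walks, arguing that each reused edge ``incurs an extra normalisation factor $1/\sqrt{D_uD_v}$''. This is wrong on two counts. First, conditional on $\lambda_{ij}=r$, \emph{every} length-$r$ walk from $i$ to $j$ is automatically a simple path: a non-simple walk would contain a subwalk of length $<r$ connecting $i$ to $j$, contradicting $\lambda_{ij}=r$. So there are no non-simple walks to collect. Second, even when non-simple walks do appear (as in the $t<r$ case), a length-$r$ walk always carries exactly $r$ normalisation factors---one per step---not more; reusing an edge does not add factors. The actual source of the $O(1/\avg{d}^{r+1})$ error is the degree approximation itself: conditioning on the existence of the path edges shifts each $D_{k_l}$ by one or two, and the paper computes $\condexpect{D_{k_l}^{-1}}{A_{k_{l-1}k_l}A_{k_lk_{l+1}}=1}\approx\expect{(D_{k_l}+2)^{-1}}$ explicitly via Poisson moments (their Proposition~\ref{theorem:poisson_transformation}), obtaining $\avg{d}_{k_l}^{-1}-\avg{d}_{k_l}^{-2}(1-e^{-\avg{d}_{k_l}})$. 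Truncating this to $\avg{d}_{k_l}^{-1}$ across $r-1$ intermediate nodes produces the $O(1/\avg{d}^{r+1})$ correction.

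This also bears on your ``degree concentration'' step. Concentration in the sense $D_v\approx\avg{d}_v$ is valid at leading order, but to extract the stated error you need the next-order term in $\expect{1/(D_v+c)}$ for a Poisson variable, which is a moment calculation rather than a concentration bound. The paper's route through explicit Poisson moments is what makes the error term precise; your concentration argument would give the leading term correctly but leave the residual order unspecified.
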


Theorem \ref{theorem:underreaching_oversquashing} provides asymptotic approximations for the underreaching and oversquashing factors introduced in Eq.~\eqref{eq:underover}, directly relating them to the expected properties of the graph ensemble. The key idea of the Theorem, given by Eqs. \eqref{eq:underreachingoversquashing_oversquashing} and \eqref{eq:underreachingoversquashing_residual}, is that the effects of oversquashing are sharply concentrated at the longest possible shortest path length $\lambda_{ij}= r$, i.e. when it is equal to the receptive field size, while contributions from potentially shorter shortest paths are relatively negligible---scaling as $O\left(\frac{1}{\avg{d}^{r}}\right)$, and because the probability of their occurrence scales at most as $\prob{\lambda_{ij}= r-1} \approx \left[\expect{\mat{A}}^{r-1}\right]_{ij} = O\left(\frac{\avg{d}^{r-1}}{n}\right)$ their joint contribution to higher-order homophily in Eq. \eqref{eq:underover} vanishes as $O\left(\frac{1}{\avg{d}}\right)$ when summing over all $n$ nodes. The intuition behind this result stems from noting that paths from $i$ that reach the receptive field boundary at $j$ are asymptotically independent (non-overlapping) in sparse graphs with conditionally independent edges. Eq. \eqref{eq:underreachingoversquashing_oversquashing}, which approximates this boundary-oversquashing, thus gives the expected powers of the normalised adjacency matrix using powers of the normalised expected adjacency matrix. Similarly, Eq. \eqref{eq:underreachingoversquashing_underreaching} approximates the underreaching term $\prob{\lambda_{ij}= r}$, the probability that the shortest path between nodes $i$ and $j$ has length exactly $r$, using the $\nth{(i,j)}$ entry of the $\nth{r}$ power of the expected adjacency matrix $\expect{\mat{A}}$. 

Together, these approximations enable the estimation of expected higher-order homophily directly from the parameters of a specific sparse graph ensemble, as we do next.

\paragraph{Stochastic block models.}

Working within graph ensembles, we can now vary a small set of interpretable parameters---such as the probability of nodes to connect conditioned on their class labels---to create controlled experiments that interpolate smoothly between homophilic and heterophilic regimes; the Stochastic Block Model (SBM) provides precisely this sandbox. The graph ensemble is specified by a block-probability matrix $\mat{B}\!\in\!\mathbb{R}^{k\times k}$ whose entry $B_{uv}$ gives the connection probability for nodes in classes $u$ and $v$, and by a diagonal matrix of expected class proportions $\mat{\Pi}\!=\!\operatorname{diag}(\pi_1,\dots,\pi_k)$ that provides the relative class sizes.  Conditioned on the class labels $\vect{y}$, edges are sampled independently with probability $\frac{1}{n}B_{y_iy_j}$, so that $\mathbb{E}[A_{ij}] = \frac{1}{n}B_{y_iy_j}$.  Because $\mat{B}$ can be tuned from being purely diagonal (perfect homophily) to purely off-diagonal (perfect heterophily) and everything in between, the SBM lets us explore how gradual changes in class-graph correlation drive the transition between easy and hard regimes for message passing.  Moreover, by working with $\mathbb{E}[\mathbf{A}]$ rather than a single adjacency matrix $\mat{A}$ we obtain tractable approximations, such as Eq.~\eqref{eq:expected_l_homophily_approx}, that connect the parameters $(\mat{B},\mat{\Pi})$ directly to structural limits on MPNN sensitivity.  In what follows, we therefore adopt the SBM as the graph's generative model whenever we wish to reason analytically about how graph structure and class structure correlate.

Applying Theorem \ref{theorem:underreaching_oversquashing} together with the underreaching-oversquashing decomposition in Eq. \eqref{eq:underover}, we have that for sparse SBM graphs with sufficiently large mean class degrees, the $\ell$-order homophily, self-connectivity and total connectivity can be approximated in expectation as:

\begin{equation}\label{eq:expected_l_homophily_approx}
\begin{aligned}
  \expect{\rhomo{\ell}{\hat{\mat{A}}_\mathrm{sym}}} &\approx \trace \left(\mat{\Pi}^{\frac{1}{2}} \hat{\mat{B}}^{2 \ell} \mat{\Pi}^{\frac{1}{2}}\right) + O\left(\frac{1}{\avg{d}}\right), \quad \expect{\rconn{\ell}{\hat{\mat{A}}_\mathrm{sym}}} \approx \vect{1}_k^T\mat{\Pi}^{\frac{1}{2}} \hat{\mat{B}}^{2 \ell} \mat{\Pi}^{\frac{1}{2}}\vect{1}_k+ O\left(\frac{1}{\avg{d}}\right),\\
  & \qquad \qquad \expect{\rselfconn{\ell}{\hat{\mat{A}}_\mathrm{sym}}} \approx O\left(\frac{1}{\avg{d}^{\ell}}\right),
\end{aligned}
\end{equation}

where $\hat{\mat{B}}:=\mat{D}^{-\frac{1}{2}}\mat{\Pi}^{\frac{1}{2}}\mat{B}\mat{\Pi}^{\frac{1}{2}}\mat{D}^{-\frac{1}{2}}$, $\vect{1}_k$ is the vector of all ones, and $\mat{D}:= \diag{\mat{B}\vect{\pi}}$ is the diagonal matrix of expected class-wise degrees. See Theorem \ref{theorem:sbm_higher_order_homophily} in \nameref{sec:appendix_A} for an explicit derivation of Eq. \eqref{eq:expected_l_homophily_approx}.

\paragraph{Planted partition SBM.} To illustrate the point, we now consider a specific SBM: a sparse ``planted partition'' SBM with $k$ equi-sized classes such that $\expect{\mat{A}}_{ij}\coloneqq \frac{B_{y_iy_j}}{n}$ where $\mat{B} \coloneqq k d \left[\begin{smallmatrix}h&\dots&\frac{1-h}{k-1}\\\vdots&\ddots&\vdots\\\frac{1-h}{k-1}&\dots&h\end{smallmatrix}\right]$, i.e. with $kd \cdot h$ on the diagonal and $kd\cdot \frac{1-h}{k-1}$ on the off-diagonal, and $d>0$ is the expected mean degree of every node while $0\le h\le 1$ is the expected edge homophily as defined in Eq. \eqref{eq:edge_node_homophily}. Eq. \eqref{eq:expected_l_homophily_approx} yields the expected higher-order homophily:
\begin{align}\label{eq:sbm_homophily_estimates}
    \expect{\rhomo{\ell}{\hat{\mat{A}}_\mathrm{sym}}} \approx \frac{1}{k} + \frac{k-1}{k}\left(\frac{k}{k-1}h-\frac{1}{k-1}\right)^\ell + O \left(\frac{1}{d}\right),
\end{align}
and the expected self-connectivity and total connectivity: $\expect{\rselfconn{\ell}{\hat{\mat{A}}_\mathrm{sym}}} \approx O \left(\frac{1}{d^{\ell}}\right)$ and $\expect{\rconn{\ell}{\hat{\mat{A}}_\mathrm{sym}}} \approx 1 + O \left(\frac{1}{d}\right)$. For a full derivation, see Lemma \ref{lemma:planted_partition_connectivity} in \nameref{sec:appendix_A}. Figure \ref{fig:homophily_analysis} shows, for $k=2$, that our analytic estimates of $\expect{\rhomo{\ell}{\hat{\mat{A}}_\mathrm{sym}}}$ strongly track empirical values. Notable is the symmetric variation of performance with homophily in Eq. \eqref{eq:sbm_homophily_estimates} around ``ambiphily'' ($h=\frac{1}{k}$) for even $\ell$, and specifically when $k=2$ we have equal values for extremely heterophilic ($h=0$) and homophilic ($h=1$) graphs. For $k>2$ this symmetry breaks, but one can still find heterophilic SBMs with very high $2\ell$-order homophily (see Theorem \ref{theorem:optimal_connectivity}). By Eq. \eqref{eq:avg_sensitivity_bound}, for a standard GCN model, signal sensitivity is directly correlated to $2 \ell$-order homophily, so this behaviour explains the phenomenon termed byLuan et al. \cite{luan2024graphneuralnetworkshelp} as the ``mid-homophily pitfall'', where minimal performance is observed at $h=\frac{1}{k}$. 

\begin{figure}[htbp]
\centering

\begin{subfigure}[b]{\textwidth}
\centering
\includegraphics[width=\textwidth]{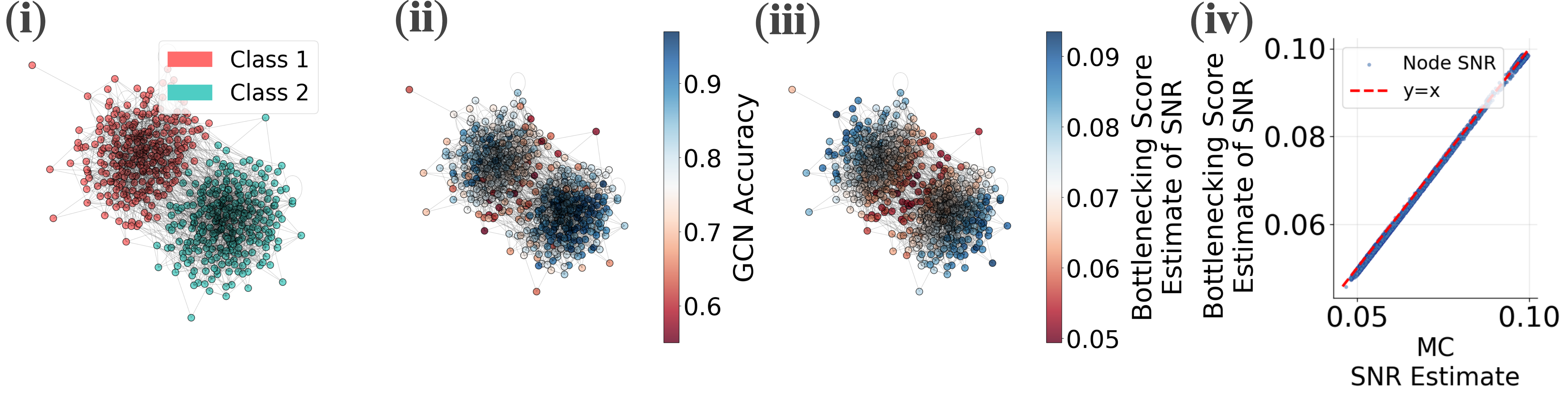}
\caption{Bottlenecking scores correctly estimate the SNR and classification accuracy of a GCN.}
\label{fig:bottleneckingscore_estimate_plot}
\end{subfigure}
\begin{subfigure}[b]{0.49\textwidth}
\centering
\includegraphics[width=\textwidth]{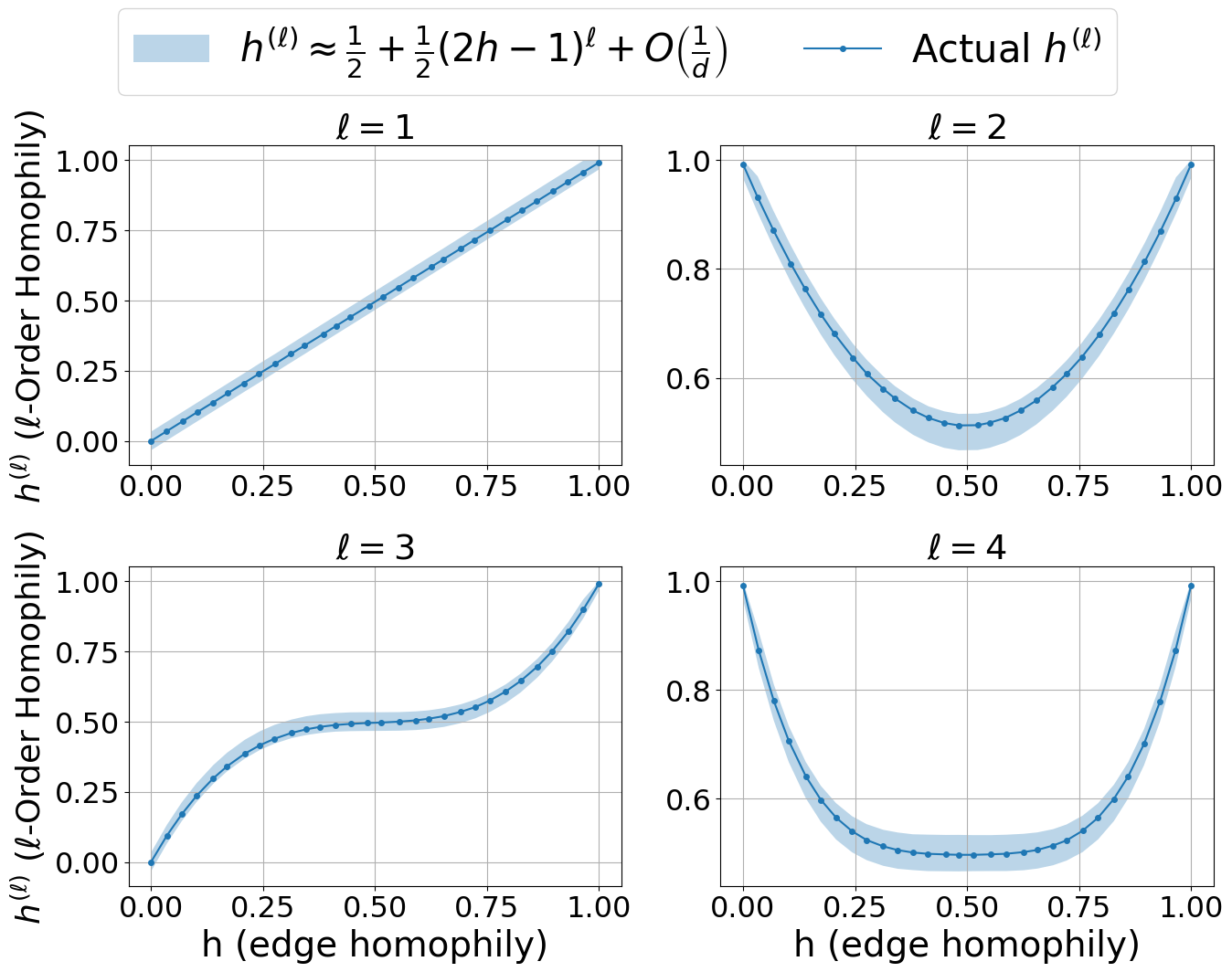}
 \caption{Higher-order homophily as a function of edge homophily in a 2-block SBM.}
\label{fig:approx_higher_order_homophily_h}
\end{subfigure}
\hfill
\begin{subfigure}[b]{0.49\textwidth}
\centering
\includegraphics[width=\textwidth]{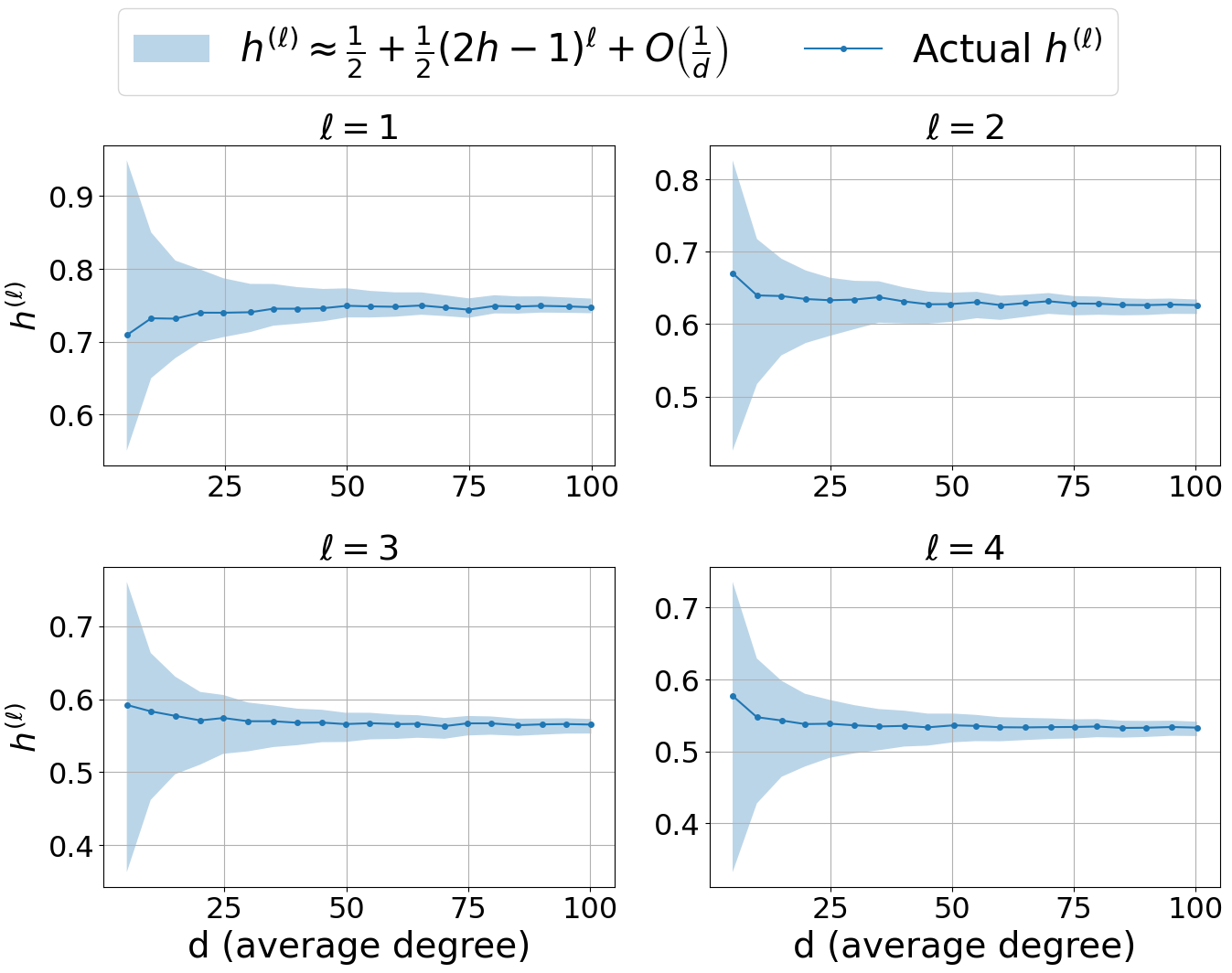}
\caption{Higher-order homophily as a function of mean degree in a 2-block SBM,}
\label{fig:approx_higher_order_homophily_d}
\end{subfigure}
\caption{\textbf{Bottlenecking scores can be used to correctly estimate the SNR and regions of low classification accuracy for a GCN, and can in turn be approximated on average by interpretable graph properties.} 
\textbf{(a)} Bottlenecking scores (Eqs. \eqref{eq:p_order_local_homophily}) can be used to accurately approximate the SNR, and give a faithful, feature-agnostic proxy for the classification accuracy of a GCN. Importantly, we see the phenomenon of class-bottlenecking clearly occurring where nodes across the two classes connect.
(a.ii): nodes are shaded by the empirical accuracy of a 2-layer GCN averaged over 100 training runs; nodes with poor accuracy (red) indicate the difficult-to-classify parts of the graph.
(a.iii): the same graph coloured by the SNR of the GCN---approximated as an SGC using Eq. \eqref{eq:sgc_snr_isotropic}---estimated using the sensitivities from the bottlenecking scores and the upper bounds of Eq. \eqref{eq:individual_signal_sensitivity_mpnn}. The close visual concordance between (a.ii) and (a.iii) shows that class-bottlenecks capture model performance limits purely in terms of the graph structure, validating the hierarchy in Eq. \eqref{eq:framework-hierarchy}. The scatter plot between the Monte Carlo-based estimate of the SNR and the SGC approximation-based SNR shows the accuracy of the SGC approximation, and that the bound in Eq. \eqref{eq:individual_signal_sensitivity_mpnn} is tight.
\textbf{(b, c)} We empirically calculate the $\ell$-order homophily $h^{(\ell)}:=h(\hat{\mat{A}}_{\mathrm{sym}}^{\ell})$, $\ell\in\{1,2,3,4\}$, of graphs with $n=3000$ nodes sampled from a 2-block planted partition SBM, shown in blue markers, and use blue shading to indicate closed-form predictions based on Eq. \eqref{eq:sbm_homophily_estimates}: $\expect{h^{(\ell)}} \approx \frac{1}{2} + \frac{1}{2}(2h-1)^\ell + O\left(\frac{1}{d}\right)$, showing the error term $O\left(\frac{1}{d}\right)$ as the shaded region between $\pm \frac{1}{d}$. \textbf{(b)} Graphs have a fixed average degree $d=30$ but varying edge homophilies $h$, revealing distinct patterns: linear scaling for $\ell=1$, symmetric U-shaped curves for even $\ell$ indicating minimal performance at ambiphily $(h=0.5)$, and asymmetric S-shaped curves for odd $\ell>1$. We note that odd $\ell$ values do not contribute to the signal sensitivity of standard GCN and SGC models (by Eqs. \eqref{eq:mean_signal_sensitivity_sgc} and \eqref{eq:avg_sensitivity_bound}), but do contribute when residual connections are added; see Theorem \ref{theorem:general_signal_sensntivity_bound} in \nameref{sec:appendix_A}. \textbf{(c)} Graphs have a fixed edge homophily $h=0.75$ but varying mean degree $d$, showing the convergence of our approximations for larger $d$.}
\label{fig:homophily_analysis}
\end{figure}

The preceding analysis demonstrates how graph ensembles, particularly the SBM, allow us to derive tractable analytical approximations for higher-order homophily based on a few fundamental graph parameters, like edge homophily and mean degree (Eqs. \eqref{eq:expected_l_homophily_approx}, \eqref{eq:sbm_homophily_estimates}; Figure \ref{fig:homophily_analysis}), thus providing a concrete link between graph structure and the sensitivity bounds established earlier. Equipped with this quantifiable relationship, we naturally arrive at a question of design: what underlying graph connectivity structures are optimal for maximising MPNN performance in a given task? Since higher-order homophily bounds signal sensitivity (Eq. \eqref{eq:individual_signal_sensitivity_mpnn}), and signal sensitivity determines the MPNN's SNR (Theorem \ref{theorem:snrsensitivity}), optimising the expected higher-order homophily should lead to better potential SNR. The following section addresses this by analytically deriving the optimal SBM block connectivity structure(s) that maximise expected higher-order homophily.

\subsection*{SBMs enable a continuous relaxation of optimising over discrete graph structures for message passing}

The analysis so far highlights the interplay between graph structure and MPNN performance: higher-order homophily controls the sensitivity measures and, consequently, the signal-to-noise ratio. To optimise performance, we must consider the ideal graph connectivity structures that optimise these sensitivity measures. By analysing the graph ensemble instead of a given graph instance, we turn an optimisation over discrete graph structures---intractable due to its combinatorial nature---into an optimisation over a continuous graph ensemble---that is analytically solvable using simple linear algebra.

\begin{restatable}[Optimal SBM connectivity]{theorem}{optimalconnectivity}
\label{theorem:optimal_connectivity}
  The general class of SBM connection probability block matrices $\mat{B}\in\mathbb{R}_{\ge 0}^{k\times k}$ that maximise $\trace \left(\hat{\mat{C}}^T \hat{\mat{B}}^{ \ell} \hat{\mat{C}}\right)$, where $\hat{\mat{C}} \in \mathbb{R}^{k\times k}$ is any full rank matrix, and $\hat{\mat{B}}:=\mat{D}^{-\frac{1}{2}}\mat{\Pi}^{\frac{1}{2}}\mat{B}\mat{\Pi}^{\frac{1}{2}}\mat{D}^{-\frac{1}{2}}$, is given by:
  \begin{align}\label{eq:optimal_block_connectivity_matrix}
    \mat{B}= \frac{\langle d\rangle}{k} \mat{\Pi}^{-1} {\mat{P}_k}\mat{\Pi}^{-1},
  \end{align}
  for any symmetric permutation matrix $\mat{P}_k$ if $\ell$ is even, and $\mat{P}_k=\mat{I}_k$ if $\ell$ is odd. Here, $\mat{\Pi}:=\diag{\vect{\pi}}$ is the diagonal matrix of expected class proportions i.e. $\vect{\pi}$ is a size-$k$ simplex vector, $\mat{D}:=\diag{\mat{B}\vect{\pi}}$ is the diagonal matrix of expected class-wise degrees, $\mat{I}_k$ is the identity matrix, and $\avg{d}$ is the mean degree. The optimal value is:
  \begin{align}\label{eq:optimal_block_connectivity_trace_value}
    \max_{\hat{\mat{B}}}\trace \left(\hat{\mat{C}}^T \hat{\mat{B}}^{ \ell} \hat{\mat{C}}\right) = \trace \left(\hat{\mat{C}}^T \hat{\mat{C}}\right).
  \end{align}
\end{restatable}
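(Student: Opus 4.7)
The plan is to reduce the combinatorial search over $\mat{B}$ to a spectral question about the symmetric, non-negative matrix $\hat{\mat{B}}$, apply a PSD trace inequality to bound the objective, and then back out the form of $\mat{B}$ from the optimal $\hat{\mat{B}}$. A useful observation upfront is that $\hat{\mat{B}}$ is scale-invariant in $\mat{B}$ (i.e.\ $\hat{\mat{B}}(c\mat{B})=\hat{\mat{B}}(\mat{B})$, since $\mat{D}\mapsto c\mat{D}$), so the mean-degree constraint only fixes the normalization of $\mat{B}$, not the value of the trace.

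First, I would establish the spectral properties of $\hat{\mat{B}}$. By construction $\hat{\mat{B}}$ is symmetric (since $\mat{B}$ is symmetric in an undirected SBM and the sandwiching diagonals commute), and it is similar to the row-stochastic transition matrix $\mat{P}=\mat{D}^{-1}\mat{B}\mat{\Pi}$ via the positive-diagonal conjugation $\mat{S}=\mat{\Pi}^{-1/2}\mat{D}^{1/2}$; equivalently, $\mat{\Pi}^{1/2}\mat{D}^{1/2}\vect{1}$ is a positive eigenvector of $\hat{\mat{B}}$ with eigenvalue $1$. Combining this with the Perron–Frobenius theorem places all eigenvalues of $\hat{\mat{B}}$ in $[-1,1]$.

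Second, I would bound the objective. By the cyclic property of trace, $\trace(\hat{\mat{C}}^{T}\hat{\mat{B}}^{\ell}\hat{\mat{C}})=\trace(\hat{\mat{B}}^{\ell}\mat{M})$ with $\mat{M}:=\hat{\mat{C}}\hat{\mat{C}}^{T}\succ 0$ (by full-rank of $\hat{\mat{C}}$). For any parity of $\ell$ each eigenvalue $\mu=\lambda^{\ell}$ satisfies $\mu\le 1$, so $\mat{I}-\hat{\mat{B}}^{\ell}$ is PSD, and hence $\trace\bigl(\mat{M}^{1/2}(\mat{I}-\hat{\mat{B}}^{\ell})\mat{M}^{1/2}\bigr)\ge 0$, which gives the upper bound $\trace(\hat{\mat{B}}^{\ell}\mat{M})\le \trace(\mat{M})=\trace(\hat{\mat{C}}^{T}\hat{\mat{C}})$, proving the optimal value in Eq.~\eqref{eq:optimal_block_connectivity_trace_value}. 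Because $\mat{M}$ is positive definite, equality forces $\hat{\mat{B}}^{\ell}=\mat{I}$.

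Third, I would characterize all symmetric, entrywise non-negative $\hat{\mat{B}}$ with eigenvalues in $[-1,1]$ solving $\hat{\mat{B}}^{\ell}=\mat{I}$. Non-negativity of $\mat{B}$ transfers to $\hat{\mat{B}}$ since it is obtained from $\mat{B}$ by conjugation with a positive diagonal matrix. For $\ell$ odd, $\lambda^{\ell}=1$ with $\lambda\in[-1,1]$ forces $\lambda=1$, so $\hat{\mat{B}}=\mat{I}$, i.e.\ $\mat{P}_k=\mat{I}_k$. For $\ell$ even, eigenvalues are $\pm 1$, so $\hat{\mat{B}}$ is a symmetric orthogonal involution; here one invokes the key elementary lemma that a non-negative symmetric $\hat{\mat{B}}$ with $\hat{\mat{B}}^{2}=\mat{I}$ must be a symmetric permutation. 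The argument is direct: $(\hat{\mat{B}}^{2})_{ii}=1$ gives $\sum_{k}\hat{B}_{ik}^{2}=1$, while $(\hat{\mat{B}}^{2})_{ij}=0$ for $i\ne j$ combined with non-negativity forces $\hat{B}_{ik}\hat{B}_{jk}=0$ for all $k$; thus each column (and symmetrically each row) has at most one nonzero entry, which must equal $1$. This yields the characterization $\hat{\mat{B}}=\mat{P}_k$ for a symmetric permutation matrix.

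Finally, I would recover $\mat{B}$. Inverting the definition gives $\mat{B}=\mat{\Pi}^{-1/2}\mat{D}^{1/2}\mat{P}_{k}\mat{D}^{1/2}\mat{\Pi}^{-1/2}$, and self-consistency $\mat{D}=\diag(\mat{B}\vect{\pi})$ combined with $\mat{P}_{k}\vect{1}=\vect{1}$ and the mean-degree normalization $\sum_{i}\pi_{i}D_{ii}=\langle d\rangle$ pins down $D_{ii}=\langle d\rangle/(k\pi_{i})$, producing the stated form in Eq.~\eqref{eq:optimal_block_connectivity_matrix}. A final sanity check plugs this $\mat{B}$ back in to verify $\hat{\mat{B}}=\mat{P}_{k}$ and realises the bound. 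I expect the main obstacle to be the even-$\ell$ step: ruling out non-permutation involutions (e.g.\ symmetric reflections like $\mat{I}-2\vect{v}\vect{v}^{T}$) among the candidate maximizers, which is precisely where the non-negativity constraint on $\mat{B}$ does crucial, and otherwise easily overlooked, work.
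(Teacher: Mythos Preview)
Your proposal is correct and follows essentially the same skeleton as the paper's proof: bound the eigenvalues of $\hat{\mat{B}}$ in $[-1,1]$, use a trace/PSD inequality to cap the objective at $\trace(\hat{\mat{C}}^{T}\hat{\mat{C}})$, force $\hat{\mat{B}}^{\ell}=\mat{I}$ at equality, split into odd/even cases, and then exploit non-negativity to pin $\hat{\mat{B}}$ down to a symmetric permutation before backing out $\mat{B}$.

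The differences are stylistic rather than structural. Your upper bound via $\mat{I}-\hat{\mat{B}}^{\ell}\succeq 0$ and $\trace\bigl(\mat{M}^{1/2}(\mat{I}-\hat{\mat{B}}^{\ell})\mat{M}^{1/2}\bigr)\ge 0$ is slightly cleaner than the paper's route through the eigendecomposition of $\hat{\mat{C}}\hat{\mat{C}}^{T}$. In the even case you first reduce to the involution $\hat{\mat{B}}^{2}=\mat{I}$ (via the spectral theorem, since all eigenvalues are $\pm 1$) and then run the combinatorial non-negativity argument, whereas the paper works directly with $\hat{\mat{B}}^{2\ell'}=\mat{I}$ and extracts the same one-nonzero-per-row conclusion from particular walk patterns; your reduction makes the argument marginally shorter. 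For recovering $\mat{B}$, the paper invokes Perron--Frobenius to identify $\vect{d}^{1/2}$ as the leading eigenvector and explicitly writes ``we can \emph{choose}'' the degree scaling, while you argue via self-consistency of $\mat{D}=\diag(\mat{B}\vect{\pi})$; be aware that ``pins down'' is a bit too strong here, since for permutations with fixed points the self-consistency only constrains $D_{i}\pi_{i}$ to be constant within cycles of $\mat{P}_{k}$, not globally---the stated $D_{ii}=\langle d\rangle/(k\pi_{i})$ is a canonical choice rather than the unique one, which the paper's phrasing also quietly acknowledges.
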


From Eq. \eqref{eq:expected_l_homophily_approx}, we know that $\expect{\rhomo{2\ell}{\hat{\mat{A}}_\mathrm{sym}}} \approx \trace \left(\mat{\Pi}^{\frac{1}{2}} \hat{\mat{B}}^{2 \ell} \mat{\Pi}^{\frac{1}{2}}\right)$. Setting $\hat{\mat{C}} = \mat{\Pi}^{\frac{1}{2}}$ in Theorem \ref{theorem:optimal_connectivity} reveals that for sparse graph ensembles---with sufficiently large mean degree---the general class of graphs that maximise the expected $2\ell$-order homophily in Eq. \eqref{eq:expected_l_homophily_approx} corresponds to a disjoint union of single-class and two-class-bipartite clusters, where nodes within a class are either connected only amongst themselves or connected only to nodes of another class. We note that this general class of optimal structures includes the trivial fully homophilic case, where $\mat{B}$ is a diagonal matrix, but also non-trivial cases such as the completely heterophilic planted partition model with $h=0$ from Figure \ref{fig:homophily_analysis}, corresponding to $\mat{P}_2=\begin{pmatrix} 0 & 1 \\ 1 & 0 \end{pmatrix}$. For a fixed class assignment and mean degree $\avg{d}$, the size of the set of such optimal matrices $\mat{B}$, is the number of symmetric permutations of $k$ elements, given by the \nth{k} telephone number $T(k)$ which grows hyper-exponentially with $k$ as $T(k) \sim \left(\frac{k}{e}\right)^{k/2} \frac{e^{\sqrt{k}}}{(4e)^{1/4}}$ \cite{knuth1973sorting}. 

Theorem \ref{theorem:optimal_connectivity} provides a clear theoretical characterisation of the optimal graph connectivity structures within the SBM model for maximising higher-order homophily, and motivates a practical approach: modifying graphs to better approximate these optimal structures. Therefore, our concluding contribution is a principled graph rewiring algorithm that provably enhances MPNN performance by explicitly increasing higher-order homophily based on predicted class labels. We now elaborate on this algorithm and present empirical results validating its effectiveness on both synthetic and real-world datasets.

\subsubsection*{BRIDGE: Block Resampling from Inference-Derived Graph Ensembles}

\begin{figure}[htbp]
  \centering
  \begin{subfigure}[t]{\textwidth}
    \centering
    \includegraphics[width=0.9\linewidth]{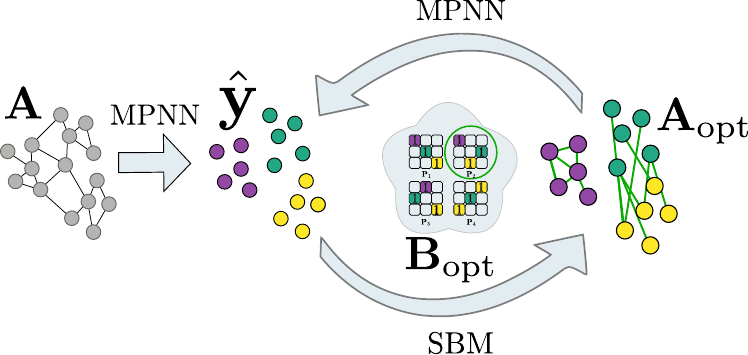}
    \caption{The BRIDGE algorithm uses Theorem \ref{theorem:optimal_connectivity} to enhance message passing by globally maximising higher-order homophily.}
    \label{fig:rewiring-cartoon}
  \end{subfigure}

  \vspace{1.2em}

    \begin{subfigure}[t]{\textwidth}
        \centering
        \includegraphics[width=\linewidth]{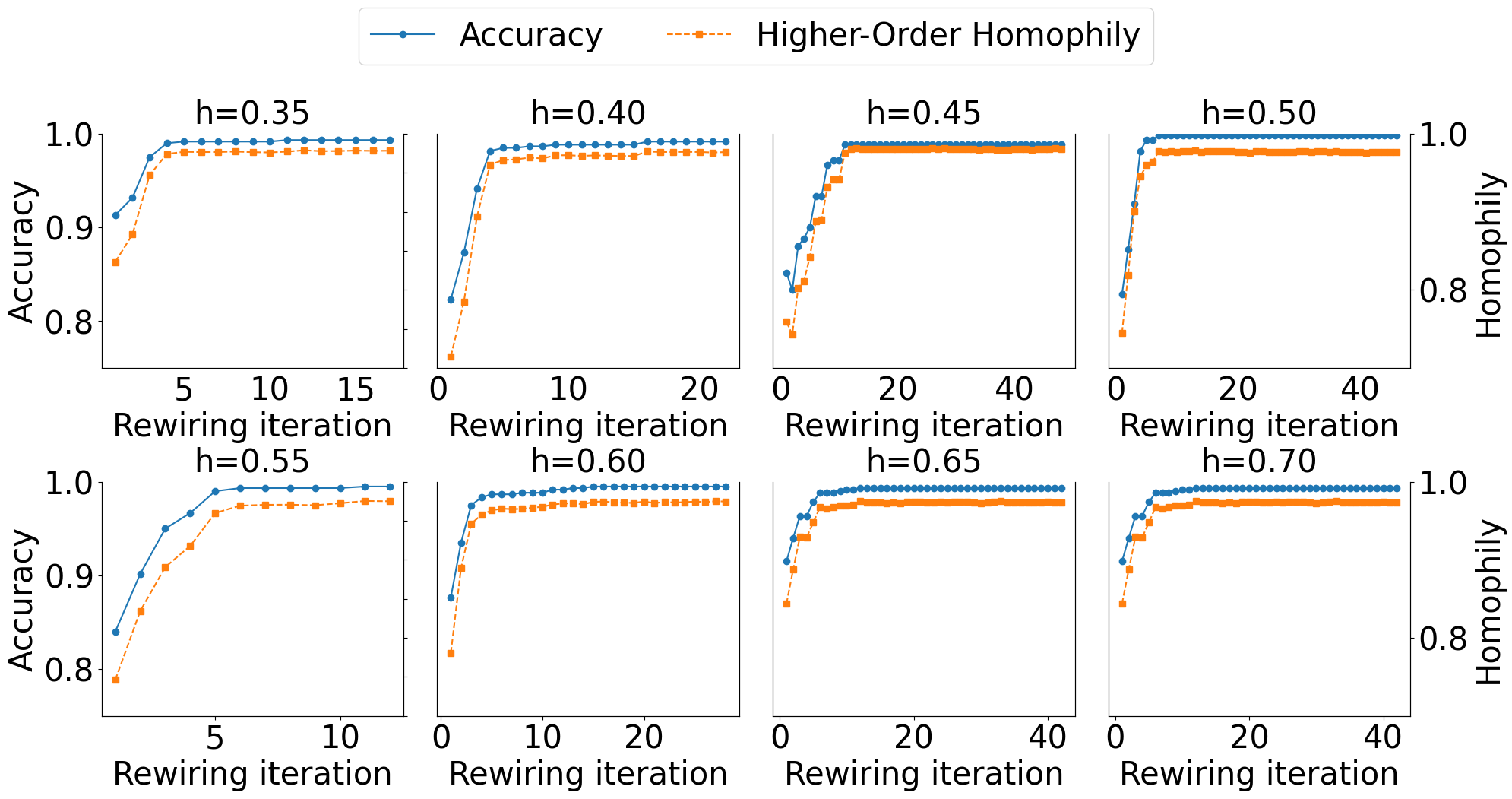}
        \caption{Both higher-order homophily and test accuracy increase until saturation over resampling iterations.}
    \label{fig:rewiring-iterations}
    \end{subfigure}
  \caption{
\textbf{The BRIDGE algorithm improves the higher-order homophily and consequently the classification accuracy of 2-layer GCNs in 2-block planted partition SBMs. (a)} Schematic illustration of the BRIDGE algorithm, which transforms an input graph \(\mathbf{A}\) into an optimised graph \(\mathbf{A}_{\mathrm{opt}}\) by iteratively modifying edges based on predicted class labels and optimal block structures derived from Theorem \ref{theorem:optimal_connectivity}. The central panel shows the block‐matrix structure that guides the rewiring process, with different colours representing different classes.
\textbf{(b)} Across eight 2-block SBM benchmarks, with fixed degree $d=10$ and varying edge homophily \(h\), each iteration of BRIDGE steadily increases both the test accuracy (blue) and the mean higher-order homophily (orange), until the improvements saturate. Notably, graphs across all homophily regimes converge to similarly high performance levels (\(\approx99\%\) accuracy), demonstrating BRIDGE’s ability to overcome structural limitations regardless of the initial graph configuration.}
  \label{fig:rewiring}
\end{figure}

In practice, Theorem \ref{theorem:optimal_connectivity} is most useful when the class membership is not known with complete certainty. If the block structure follows labels $\hat{y}_i$ that are different from the node class labels $y_i$, the optimal connectivity matrix  is still given by Theorem \ref{theorem:optimal_connectivity}---see Theorem \ref{theorem:sbm_higher_order_homophily} in \nameref{sec:appendix_A}---but the optimal higher-order homophily is limited by the accuracy of the predictions. This optimal structure can thus be used to modify the graph's edges based on \emph{predicted} classes, to improve the $\ell$-order homophily of the graph, forming the basis of a graph rewiring scheme which we call Block Resampling from Inference-Derived Graph Ensembles, or BRIDGE.

To obtain the rewired graph, we first use a ``cold-start'' GCN to estimate node-level class predictions, which are then used to compute an optimal block matrix structure according to Theorem \ref{theorem:optimal_connectivity}---as illustrated in Figure \ref{fig:rewiring}a---while treating the choice of the permutation matrix $\mat{P}_k$ as a hyperparameter. We then sample a new graph from this SBM, use the MPNN to predict new classes using the resampled graph, and iterate over this resampling procedure using new class predictions. 

The optimal higher-order homophily achieved for a given set of class predictions in each iteration is given by Eq.~\ref{eq:optimal_block_connectivity_trace_value} in terms of the correlation of predicted and true classes. We elaborate more on the optimum achieved in the \nameref{sec:imperfect_predictions} subsection of the \nameref{sec:methods} section. In this manner, the expected higher-order homophily achieved by the optimal block matrix increases with increased class prediction accuracy at each iteration of the rewiring, and the class prediction accuracy increases with higher higher-order homophily, in a virtuous cycle.

The complete procedure is detailed in the subsection \nameref{sec:methods_BRIDGE} of the \nameref{sec:methods} section.

\paragraph*{BRIDGE achieves near-perfect node classification accuracy in SBMs.}

SBMs are widely used as synthetic benchmarks in the GNN literature for node classification due to their ability to easily control graph structures and their correlation to node classes, and allow for fair model comparisons \cite{abbe2018community,dwivedi2023benchmarking,joshi2023graphworld,abuelhaija2019mixhophigherordergraphconvolutional,garrity2025gnn,duranthon2025statistical}. The experimental results in Table \ref{tab:accuracies_synthetic} demonstrate substantial improvements in GCNs' performance following the application of the BRIDGE algorithm across synthetic 2-block planted partition SBM datasets. BRIDGE achieves near-perfect classification on SBM benchmarks across all homophily regimes.

\paragraph{Baseline behaviour.} The baseline GCN performance exhibits the characteristic ``mid-homophily pitfall'' phenomenon, with accuracies ranging from 85.42\% to 92.05\% across different homophily levels ($h=0.35$ to $h=0.65$), and notably showing minimum performance around the ambiphily region ($h=0.50$ at 85.48\%). This U-shaped performance curve aligns with the theoretical predictions derived from our analysis of higher-order homophily in Eq. \eqref{eq:sbm_homophily_estimates}, where signal sensitivity is bounded by the $2\ell$-order homophily that varies symmetrically around ambiphily ($h=\frac{1}{k}$) according to $h^{(\ell)} \approx \frac{1}{k} + \frac{k-1}{k}(\frac{k}{k-1}h-\frac{1}{k-1})^\ell$. The relatively modest baseline performance, particularly in the mid-homophily regime where same-class connectivity through $2\ell$-hop paths is minimised, suggests that the original graph structures suffer from class-bottlenecks that restrict effective signal propagation. We include the curvature-based Stochastic Discrete Ricci Flow (SDRF) rewiring \cite{topping2021understanding} as well as the random walk heuristic-based Diffusion Improves Graph Learning (DIGL) rewiring \cite{gasteiger2022diffusionimprovesgraphlearning} procedures as literature-standard benchmarks in rewiring, and as a point of reference for our SBM graph datasets. DIGL rewiring leads to performance decreases at all homophily levels. While SDRF offers slight improvements in some cases, its impact is marginal, inconsistent, and not statistically significant---even leading to a performance decrease at $h=0.60$ ($-0.64\%$). Importantly, both procedures fail to mitigate the mid-homophily pitfall, with their performance closely tracking the original GCN baseline. This indicates that current rewiring methods are insufficient to resolve the fundamental structural issues that limit message passing in node classification tasks. 

\paragraph{Effect of BRIDGE resampling.} The impact of the BRIDGE resampling algorithm is evident in the consistently high performance achieved across all homophily levels, with accuracy improvements to approximately 99\% regardless of the initial edge homophily configuration. This dramatic performance boost validates the paper's theoretical framework linking higher-order homophily to MPNN performance limits---by optimally restructuring the graph connectivity to approximate disjoint unions of single-class and two-class-bipartite clusters (as prescribed by Theorem~\ref{theorem:optimal_connectivity}), the rewiring process effectively maximises the class-bottlenecking scores and eliminates the structural impediments to signal sensitivity. The near-perfect accuracy achieved across diverse homophily regimes demonstrates that the BRIDGE algorithm successfully addresses the fundamental architectural limitations of MPNNs by transforming suboptimal graph structures into connectivity patterns that support effective message passing. This result provides compelling empirical evidence for one of the paper's central claims that it is class-correlated graph structures---and more specifically class-bottlenecks---rather than structural bottlenecks alone, that fundamentally determine MPNN performance limits.

\begin{table}[htbp]
\centering
\caption{Mean accuracy before and after rewiring on 2‑block SBM datasets. Blue marks the highest mean accuracy in each row, red the second highest. * means the method’s accuracy differs from the GCN baseline at $p<0.05$.}
\label{tab:accuracies_synthetic}
\makebox[\textwidth]{
\begin{tabular}{lcccc}
\toprule
Dataset & GCN & GCN + DIGL & GCN + SDRF & GCN + BRIDGE \\
\midrule
$h=0.35$ & 91.48 $\pm$ 1.23 & 90.18 $\pm$ 1.41 & {\color{red} 91.57 $\pm$ 1.00} & {\color{blue} 99.27 $\pm$ 0.42*} \\
$h=0.40$ & 88.22 $\pm$ 1.10 & 87.25 $\pm$ 1.93 & {\color{red} 88.73 $\pm$ 1.14} & {\color{blue} 99.33 $\pm$ 0.60*} \\
$h=0.45$ & 86.17 $\pm$ 1.59 & 84.35 $\pm$ 1.55 & {\color{red} 86.70 $\pm$ 1.33} & {\color{blue} 99.05 $\pm$ 0.57*} \\
$h=0.50$ & 85.48 $\pm$ 1.24 & 83.25 $\pm$ 1.47 & {\color{red} 85.80 $\pm$ 1.15} & {\color{blue} 99.50 $\pm$ 0.39*} \\
$h=0.55$ & 85.42 $\pm$ 1.52 & 85.35 $\pm$ 1.69 & {\color{red} 86.28 $\pm$ 0.85} & {\color{blue} 99.55 $\pm$ 0.29*} \\
$h=0.60$ & {\color{red} 88.72 $\pm$ 0.82} & 85.95 $\pm$ 2.41 & 88.08 $\pm$ 1.49 & {\color{blue} 99.37 $\pm$ 0.26*} \\
$h=0.65$ & 92.05 $\pm$ 0.79 & 91.20 $\pm$ 1.91 & {\color{red} 92.40 $\pm$ 0.77} & {\color{blue} 99.23 $\pm$ 0.56*} \\
\bottomrule
\end{tabular}}
\end{table}

\paragraph{Performance on real-world networks: low homophily networks benefit the most.}
We also evaluate BRIDGE on nine widely used citation and web graphs (Table~\ref{tab:accuracies_real}).  On heterophilic or mixed-homophily datasets drawn from the \textsc{WebKB} (\textsc{Texas}, \textsc{Cornell}, \textsc{Wisconsin}), \textsc{Actor}, \textsc{Squirrel}, and \textsc{Chameleon} benchmarks, BRIDGE almost consistently boosts the test accuracies between 2 to 5 percentage points.
For example, the accuracy on \textsc{Actor} climbs from {25.95\%} to {30.79\%}, and on \textsc{Chameleon} from {68.79\%} to {71.49\%}.  These improvements mirror the results for synthetic graphs: in low-homophily regimes, the original connectivity exhibits severe class-bottlenecks that BRIDGE rewiring alleviates. By contrast, the classical citation networks \textsc{Cora}, \textsc{Citeseer}, and \textsc{Pubmed} are strongly homophilic; their original structure is already close to the class-wise single-cluster optimum identified by Theorem~\ref{theorem:optimal_connectivity}.  Rewiring therefore yields negligible change; \textsc{Cora}: $-0.02\%$, \textsc{Citeseer}: $-1.0\%$, \textsc{Pubmed}: $+0.06\%$. 

\begin{table}[htbp]
\centering
\caption{Mean accuracy before and after rewiring on real graph datasets. Blue marks the highest mean accuracy in each row, red the second highest. * means the method’s accuracy differs from the GCN baseline at $p<0.05$. The datasets are respectively divided in the table into three types: Large ($n>1000$ nodes) heterophilic, small ($n\le 1000$ nodes) heterophilic and large homophilic.}
\label{tab:accuracies_real}
\makebox[\textwidth]{
\begin{tabular}{lcccc}
\toprule
Dataset & GCN & GCN + DIGL & GCN + SDRF & GCN + BRIDGE \\
\midrule
\textsc{Actor}      & 25.95 $\pm$ 1.27 & 27.84 $\pm$ 1.38$\ast$ & {\color{red} 30.15 $\pm$ 1.08*} & {\color{blue} 30.79 $\pm$ 1.62*} \\
\textsc{Squirrel}   & {\color{blue} 58.48 $\pm$ 1.91} & 47.53 $\pm$ 1.12 & 51.02 $\pm$ 1.67 & {\color{red} 58.28 $\pm$ 1.25} \\
\textsc{Chameleon}  & 68.79 $\pm$ 2.52 & 61.64 $\pm$ 2.83 & {\color{red} 69.28 $\pm$ 2.45} & {\color{blue} 71.49 $\pm$ 2.52*} \\
\midrule
\textsc{Wisconsin}  & 60.39 $\pm$ 4.11 & 45.10 $\pm$ 5.62 & {\color{blue} 69.41 $\pm$ 5.00*} & {\color{red} 62.16 $\pm$ 5.99} \\
\textsc{Cornell}    & 55.41 $\pm$ 6.27 & 51.62 $\pm$ 6.91 & {\color{red} 58.11 $\pm$ 6.01} & {\color{blue} 58.82 $\pm$ 7.03} \\
\textsc{Texas}      & 62.43 $\pm$ 7.15 & 57.30 $\pm$ 8.90 & {\color{blue} 69.73 $\pm$ 7.19*} & {\color{red} 64.86 $\pm$ 7.56} \\
\midrule
\textsc{Cora}       & {\color{blue} 87.47 $\pm$ 1.25} & 84.93 $\pm$ 1.19 & 86.38 $\pm$ 1.06 & {\color{red} 87.45 $\pm$ 1.25} \\
\textsc{Citeseer}   & {\color{blue} 74.55 $\pm$ 1.57} & 72.36 $\pm$ 1.40 & {\color{red} 74.52 $\pm$ 1.55} & 73.53 $\pm$ 1.57 \\
\textsc{Pubmed}     & 85.11 $\pm$ 0.68 & 84.90 $\pm$ 0.67 & {\color{blue} 85.20 $\pm$ 0.71} & {\color{red} 85.17 $\pm$ 0.64} \\
\bottomrule
\end{tabular}}
\end{table}

It is important to note that BRIDGE maintains and even improves performance on real-world datasets despite completely discarding the original graph structure and reconstructing it from scratch, based only on predicted class labels. Unlike traditional rewiring methods such as SDRF or DIGL that modify existing edges, BRIDGE replaces the entire adjacency matrix with a sampled realisation from an optimal SBM. As a result, some of the potential gains from the increased higher-order homophily might get reduced due to this loss of data. One fruitful extension of this work would be to incorporate priors from the original graph, potentially through more advanced ensemble models like degree corrected or hierarchical SBMs, so as to keep some of the original graph's structure while also improving higher-order homophily. 

\section*{Discussion}

In this paper, we have provided a unified statistical approach to understand how graph structure fundamentally affects the performance of message passing neural networks (MPNNs) in semi-supervised node classification tasks. Our results establish a clear, quantifiable relationship between graph structure, the sensitivity of learned representations, and node classification performance, providing insights that were previously only empirically studied or understood in isolation.

First, by introducing a novel statistical measure of the node-level signal-to-noise ratio (SNR; Eq. \eqref{eq:snr_def}) of an MPNN, we showed in Theorem~
\ref{theorem:snrsensitivity} how the quality of node representations is governed by their sensitivity to class-driven signals versus noisy or global variations in the input, that is, we showed that the SNR decomposes into interpretable measures of signal sensitivity (Eq. \eqref{eq:signal_sensitivity}), and noise and global sensitivities (Eq. \eqref{eq:noise_and_global_sensitivity}). Figure
\ref{fig:snr_analysis} validated this relationship empirically by confirming that our theoretical estimates of the SNR accurately predict actual MPNN performance in terms of node classification accuracy---in particular, the improvement in classification accuracy of MPNNs over feedforward neural networks is directly linked to satisfying the sensitivity condition in Corollary~
\ref{corollary:gdl_criterion}. We also demonstrate these results on real-world graph datasets widely used in the literature, showing that the estimated SNR averaged over the graph strongly correlates with overall test accuracy. Since the SNR estimation is done using only graph-level quantities, our theory can be used in practice on wide-ranging real-world examples to predict model performance before any MPNN training even takes place.

Importantly, the sensitivity condition in Corollary~
\ref{corollary:gdl_criterion} clarifies a previously ambiguous trade-off: low sensitivity to inputs can simultaneously limit expressive power due to oversquashing \cite{digiovanni2023does} while improving generalisation \cite{novak2018sensitivity}. We showed that the critical determinant of improved classification accuracy is not \emph{overall} input sensitivity, but rather the selective enhancement of signal sensitivity relative to noise and global sensitivities. In other words, this distinction resolves the apparent contradiction highlighted in prior works \cite{digiovanni2023does, novak2018sensitivity} by clearly describing when high sensitivity is beneficial or detrimental.

We then introduced higher-order homophily measures in Eq.~\eqref{eq:bottleneck_homophily_connection} that generalise the canonical notion of edge homophily to capture multi-hop interactions between nodes of the same class. Our theoretical analysis in Eq.~\eqref{eq:avg_sensitivity_bound} revealed that an MPNN's signal sensitivity---and hence its discriminative power---is explicitly bounded by higher-order homophily. Low higher-order homophily corresponds directly to the presence of class-bottlenecks, illustrated in Figures~
\ref{fig:homophilic_bottleneck_bad} and  \ref{fig:homophilic_bottleneck_good}, which restrict the ability of MPNNs to effectively propagate class-specific information. Figure \ref{fig:bottleneckingscore_estimate_plot} validated this finding by showing how bottlenecking estimates correctly track MPNN performance in terms of node classification accuracy. In particular, Eqs.~\eqref{eq:mean_signal_sensitivity_sgc} and \eqref{eq:individual_signal_sensitivity_mpnn} explain why MPNNs struggle in graphs with heterophily, consistent with observations made in prior empirical studies \cite{zhu2020homophily, luan2024graphneuralnetworkshelp}. However, we also found that this finding is more nuanced; Figure \ref{fig:approx_higher_order_homophily_h} showed that extremely heterophilous graphs can induce the same levels of higher-order homophily as extremely homophilous graphs, and it is possible for mid-homophily graphs to struggle more than either of those \cite{luan2024graphneuralnetworkshelp}.

To further unpack this relationship, we decomposed the impact of structural bottlenecks, in Eq. \eqref{eq:underover}, into two distinct phenomena: oversquashing and underreaching. Using sparse random graph ensembles, we showed analytically in Theorem
\ref{theorem:underreaching_oversquashing} how the interplay of these two phenomena affects MPNN sensitivities at different message-passing depths. By further specifying a stochastic block model as the graph ensemble, we provided explicit and easily computable expressions for higher-order homophily in Eq.~\eqref{eq:expected_l_homophily_approx}, enabling practitioners to predict the suitability of a graph structure for message-passing models using simple graph properties such as edge homophily and average degree, and systematically diagnose structural limitations in their graphs.

Building on these theoretical insights we developed Block Resampling from Inference-Derived Graph Ensembles, or BRIDGE: a principled graph rewiring algorithm that directly applies our result on graph structures that maximise the expected higher-order homophily---and therefore the MPNN's potential signal sensitivity---from Theorem~\ref{theorem:optimal_connectivity}. BRIDGE iteratively modifies the graph structure to approximate the theoretical optimum of the disjoint union of single-class and two-class-bipartite clusters, thereby maximising the expected higher-order homophily. Our experimental results on synthetic planted partition SBM datasets demonstrate the impact of this approach: while baseline GCN performance exhibits the characteristic ``mid-homophily pitfall'' \cite{luan2024graphneuralnetworkshelp}---with accuracies ranging from 85.42\% to 92.05\% across different homophily levels---and other rewiring methods \cite{topping2021understanding, gasteiger2022diffusionimprovesgraphlearning} offering only marginal gains, BRIDGE-rewired graphs achieve near-perfect classification accuracy of 99\% regardless of the original graph's edge homophily; see Table~\ref{tab:accuracies_synthetic}. This dramatic improvement across all homophily regimes validates one of our central claims that it is class-correlated graph structures---and more specifically \emph{class} bottlenecks---rather than structural bottlenecks alone that fundamentally determine an MPNN's performance limits.

Importantly, applying BRIDGE to real-world graphs also consistently improved performance in heterophilic or mixed-homophily datasets such as \textsc{Actor}, \textsc{Chameleon}, \textsc{Wisconsin}, \textsc{Cornell}, and \textsc{Texas}; see Table \ref{tab:accuracies_real}. For instance, the classification accuracy on \textsc{Actor} increased from 25.95\% to 30.79\%, and on \textsc{Chameleon} from 68.79\% to 71.49\%. These improvements show that---even as a simple demonstration with coarse block-level resampling---BRIDGE has the ability to address the problem of class-bottlenecks prevalent in real-world graphs.

\paragraph{Limitations and future work.} Despite the theoretical insights and empirical successes demonstrated in this work, several limitations warrant consideration. 
First, our theoretical framework relies on specific assumptions about feature distributions in Eq.~\eqref{eq:featuredecomp} and graph sparsity conditions in Theorem~\ref{theorem:underreaching_oversquashing}, which---while standard in the GNN literature---may not always hold in practice, but allow us to make significant analytical progress. The feature decomposition into class-wise signal, node-level noise, and global shift components is broadly applicable, but may oversimplify the complex feature structures present in rich domains, like molecular graphs or knowledge graphs. 
Second, while BRIDGE achieves substantial performance gains on synthetic SBM datasets, as well as significant improvements in real-world heterophilic graph datasets, on strongly homophilic citation networks (like \textsc{Cora}, \textsc{Citeseer}, and \textsc{Pubmed}) BRIDGE maintains baseline performance on the original structures, as the original structure is already close to the single-class clusters optimum and any potential gains from the increased higher-order homophily are reduced by the discarding of original graph data. Using more complex graph ensembles that incorporate priors from the original graph, such as using degree corrected or hierarchical SBMs \cite{Karrer_2011,PhysRevX.4.011047}, would allow retaining some of the graphs' original information while also improving their expected higher-order homophily.

\section{Methods}\label{sec:methods}

\subsection*{Feature distribution}
We model the distribution of node features $\mat{X}$ in relation to their class labels $\vect{y}$. To avoid assuming a specific---potentially restrictive---feature distribution while still allowing for structured analysis, we pursue a feature decomposition by expressing the feature vector of node $j$, denoted by $\mat{X}_j$, through three independently sampled components:
\begin{align*}
  \mat{X}_j = \vect{\mu}_{y_j} + \vect{\gamma} + \vect{\epsilon}_j.
\end{align*}
Here, the vector $\vect{\mu}_{y_j}\in \mathbb{R}^{d_{\mathrm{in}}}$ captures the class-specific mean signal: $\expect{\vect{\mu}_{c}}:=\condexpect{\mat{X}_j}{y_j=c}$. The vector $\vect{\gamma} \in \mathbb{R}^{d_{\mathrm{in}}}$ represents zero-mean global variations shared across all nodes. Finally, $\vect{\epsilon}_j \in \mathbb{R}^{d_{\mathrm{in}}}$ are node-wise IID zero-mean vectors representing unstructured noise.

We assume the following feature covariance structure for each of these components: The class-wise signal covariance is $\Sigma_{qr} := \mathrm{Cov}(\mu_{y_j,q}, \mu_{y_j,r})$. The global shift covariance is $\Phi_{qr}:= \mathrm{Cov}(\gamma_{q}, \gamma_{r})$, and the noise covariance is $\Psi_{qr}:= \mathrm{Cov}(\epsilon_{jq}, \epsilon_{kr})$. All covariance matrices $\mat{\Sigma}, \mat{\Phi}, \mat{\Psi}$ are $d_{\mathrm{in}}\times d_{\mathrm{in}}$ semi-positive definite symmetric matrices.
This decomposition allows us to separate the class-discriminative signal from non-discriminative noisy and global shifts.

While these underlying components---mean vectors and covariances---are useful for theoretical modelling, they are often not directly observable or easily estimable, especially with high-dimensional or complex features. Therefore, our analysis focuses on model-specific quantities that capture how an MPNN responds to the features, rather than requiring explicit estimation of these feature parameters.

\subsection*{Quantifying MPNN's sensitivty to inputs: signal, noise, and global sensitivity}
To understand how an MPNN processes input features, we introduce three sensitivity measures that quantify the model's responsiveness to different input components in Eq.~\eqref{eq:featuredecomp}, independent of the specific feature values $\mat{X}$. Let $\mat{H}_i^{(\ell)}$ denote the representation of node $i$ at layer $\ell$.
\begin{itemize}
  \item \textbf{Signal sensitivity} $\sigsenexp[i]{\ell}{p,q,r}$ measures the responsiveness of the $\nth{p}$ output feature $H_{ip}^{(\ell)}$ to coherent changes in the $\nth{q}$ and $\nth{r}$ input features $X_{jq}, X_{kr}$ of nodes $j\ne k$ belonging to the \emph{same class}, i.e. $y_j=y_k$. It captures the model's ability to process class-specific information.
  \begin{align*}
    \sigsenexp[i]{\ell}{p,q,r} &:= \sum_{j, k \in V} \frac{\partial H_{ip}^{(\ell)}}{\partial X_{jq}} \frac{\partial H_{ip}^{(\ell)}}{\partial X_{kr}} \bigg|_{\mat{X}=\mat{0}}\delta_{y_j y_k}.
  \end{align*}
  \item \textbf{Noise sensitivity} $\noisesenexp[i]{\ell}{p,q,r}$ measures the responsiveness to changes in the $\nth{q}$ and $\nth{r}$ features of the input node $j$. It quantifies sensitivity to unstructured, node-specific variations.
  \begin{align*}
    \noisesenexp[i]{\ell}{p,q,r}&:= \sum_{j \in V} \frac{\partial H_{ip}^{(\ell)}}{\partial X_{jq}}\frac{\partial H_{ip}^{(\ell)}}{\partial X_{jr}}\bigg|_{\mat{X}=\mat{0}}.
  \end{align*}
  \item \textbf{Global sensitivity} $\globsenexp[i]{\ell}{p,q,r}$ measures the responsiveness to changes in the $\nth{q}$ and $\nth{r}$ features across \emph{all pairs} of input nodes $j, k$, regardless of class labels. It reflects the overall sensitivity to any input perturbation, including global shifts.
  \begin{align*}
     \globsenexp[i]{\ell}{p,q,r}&:= \sum_{j, k \in V} \frac{\partial H_{ip}^{(\ell)}}{\partial X_{jq}} \frac{\partial H_{ip}^{(\ell)}}{\partial X_{kr}}\bigg|_{\mat{X}=\mat{0}}.
  \end{align*}
\end{itemize}
These sensitivities, derived from the model's Jacobian, allow us to analyse the MPNN's behaviour without needing access to the underlying feature generation process and form the basis for understanding the SNR of the learned representations.

\subsection{Experimental setup for SNR analysis}\label{sec:experimental_details}
To empirically validate the analytic relationship between sensitivities and the SNR in Theorem~\ref{theorem:snrsensitivity}, and the analytic sensitivity condition for MPNNs outperforming FNNs in Corollary~\ref{corollary:gdl_criterion}, we conducted experiments using synthetic data whose results are shown in Figure~\ref{fig:snr_analysis}.

\paragraph{Graph generation.} We generated synthetic graphs using the 2-block planted partition SBM with $n=500$ nodes. We varied the edge homophily $h$ from 0 to 1 to create graphs ranging from purely heterophilic to purely homophilic, while the average degree was fixed at $\avg{d}=10$. 100 graphs were sampled for every configuration.

\paragraph{Feature sampling.} Node features were sampled according to Eq.~\eqref{eq:featuredecomp} with $d_{\mathrm{in}}=5$ feature dimensions. Components were drawn independently from zero-mean Gaussian distributions with diagonal covariance matrices: $\mat{\Sigma}=10^{-5} \eye{5}$, $\mat{\Psi}=10^{-4}\eye{5}$, and $\mat{\Phi}=10^{-4}\eye{5}$.

\paragraph{Models and training.} We compared two-layer GCN using the standard symmetric normalised adjacency matrix $\hat{\mat{A}}_{\mathrm{sym}}$ against a single-layer linear FNN as a baseline. Both models were trained for 100 epochs using the Adam optimiser \cite{kingma2014adam} with a learning rate of 0.01 and L2 weight decay of $5 \times 10^{-4}$. For each generated graph we performed 100 training runs to estimate the average test accuracy and SNR at the node-level.
\paragraph{Empirical SNR estimation.}
To estimate the empirical SNR for Figure~\ref{fig:snr_vs_acc}, as defined in Eq.~\eqref{eq:snr_def}, we employed a Monte Carlo approach. 
First, we generated $N_\mu = 300$ sets of class mean vectors $\left\{\vect{\mu}_{c}^{(m)}\right\}_{c\in[k]}$ for $m\in[N_\mu]$, and $N_{\gamma\epsilon} = 300$ sets of noise and global shift vectors $\left\{\vect{\gamma}^{(s)}, \left\{\vect{\epsilon}_j^{(s)}\right\}_{j\in[n]}\right\}$ for $s\in[N_{\gamma\epsilon}]$. This procedure resulted in $N_\mu \times N_{\gamma\epsilon}$ distinct feature matrices $\mat{X}^{(m,s)}$. 
We trained a single GCN on the first feature matrix sample $\mat{X}^{(1,1)}$, and used it to obtain the corresponding output representations $\left[\mat{H}^{(\ell)}\right]^{(m,s)}$.
We estimated the conditional expectation $\condexpectsub{H_{ip}^{(\ell)}}{\vect{\mu}^{(m)}}{\vect{\gamma}, \vect{\epsilon}}$ by averaging the output representations over the noise and global shifts:
\begin{align}\label{eq:cond_exp_est}
\widehat{\mathbb{E}}\left[H_{ip}^{(\ell)}\,\middle|\,\vect{\mu}^{(m)}\right] = \frac{1}{N_{\gamma\epsilon}} \sum_{s=1}^{N_{\gamma\epsilon}} \left[H_{ip}^{(\ell)}\right]^{(m,s)}.
\end{align}
The numerator of the SNR, i.e. the inter-class variance or the ``signal'' $\varsub{\condexpectsub{H_{ip}^{(\ell)}}{\vect{\mu}}{\vect{\gamma}, \vect{\epsilon}}}{\vect{\mu}}$, was estimated using the sample variance of these estimated conditional expectations:
\begin{align}\label{eq:signal_var_est}
\widehat{\mathrm{Var}}_{\vect{\mu}}\left(\condexpectsub{H_{ip}^{(\ell)}}{\vect{\mu}}{\vect{\gamma}, \vect{\epsilon}}\right) = \frac{1}{N_\mu-1} \sum_{m=1}^{N_\mu} \left(\widehat{\mathbb{E}}_{\vect{\gamma}, \vect{\epsilon}}\left[H_{ip}^{(\ell)}\,\middle|\,\vect{\mu}^{(m)}\right] - \widehat{H}_{ip}^{(\ell)}\right)^2,
\end{align}
where $\widehat{H}_{ip}^{(\ell)}$ is the mean of the estimated conditional expectations: $\widehat{H}_{ip}^{(\ell)} = \frac{1}{N_\mu} \sum_{m=1}^{N_\mu} \widehat{\mathbb{E}}_{\vect{\gamma}, \vect{\epsilon}}\left[H_{ip}^{(\ell)}\,\middle|\,\vect{\mu}^{(m)}\right].$
Similarly, the conditional variances $\varsub{H_{ip}^{(\ell)}\,\middle|\,\vect{\mu}^{(m)}}{\vect{\gamma}, \vect{\epsilon}}$ were estimated by calculating the sample variance of the representations over the noise and global shifts:
\begin{align}\label{eq:noise_var_est}
\widehat{\mathrm{Var}}_{\vect{\gamma}, \vect{\epsilon}}\left[H_{ip}^{(\ell)}\,\middle|\,\vect{\mu}^{(m)}\right] = \frac{1}{N_{\gamma\epsilon}-1} \sum_{s=1}^{N_{\gamma\epsilon}} \left(\left[H_{ip}^{(\ell)}\right]^{(m,s)} - \widehat{\mathbb{E}}_{\vect{\gamma}, \vect{\epsilon}}\left[H_{ip}^{(\ell)}\,\middle|\,\vect{\mu}^{(m)}\right]\right)^2.
\end{align}
The denominator of the SNR, i.e. the intra-class variance or ``noise'' $\expectsub{\varsub{H_{ip}^{(\ell)}\mid \vect{\mu}}{\vect{\gamma}, \vect{\epsilon}}}{\vect{\mu}}$, was estimated by averaging the conditional variance estimates from Eq.~\eqref{eq:noise_var_est} over the class means:
\begin{align}\label{eq:avg_noise_var_est}
\widehat{\mathbb{E}}_{\vect{\mu}}\left[\mathrm{Var}_{\vect{\gamma}, \vect{\epsilon}}\left(H_{ip}^{(\ell)}\,\middle|\,\vect{\mu}\right)\right] = \frac{1}{N_\mu} \sum_{m=1}^{N_\mu} \widehat{\mathrm{Var}}_{\vect{\gamma}, \vect{\epsilon}}\left(H_{ip}^{(\ell)}\,\middle|\,\vect{\mu}^{(m)}\right).
\end{align}
Finally, the empirical SNR was estimated as the ratio of the estimated numerator (Eq.~\eqref{eq:signal_var_est}) and denominator (Eq.~\eqref{eq:avg_noise_var_est}):
\begin{align}\label{eq:final_snr_est}
\widehat{\mathrm{SNR}} = \frac{\widehat{\mathrm{Var}}_{\vect{\mu}}\left(\mathbb{E}_{\vect{\gamma}, \vect{\epsilon}}\left[H_{ip}^{(\ell)}\,\middle|\,\vect{\mu}\right] \right)}{\widehat{\mathbb{E}}_{\vect{\mu}}\left[\mathrm{Var}_{\vect{\gamma}, \vect{\epsilon}}\left(H_{ip}^{(\ell)}\,\middle|\,\vect{\mu}\right)\right]}.
\end{align}

It should be noted that using the ratio of the estimators as an estimator of the ratio generally yields a biased estimator. However, with a large enough sample size, the bias scales as $O\left(\frac{1}{N_{\mu}}\right)$, and is therefore negligible for the purposes of this paper's methods. 

This Monte Carlo estimate, calculated using Eq.~\eqref{eq:final_snr_est} for a given sampled graph, was recalculated over $100$ sampled graphs (along with node-level features) to obtain the expected SNR and $95\%$ confidence intervals, which were then compared against the theoretical approximations derived from sensitivities in Theorem~\ref{theorem:snrsensitivity}.

\paragraph{Empirical sensitivity estimation.} To compute the theoretical SNR approximation and check whether the sensitivity condition in Corollary~\ref{corollary:gdl_criterion} is satisfied, we calculated the signal, noise, and global sensitivities. This required computing the Jacobian of the GCN's output $\mat{H}^{(\ell)}$ with respect to the input features $\mat{X}$ using PyTorch's automatic differentiation \cite{paszke2017automatic}. The computed Jacobians $\frac{\partial H_{ip}^{(\ell)}}{\partial X_{jq}}$ were then used in the definitions in Eqs.~\eqref{eq:signal_sensitivity} and \eqref{eq:noise_and_global_sensitivity} to obtain the sensitivity values for each node $i$ and output dimension $p$.

\subsection{Estimating higher-order homophily using imperfect class predictions}\label{sec:imperfect_predictions}

Applying the theoretical insights from our results would ideally require knowledge of the node class labels to estimate homophily. However, the true class labels are often unknown or partially observed, so we rely on predicted class labels obtained from a trained model. Inevitably, these predictions will contain errors, which means the estimated higher-order homophily will deviate from the ideal scenario that assumes perfectly known labels.

To handle misclassifications, we introduce a confusion matrix $\mat{C}\in\mathbb{R}^{k\times k}$ that captures the discrepancies between the true class labels $\vect{y}$ and the predicted labels $\hat{\vect{y}}$. Specifically, for a graph with $n$ nodes and $k$ classes, the entries of $\mat{C}$ are defined by
\begin{align}\label{eq:confusion_mat_def}
  C_{uv} \;:=\; \frac{1}{n}\sum_{i=1}^n \delta_{\hat{y}_i u}\,\delta_{y_i v},
\end{align}
where $\delta$ is the Kronecker delta function. The matrix $\mat{C}$ aggregates the fraction of nodes that are predicted as class $u$ but belong to class $v$. In the ideal case of perfect classification, $\mat{C}$ would be diagonal.

Even in the presence of errors, the key theoretical insight about optimal connectivity structures remains unchanged. The derivation of higher-order homophily using the block matrix $\mat{B}$ and the mean class degree matrix $\mat{D}$ still holds, except now we replace the unknown true labels $\vect{y}$ by the predicted labels $\hat{\vect{y}}$. In other words, when computing the optimal $\mat{B}$, we use the estimated class memberships to form the probabilities $\hat{\pi}_v$ of each predicted class $v$, so that the diagonal matrix $\mat{\Pi} = \mathrm{diag}(\hat{\vect{\pi}})$ and the associated expected adjacency $\mathbb{E}[\mat{A}]$ in the SBM formulation are constructed from predicted labels. Theorem \ref{theorem:sbm_higher_order_homophily} extends the SBM estimates of higher-order homophily using predicted labels, giving:

\begin{align}\label{eq:imperfect_labels_sbm_homophily}
  \expect{\rhomo{2\ell}{\hat{\mat{A}}_\mathrm{sym}}} \approx \trace \left(\mat{C}^T\mat{\Pi}^{-\frac{1}{2}}\hat{\mat{B}}^{2\ell}\mat{\Pi}^{-\frac{1}{2}} \mat{C}\right) + O\left(\frac{1}{\avg{d}}\right)
\end{align}

where $\hat{\mat{B}} = \mat{D}^{-\frac{1}{2}}\mat{\Pi}^{\frac{1}{2}}\mat{B}\mat{\Pi}^{\frac{1}{2}}\mat{D}^{-\frac{1}{2}}$ is a normalised version of the block matrix.

Eq. \eqref{eq:imperfect_labels_sbm_homophily} has the same form as Eq. \eqref{eq:optimal_block_connectivity_trace_value} in Theorem \ref{theorem:optimal_connectivity}, applied when $\hat{\mat{C}} :=\mat{\Pi}^{-\frac{1}{2}}\mat{C}$. Thus, the formula for the optimal block matrix $\mat{B}$ retains exactly the same form as in the case of perfectly known labels---a disjoint union of single-class and two-class-bipartite clusters, except now the classes are taken to be the \emph{predicted} classes. What does change is the optimal higher-order homophily achieved: Eq. \eqref{eq:optimal_block_connectivity_trace_value} states that the optimal higher-order homophily for a given set of predicted labels is controlled by the correlation between the true and predicted labels, as $\max_{\hat{\mat{B}}}\trace \left(\hat{\mat{C}}^T \hat{\mat{B}}^{ \ell} \hat{\mat{C}}\right) = \trace \left(\hat{\mat{C}}^T \hat{\mat{C}}\right) = \trace \left(\mat{C}^T \mat{\Pi}^{-1} \mat{C}\right)$, which is higher for more diagonal confusion matrices $\mat{C}$. Therefore, more accurate partitions of predicted classes---which are more closely correlated with the true classes---result in larger optimal higher-order homophily.

In this way, the theoretical framework can be applied to real-world settings with imperfect class label information, enabling practitioners to estimate, rewire, and optimise for higher-order homophily based on model-inferred labels, as we demonstrate in the following subsection.

\subsection{BRIDGE: Block Resampling from Inference-Derived Graph Ensembles}\label{sec:methods_BRIDGE}

Theorem~\ref{theorem:optimal_connectivity} shows that, for a fixed class assignment, the SBM graph that maximises higher-order homophily is a union of single-class and two-class bipartite clusters. BRIDGE resamples a new graph  so that its connectivity approximates this optimal pattern, even when the true class labels are unknown.

\paragraph{Overview.}
BRIDGE alternates between two steps:
\begin{enumerate}
    \item Class-prediction: 
   Use a GCN on the current graph $G^{(m)}$---initially trained on the original graph $G^{(0)}:=G$ and then retrained once more on the first iteration's sampled graph $G^{(1)}$ --- to infer predicted classes $\hat{\vect{y}}^{(m)}$ at iteration $m$, which give noisy estimates of the true classes.
\item Resampling:
   Use $\hat{\vect{y}}^{(m)}$ to build the optimal block-probability matrix  
   \[
     \mat B_{\mathrm{opt}}
     \;=\;
     \frac{\langle d\rangle}{k}\,
     \mat\Pi^{-1}\,\mat P_k\,\mat\Pi^{-1},
   \]
   where $\langle d\rangle$ is a target mean degree,  
   $\mat\Pi=\operatorname{diag}(\hat\pi_1,\dots,\hat\pi_k)$ holds the predicted class proportions,  
   and $\mat P_k$ is a symmetric permutation matrix (treated as a hyperparameter). Sample a new adjacency matrix
   \[
     \bigl[\mat A_{\mathrm{opt}}^{(m+1)}\bigr]_{ij}
     \sim
     \operatorname{Bernoulli}\!\left(
       \frac{1}{n}\,
       \bigl[\mat B_{\mathrm{opt}}\bigr]_{\hat y^{(m)}_i\,\hat y^{(m)}_j}
     \right)
   \]
   to obtain the corresponding new graph $G^{(m+1)}$.
\end{enumerate}

The procedure stops after a preset number of iterations $M$.  
Because better class predictions raise the optimal higher-order homophily in \eqref{eq:optimal_block_connectivity_trace_value}, and higher-order homophily in turn improves predictions, these two steps form a positive feedback loop.

\paragraph{Hyperparameters.}
In addition to the standard GCN hyperparameters, we search over  
(i) the permutation matrix $\mat P_k$ (ordered by expected edge homophily),  
(ii) the target mean degree $\langle d\rangle$, and  
(iii) the number of BRIDGE iterations $M$.

\paragraph*{Experimental setup.}
We implement this hyperparameter search automatically using Optuna \cite{optuna_2019}, with 100 trials. The optimal hyperparameters for the baseline GCN and BRIDGE, along with benchmark SDRF and DIGL rewiring methods  are presented in \nameref{sec:appendix_C}, with the baseline GCN's hyperparameters given in Tables \ref{tab:hp_base_synth} and \ref{tab:hp_base_real}, the BRIDGE hyperparameters presented in Tables \ref{tab:hp_rewire_synth} and \ref{tab:hp_rewire_real}, the SDRF hyperparameters in Tables \ref{tab:hp_sdrf_synth} and \ref{tab:hp_sdrf_real}, and the DIGL hyperparameters in Tables \ref{tab:hp_digl_synth} and \ref{tab:hp_digl_real}. We report the choice of permutation matrix hyperparameter $\mat{P}_k$ in cycle notation which writes a permutation as a list of parentheses, each showing elements sent to the next in order until the first reappears. For the DIGL rewiring method we used the personalised PageRank diffusion.  The mean accuracy score is calculated over 10 random 60\%/20\%/20\% train/test/validation splits. The synthetic datasets are sampled from a planted partition SBM, with $2$ equal sized classes, expected mean degree of $\avg{d}=10$, and varying expected edge homophily from $h=0.35$ to $h=0.65$ to get a full range of accuracies (outside of this interval accuracies saturate at $100\%$).

\paragraph*{Implementation details.}

All experiments are implemented using the Deep Graph Library package \cite{wang2020deepgraphlibrarygraphcentric} and conducted on the Imperial College London HPC \cite{imperial_hpc} with NVIDIA A100 GPUs. Code for reproducing the experiments is available at \href{https://github.com/jr419/BRIDGE}{\texttt{https://github.com/jr419/BRIDGE}}.

\section{Acknowledgements}
J.R. is supported by the UKRI CDT in AI for Healthcare \href{http://
ai4health.io}{\texttt{http://ai4health.io}} [EP/S023283/1]. 

\newpage
\bibliographystyle{naturemag}
{\small
\bibliography{reference}
}
\newglossaryentry{G}{name=\ensuremath{G}, 
 description={Graph $G = (V, E)$ with node set $V:=[n]$ consisting of $n$ nodes and possibly directed edge set $E:=\{(i,j)\in V^2\,:\,i\text{ and }j\text{ are directly connected}\}$}, 
 sort=1}

\newglossaryentry{MPNN}{name=MPNN, 
 description={Message Passing Neural Network; a neural network architecture that aggregates and propagates information along edges of a graph}, 
 sort=2}

\newglossaryentry{FNN}{name=FNN, 
 description={Feedforward Neural Network; a neural network architecture that updates information purely based on the nodes own features}, 
 sort=3}

\newglossaryentry{GCN}{name=GCN, 
 description={Graph Convolutional Network; a type of MPNN that applies convolution-like operations to aggregate information on graphs}, 
 sort=4}

\newglossaryentry{SGC}{name=SGC, 
 description={Simple Graph Convolution; a simple type of GCN that uses linear aggregation}, 
 sort=5}

\newglossaryentry{homophily}{name=Homophily, 
 description={The tendency of nodes to connect to others with similar attributes (e.g., with same class label)}, 
 sort=6}

\newglossaryentry{oversquashing}{name=Oversquashing, 
 description={A phenomenon where information from many nodes is compressed into a fixed-size vector, thereby ``squashing'' the signal}, 
 sort=7}

\newglossaryentry{underreaching}{name=Underreaching, 
 description={A phenomenon where information from distant nodes fails to reach a target node due to few short-distance paths}, 
 sort=8}

\newglossaryentry{Jacobian}{name=Jacobian, 
 description={The matrix of first-order partial derivatives of a vector-valued function; used to measure local sensitivity}, 
 sort=9}

\newglossaryentry{yi}{name=\ensuremath{y_i}, 
 description={Class label of node~\(\,i\)}, 
 sort=12}

\newglossaryentry{A}{name=\ensuremath{\mat{A}}, 
 description={Adjacency matrix of the graph}, 
 sort=13}

\newglossaryentry{Ahat}{name=\ensuremath{\hat{\mat{A}}}, 
 description={Graph shift operator, or normalised adjacency matrix}, 
 sort=14}

\newglossaryentry{D}{name=\ensuremath{\mat{D}}, 
 description={Diagonal degree matrix}, 
 sort=15}

\newglossaryentry{X}{name=\ensuremath{\mat{X}}, 
 description={Matrix of node features}, 
 sort=16}

\newglossaryentry{Hi}{name=\ensuremath{\mat{H}_i^{(\ell)}}, 
 description={Representation (embedding) of node~\(\,i\) at layer~\(\ell\)}, 
 sort=17}

\newglossaryentry{din}{name=\ensuremath{d_\mathrm{in}}, 
 description={Dimension of input features}, 
 sort=18}

\newglossaryentry{dout}{name=\ensuremath{d_\mathrm{out}}, 
 description={Dimension of output features}, 
 sort=19}

\newglossaryentry{mu}{name=\ensuremath{\vect{\mu}_{c}}, 
 description={Mean (class-specific) signal vector for class~\(c\)}, 
 sort=20}

\newglossaryentry{gamma}{name=\ensuremath{\gamma}, 
 description={Global shift or mean of node features}, 
 sort=21}

\newglossaryentry{epsilon}{name=\ensuremath{\vect{\epsilon}_{j}}, 
 description={IID noise vector for node~\(\,j\)}, 
 sort=22}

\newglossaryentry{sigma}{name=\ensuremath{\mat{\Sigma}}, 
 description={Covariance matrix of class-specific signals in the features}, 
 sort=23}

\newglossaryentry{phi}{name=\ensuremath{\mat{\Phi}}, 
 description={Covariance matrix of global shift in features}, 
 sort=24}

\newglossaryentry{psi}{name=\ensuremath{\mat{\Psi}}, 
 description={Covariance matrix of noise in features}, 
 sort=25}

\newglossaryentry{rho}{name=\ensuremath{\rho}, 
 description={Local noise proportion---a parameter combining all variance components for IID feature dimensions, characterising the baseline difficulty of classifying feature sets for MPNNs}, 
 sort=26}

\newglossaryentry{Ssig}{name=\ensuremath{\sigsenexp[i]{\ell}{p,q,r}}, 
 description={Signal sensitivity of node~\(\,i\) at layer~\(\ell\) for output dimension~\(p\) with respect to input dimensions~\(q,r\); measures response to coherent class-specific changes}, 
 sort=27}

\newglossaryentry{Snoise}{name=\ensuremath{\noisesenexp[i]{\ell}{p,q,r}}, 
 description={Noise sensitivity of node~\(\,i\) at layer~\(\ell\) for output dimension~\(p\) with respect to input dimensions~\(q,r\); measures response to unstructured, local, IID noise}, 
 sort=28}

\newglossaryentry{Sglob}{name=\ensuremath{\globsenexp[i]{\ell}{p,q,r}}, 
 description={Global sensitivity of node~\(\,i\) at layer~\(\ell\) for output dimension~\(p\) with respect to input dimensions~\(q,r\); measures response to global shifts in the input}, 
 sort=29}

\newglossaryentry{SNR}{name=\ensuremath{\snrsub{H_{ip}^{(\ell)}}{}}, 
 description={Signal-to-noise ratio of the representation of node~\(\,i\) for output dimension~\(p\) at layer~\(\ell\)}, 
 sort=30}


\newglossaryentry{rhomor}{name=\ensuremath{\rhomo{r}{\hat{\mat{A}}}}, 
 description={higher- or \(r\)-order homophily based on the graph shift operator \(\hat{\mat{A}}\); measures the extent to which nodes within \(r\) hops have the same class label}, 
 sort=32}

\newglossaryentry{rselfconnr}{name=\ensuremath{\rselfconn{r}{\hat{\mat{A}}}}, 
 description={higher- or \(r\)-order self-connectivity of the graph, averaging diagonal entries of \(\hat{\mat{A}}^r\)}, 
 sort=33}

\newglossaryentry{rconnr}{name=\ensuremath{\rconn{r}{\hat{\mat{A}}}}, 
 description={higher- or \(r\)-order total connectivity of the graph, averaging all entries of \(\hat{\mat{A}}^r\)}, 
 sort=34}

\newglossaryentry{localrhomo}{name=\ensuremath{\localhomo{r,s}{\hat{\mat{A}}}{i}}, 
 description={Class-bottlenecking score at node~\(\,i\); measure of the mixing of same-class signals over \(r\) and \(s\) hops}, 
 sort=35}

\newglossaryentry{localrselfconn}{name=\ensuremath{\localselfconn{r,s}{\hat{\mat{A}}}{i}}, 
 description={Self-bottlenecking score at node~\(\,i\); measure of the mixing of same-node signals over \(r\) and \(s\) hops}, 
 sort=36}

\newglossaryentry{localrconn}{name=\ensuremath{\localconn{r,s}{\hat{\mat{A}}}{i}}, 
 description={Total-bottlenecking score at node~\(\,i\); measure of the mixing of all node signals over \(r\) and \(s\) hops}, 
 sort=37}

\newglossaryentry{B}{name=\ensuremath{\mat{B}}, 
 description={Block probability matrix in the stochastic block model (SBM)}, 
 sort=38}

\newglossaryentry{pi}{name=\ensuremath{\vect{\pi}}, 
 description={Vector of expected class proportions; each entry is the probability of a node belonging to a given class}, 
 sort=39}

\newglossaryentry{Pi}{name=\ensuremath{\mat{\Pi}}, 
 description={$\diag{\vect{\pi}}$ i.e. diagonal matrix of expected class proportions}, 
 sort=40}

\newglossaryentry{hatB}{name=\ensuremath{\hat{\mat{B}}}, 
 description={Normalised block matrix \(\mat{D}^{-\frac{1}{2}}\mat{\Pi}^{\frac{1}{2}}\mat{B}\mat{\Pi}^{\frac{1}{2}}\mat{D}^{-\frac{1}{2}}\) for the SBM, where $\mat{D}:=\diag{\mat{B}\vect{\pi}}$ is the diagonal matrix of expected class-wise degrees}, 
 sort=41}

\newglossaryentry{Cmat}{name=\ensuremath{\mat{C}}, 
 description={Confusion matrix relating true and predicted class labels}, 
 sort=42}

\newglossaryentry{permmat}{name=\ensuremath{\mat{P_k}}, 
 description={Symmetric $k\times k$ permutation matrix.}, 
 sort=43}

\newglossaryentry{updatefunc}{name=\ensuremath{U_\ell(\cdot,\cdot)}, 
 description={The update function of the message passing function}, 
 sort=44}

\newglossaryentry{messagefunc}{name=\ensuremath{M_\ell(\cdot,\cdot)}, 
 description={The message function of the message passing function}, 
 sort=45}

\newglossaryentry{alphaone}{name=\ensuremath{\alpha_1}, 
 description={Upper bound on the norm of the derivative of the update function $U_\ell(\cdot,\cdot)$ with respect to its first argument i.e. a node's own representation}, 
 sort=46}

\newglossaryentry{alphatwo}{name=\ensuremath{\alpha_2}, 
 description={Upper bound on the norm of the derivative of the update function $U_\ell(\cdot,\cdot)$ with respect to its second argument i.e. a node's neighbourhood-aggregated message input}, 
 sort=47}

\newglossaryentry{betaone}{name=\ensuremath{\beta_1}, 
 description={Upper bound on the norm of the derivative of the message function $M_\ell(\cdot,\cdot)$ with respect to its first argument i.e. a node's own representation}, 
 sort=48}

\newglossaryentry{betatwo}{name=\ensuremath{\beta_2}, 
 description={Upper bound on the norm of the derivative of the message function $M_\ell(\cdot,\cdot)$ with respect to its second argument i.e. a node's neighbour's features. Denoted as $\beta$ when $M_\ell$ does not depend on its first argument}, 
 sort=49}

\newglossaryentry{Wmat}{name=\ensuremath{\mat{W}^{(\ell)}}, 
 description={Weight matrix of layer $\ell$ of an MPNN}, 
 sort=50}

\newglossaryentry{lambda}{name=\ensuremath{\lambda_{ij}}, 
 description={Shortest path length between nodes~\(\,i\) and~\(\,j\)}, 
 sort=51}

\newglossaryentry{avgdeg}{name=\ensuremath{\langle d \rangle}, 
 description={Average degree of nodes in the graph}, 
 sort=52}

 \newglossaryentry{edgehomophily}{name=\ensuremath{h}, 
 description={Edge homophily of the graph}, 
 sort=52}

\newglossaryentry{WG}{name=\ensuremath{W_G}, 
 description={Set of all walks in the graph $G$ (used when analysing message propagation paths)}, 
 sort=53}

\glsaddall
\newgeometry{left=0.2in, right=0.2in, top=1in, bottom=1in} 
\twocolumn
\printglossary[style=super,title=Glossary]
\onecolumn
\restoregeometry
\appendix

\section{Appendix A: Extended theorems}\label{sec:appendix_A}

This section provides the full statements of all theorems, lemmas, and corollaries presented in the main text, along with interpretations to clarify their significance and implications.

\subsection*{Signal-to-noise ratio and input sensitivity}

\snrsensitivity*

\paragraph*{Interpretation.} Theorem \ref{theorem:snrsensitivity} provides a fundamental decomposition of the SNR achieved by an MPNN. It shows how the SNR, which measures the distinguishability of class-specific signals relative to noise, is determined both by the quality of input features---captured by the covariance matrices $\mat{\Sigma}, \mat{\Phi}, \mat{\Psi}$---and by the MPNN's architecture and the graph structure, as captured by the feature-agnostic sensitivity measures $\sigsenexp[i]{\ell}{p,q,r}$, $\globsenexp[i]{\ell}{p,q,r}$, $\noisesenexp[i]{\ell}{p,q,r}$. Specifically, high signal sensitivity $\sigsenexp[i]{\ell}{p,q,r}$ amplifies the class-discriminative parts of the signal $\mat{\Sigma}$, while high global sensitivity $\globsenexp[i]{\ell}{p,q,r}$ and noise sensitivity $\noisesenexp[i]{\ell}{p,q,r}$ amplify the non-discriminative global shifts $\mat{\Phi}$ and node-specific noise $\mat{\Psi}$, respectively. This theorem establishes a quantitative link between the model's input processing abilities (sensitivities) and the resulting quality of learned representations (SNR), forming the basis for understanding when and how MPNNs can enhance class separability beyond what is present in the raw input features. The approximation holds well when features are concentrated near the origin, allowing for analysis based on the model's local behaviour via Jacobians.\\

\sensitivityratiocondition*

\paragraph*{Interpretation.} Corollary \ref{corollary:gdl_criterion} provides the precise condition under which an MPNN is guaranteed to improve the SNR compared to a simple feedforward network (FNN) baseline, assuming IID feature dimensions. The condition highlights that an MPNN outperforms an FNN when its signal sensitivity $\sigsenexp[i]{\ell}{p,q,r}$ sufficiently outweighs a convex combination of its noise sensitivity $\noisesenexp[i]{\ell}{p,q,r}$ and global sensitivity $\globsenexp[i]{\ell}{p,q,r}$. We note that, due to the semipositive definiteness of the sensitivities in Eq. \eqref{eq:sensitivities_intro}, these sums over $q$ are always non-negative.  The local noise proportion $\rho$ controls the difficulty of the classification task on a particular feature distribution: In the high global sensitivity regime where $\globsenexp[i]{\ell}{p,q,q}>\noisesenexp[i]{\ell}{p,q,q}$ (such as GCNs with low-pass graph filters), larger $\rho$ makes the condition easier to satisfy, but in the high local sensitivity regime where $\globsenexp[i]{\ell}{p,q,q}<\noisesenexp[i]{\ell}{p,q,q}$ (such as GCNs with high-pass graph filters), smaller $\rho$ makes the condition easier to satisfy. We can see that in the high global sensitivity regime, low global noise relative to local noise improves message passing benefit over feedforward models, and vice versa for high local sensitivity regime. This corollary provides a localised, feature-independent diagnostic tool for potential MPNN performance, as validated in Figure \ref{fig:snr_node_predictions}): by calculating the sensitivities for a given node and MPNN architecture, one can predict whether leveraging the graph structure via message passing is likely to improve the representation quality for that specific node, compared to just using its own features. It formalises the intuition that MPNNs help when they selectively amplify class signals more than noisy or background variations. The condition surprisingly does not depend on the class-wise variance $\sigma^2$, suggesting that the degree to which message passing may improve class-specific separability over FNNs does not depend on class-wise signal quality, but on having the appropriate \emph{kind} of noise.

\subsection*{Weighted homophily and sensitivity bounds}

\begin{restatable}[Weighted homophily bounds sensitivity]{theorem}{sensitivitybound}
\label{theorem:general_signal_sensntivity_bound}
  Let $\sigsenexp[i]{\ell}{p,q,r}$, $\globsenexp[i]{\ell}{p,q,r}$, $\noisesenexp[i]{\ell}{p,q,r}$ be the signal, global and noise sensitivities respectively of the $\nth{p}$ output feature dimension of node $i$ to input feature dimensions $q,r$ at the $\nth{\ell}$ layer of an MPNN that uses the graph shift operator $\hat{\mat{A}}$. Assuming that there exist constants $\alpha_{1}, \alpha_{2}, \beta$ such that $\forall r\in[\ell]$ the update and message functions satisfy $\norm{ \nabla_1 U_r}\leq \alpha_{1}$, $\norm{ \nabla_2 U_r}\leq \alpha_{2}$, and both $\norm{\nabla_1 M_r}, \norm{\nabla_2 M_r} \leq \beta$, the sensitivities can be bounded in terms of local bottlenecking scores:
  \begin{align*}
    &\abs{\sigsenexp[i]{\ell}{p,q,r}} \le \sum_{s=0}^{\ell}\sum_{t=0}^{\ell} \binom{\ell}{s}\,\binom{\ell}{t}\,\alpha_{1}^{2\ell-s-t}\,(\alpha_{2}\beta)^{s+t} \localhomo{s,t}{\hat{\mat{A}}+\diag{\hat{\mat{A}}\ones{n}}}{i},\\
    &\abs{\globsenexp[i]{\ell}{p,q,r}} \le \sum_{s=0}^{\ell}\sum_{t=0}^{\ell} \binom{\ell}{s}\,\binom{\ell}{t}\,\alpha_{1}^{2\ell-s-t}\,(\alpha_{2}\beta)^{s+t} \localconn{s,t}{\hat{\mat{A}}+\diag{\hat{\mat{A}}\ones{n}}}{i},\\
    &\abs{\noisesenexp[i]{\ell}{p,q,r}} \le \sum_{s=0}^{\ell}\sum_{t=0}^{\ell} \binom{\ell}{s}\,\binom{\ell}{t}\,\alpha_{1}^{2\ell-s-t}\,(\alpha_{2}\beta)^{s+t} \localselfconn{s,t}{\hat{\mat{A}}+\diag{\hat{\mat{A}}\ones{n}}}{i},
  \end{align*}
  where $\localhomo{s,t}{\cdot}{i}$, $\localconn{s,t}{\cdot}{i}$, and $\localselfconn{s,t}{\cdot}{i}$ are the class-bottlenecking score, total-bottlenecking score, and self-bottlenecking score defined in Eq.~\eqref{eq:p_order_local_homophily}.
  Specifically for \emph{isotropic} MPNN models, where $\norm{\nabla_1 M_r}=0$ i.e. messages depend only on the source node's features:

  \begin{align*}
    &\abs{\sigsenexp[i]{\ell}{p,q,r}} \le \sum_{s=0}^{\ell}\sum_{t=0}^{\ell} \binom{\ell}{s}\,\binom{\ell}{t}\,\alpha_{1}^{2\ell-s-t}\,(\alpha_{2}\beta)^{s+t} \localhomo{s,t}{\hat{\mat{A}}}{i},\\
    &\abs{\globsenexp[i]{\ell}{p,q,r}} \le \sum_{s=0}^{\ell}\sum_{t=0}^{\ell} \binom{\ell}{s}\,\binom{\ell}{t}\,\alpha_{1}^{2\ell-s-t}\,(\alpha_{2}\beta)^{s+t} \localconn{s,t}{\hat{\mat{A}}}{i},\\
    &\abs{\noisesenexp[i]{\ell}{p,q,r}} \le \sum_{s=0}^{\ell}\sum_{t=0}^{\ell} \binom{\ell}{s}\,\binom{\ell}{t}\,\alpha_{1}^{2\ell-s-t}\,(\alpha_{2}\beta)^{s+t} \localselfconn{s,t}{\hat{\mat{A}}}{i}.
  \end{align*}
\end{restatable}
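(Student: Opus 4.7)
The plan is to unroll the MPNN recursion of Eq.~\eqref{eq:mpnn} layer by layer via the chain rule, collapse the Jacobian chain into a scalar/vectorial recursion using the stated operator-norm bounds, and then recognise the resulting per-node upper bound as a binomial expansion that factorises cleanly through the bottlenecking scores of Eq.~\eqref{eq:p_order_local_homophily}. To this end, I would first differentiate the update $\mat{H}_i^{(\ell+1)} = U_\ell(\mat{H}_i^{(\ell)}, \sum_k \hat{A}_{ik}\,M_\ell(\mat{H}_i^{(\ell)}, \mat{H}_k^{(\ell)}))$ with respect to $X_{jq}$. The chain rule produces three kinds of contributions: a ``stay'' term from $\nabla_1 U_\ell$, a ``move'' term from $\nabla_2 U_\ell \cdot \hat{A}_{ik}\,\nabla_2 M_\ell$ acting on neighbours $\mat{H}_k^{(\ell)}$, and---only in the anisotropic case---an extra term from $\nabla_2 U_\ell \cdot \hat{A}_{ik}\,\nabla_1 M_\ell$ acting again on $\mat{H}_i^{(\ell)}$ itself.

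Next I would pass to norms: let $v_k^{(\ell)} := \bigl\|\partial \mat{H}_k^{(\ell)}/\partial X_{jq}\bigr\|$, apply sub-multiplicativity of the operator norm together with the hypotheses $\|\nabla_1 U_r\|\le \alpha_1$, $\|\nabla_2 U_r\|\le \alpha_2$, $\|\nabla_a M_r\|\le \beta$, and use the triangle inequality to obtain the entrywise linear recursion
\begin{equation*}
 v_i^{(\ell+1)} \;\le\; \alpha_1\, v_i^{(\ell)} \;+\; \alpha_2\beta \sum_k \hat{A}_{ik}\,v_k^{(\ell)} \;+\; \mathbf{1}_{\text{aniso}}\cdot \alpha_2\beta\,[\hat{\mat{A}}\ones{n}]_i\, v_i^{(\ell)}.
\end{equation*}
In the isotropic case ($\nabla_1 M_r = 0$) the third term vanishes, and the recursion is a single application of the operator $\alpha_1\mat{I} + \alpha_2\beta\,\hat{\mat{A}}$; in the general case the extra self-term is absorbed by replacing $\hat{\mat{A}}$ with $\hat{\mat{A}}+\diag{\hat{\mat{A}}\ones{n}}$. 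Writing $\mat{G}$ for whichever of these two operators applies, iterating $\ell$ times from the base case $v_i^{(0)} = \delta_{ij}$ (since $\mat{H}^{(0)}=\mat{X}$), and using that $\mat{I}$ commutes with $\mat{G}$ so the binomial theorem applies, yields
\begin{equation*}
 \Bigl|\tfrac{\partial H_{ip}^{(\ell)}}{\partial X_{jq}}\Bigr| \;\le\; v_i^{(\ell)} \;\le\; \sum_{s=0}^{\ell} \binom{\ell}{s}\,\alpha_1^{\ell-s}(\alpha_2\beta)^{s}\,[\mat{G}^s]_{ij}.
\end{equation*}

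Finally I would substitute this entrywise bound into the definitions of $\sigsenexp[i]{\ell}{p,q,r}$, $\globsenexp[i]{\ell}{p,q,r}$, $\noisesenexp[i]{\ell}{p,q,r}$ in Eqs.~\eqref{eq:signal_sensitivity} and \eqref{eq:noise_and_global_sensitivity}, bound each term by its absolute value, and expand the product of the two bounds (indexed by $s$ and $t$). Interchanging the order of summation separates the $s,t$-sum from the sum over source nodes, and the inner sums $\sum_{j,k}[\mat{G}^s]_{ij}[\mat{G}^t]_{ik}\delta_{y_j y_k}$, $\sum_{j,k}[\mat{G}^s]_{ij}[\mat{G}^t]_{ik}$, and $\sum_j [\mat{G}^s]_{ij}[\mat{G}^t]_{ij}$ are exactly $\localhomo{s,t}{\mat{G}}{i}$, $\localconn{s,t}{\mat{G}}{i}$, and $\localselfconn{s,t}{\mat{G}}{i}$ by Eq.~\eqref{eq:p_order_local_homophily}, yielding the two stated double-sum bounds. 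The main obstacle is the anisotropic bookkeeping: one must correctly notice that the $\nabla_1 M_\ell$ contribution, after being weighted by $\hat{A}_{ik}$ and summed over $k$, produces a \emph{diagonal} row-sum correction $\diag{\hat{\mat{A}}\ones{n}}$ rather than a second copy of $\hat{\mat{A}}$, and that this correction is the minimal modification needed so that the same scalar recursion---and hence the same binomial identity---goes through uniformly for both cases.
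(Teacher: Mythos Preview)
Your proposal is correct and follows essentially the same approach as the paper: the paper factors out your chain-rule/norm recursion as a separate Jacobian-bound lemma (deriving $\|[\nabla\mat{H}_i^{(\ell)}]_j\| \le [(\alpha_1\mat{I}+\alpha_2\beta\,\mat{G})^\ell]_{ij}$ with $\mat{G}=\hat{\mat{A}}+\diag{\hat{\mat{A}}\ones{n}}$ or $\hat{\mat{A}}$), and then invokes it exactly as you do---binomial-expanding the matrix power, substituting into Eqs.~\eqref{eq:signal_sensitivity}--\eqref{eq:noise_and_global_sensitivity}, and identifying the inner sums with the bottlenecking scores. Your observation about the anisotropic $\nabla_1 M_\ell$ term producing the diagonal row-sum correction is precisely the key bookkeeping step in the paper's lemma.
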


\paragraph*{Interpretation.} Theorem \ref{theorem:general_signal_sensntivity_bound} establishes a fundamental limit on the achievable sensitivities of an MPNN, imposed by the graph structure itself, independent of specific features. It shows that the signal sensitivity $\sigsenexp[i]{\ell}{p,q,r}$, which drives the amplification of class-distinguishing information (as seen in Theorem \ref{theorem:snrsensitivity}), is locally bounded by the class-bottlenecking score $\localhomo{s,t}{\cdot}{i}$ at the target node $i$. This score---defined in Eq.~\eqref{eq:p_order_local_homophily}---measures the aggregate influence of pairs of same-class source nodes reaching node $i$ via paths of lengths $s$ and $t$. A low class-bottlenecking score directly implies a low upper bound on signal sensitivity, meaning that if the graph structure prevents same-class signals from effectively converging at node $i$---due to a lack of paths reaching $i$ or lack of breadth along paths---no MPNN architecture satisfying these derivative bounds can overcome this limitation to achieve high signal sensitivity at that node. Similarly, the total-bottlenecking score $\localconn{s,t}{\cdot}{i}$ and self-bottlenecking score $\localselfconn{s,t}{\cdot}{i}$ bound the global and noise sensitivities, respectively. The theorem draws a distinction between general (anisotropic) MPNNs and isotropic ones (like GCN), showing different dependencies on the graph shift operator. It identifies the class-bottlenecking score as the key structural quantity governing the local potential for signal amplification in MPNNs. Averaging these bounds over all graph nodes leads to the global bounds in Eq.~\eqref{eq:avg_sensitivity_bound} involving higher-order homophily.\\

\begin{restatable}{corollary}{avgsensbound}\label{corollary:avg_sensitivity_bounds}
  Under the assumptions of Theorem \ref{theorem:general_signal_sensntivity_bound}, and assuming a symmetric graph shift operator $\hat{\mat{A}}$, the average sensitivities over all nodes $i$ are bounded by higher-order homophily and connectivity measures defined in Eq.~\eqref{eq:p_order_connectivity}:
  \begin{align*}
  \abs{\overbar{\sigsenexp[p]{\ell}{q,r}}} &\leq \sum_{u=0}^{2\ell} \binom{2\ell}{u}\,\alpha_1^{2\ell-u} (\alpha_2\beta)^u\, \rhomo{u}{\hat{\mat{A}}}, \\
  \abs{\overbar{\globsenexp[p]{\ell}{q,r}}} &\leq \sum_{u=0}^{2\ell} \binom{2\ell}{u}\,\alpha_1^{2\ell-u} (\alpha_2\beta)^u\, \rconn{u}{\hat{\mat{A}}}, \\
  \abs{\overbar{\noisesenexp[p]{\ell}{q,r}}} &\leq \sum_{u=0}^{2\ell} \binom{2\ell}{u}\,\alpha_1^{2\ell-u} (\alpha_2\beta)^u\, \rselfconn{u}{\hat{\mat{A}}},
  \end{align*}
  where $\overbar{\phantom{(}\cdot\phantom{)}}$ denotes the average over nodes $i$.
\end{restatable}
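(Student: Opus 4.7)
The plan is to derive the corollary directly from Theorem~\ref{theorem:general_signal_sensntivity_bound} by averaging the local sensitivity bounds over the $n$ nodes, rewriting the averages of local bottlenecking scores as higher-order homophily/connectivity measures using Eq.~\eqref{eq:bottleneck_homophily_connection} and its analogues, and then collapsing the resulting double sum over path lengths $(s,t)$ into a single sum over total length $u=s+t$ via Vandermonde's identity.

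First, I would apply the triangle inequality $\bigl|\overbar{\sigsenexp[p]{\ell}{q,r}}\bigr| \le \frac{1}{n}\sum_i \bigl|\sigsenexp[i]{\ell}{p,q,r}\bigr|$ (and analogously for $\overbar{T^{(\ell)}}$ and $\overbar{N^{(\ell)}}$), then substitute the per-node bounds from Theorem~\ref{theorem:general_signal_sensntivity_bound} into the right-hand side. Since the coefficients $\binom{\ell}{s}\binom{\ell}{t}\alpha_1^{2\ell-s-t}(\alpha_2\beta)^{s+t}$ do not depend on $i$, they pull out of the average, leaving quantities of the form $\frac{1}{n}\sum_i \localhomo{s,t}{\hat{\mat{A}}}{i}$ (and similarly for $\localconn{}{}{}$, $\localselfconn{}{}{}$).

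Next, I would invoke the already-established correspondence in Eq.~\eqref{eq:bottleneck_homophily_connection}. For symmetric $\hat{\mat{A}}$, the summation $\sum_i [\hat{\mat{A}}^s]_{ij}[\hat{\mat{A}}^t]_{ik} = \sum_i [\hat{\mat{A}}^s]_{ji}[\hat{\mat{A}}^t]_{ik} = [\hat{\mat{A}}^{s+t}]_{jk}$ collapses cleanly, yielding $\frac{1}{n}\sum_i \localhomo{s,t}{\hat{\mat{A}}}{i} = \rhomo{s+t}{\hat{\mat{A}}}$, and by the analogous derivations using Eq.~\eqref{eq:p_order_connectivity}, $\frac{1}{n}\sum_i \localconn{s,t}{\hat{\mat{A}}}{i} = \rconn{s+t}{\hat{\mat{A}}}$ and $\frac{1}{n}\sum_i \localselfconn{s,t}{\hat{\mat{A}}}{i} = \rselfconn{s+t}{\hat{\mat{A}}}$. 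Substituting in, the bound becomes a double sum indexed by $(s,t)\in\{0,\dots,\ell\}^2$ where the remaining dependence on $(s,t)$ inside the summand is only through $u:=s+t$.

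Finally, I would re-index by $u$ and use Vandermonde's identity $\sum_{s+t=u,\,0\le s,t\le \ell}\binom{\ell}{s}\binom{\ell}{t} = \binom{2\ell}{u}$ to consolidate the double sum into the single sum $\sum_{u=0}^{2\ell}\binom{2\ell}{u}\alpha_1^{2\ell-u}(\alpha_2\beta)^u\,\rhomo{u}{\hat{\mat{A}}}$, giving exactly the claimed expression (and likewise for the other two sensitivities). The main technical point — and the only place where the symmetry hypothesis matters — is the collapse $\sum_i [\hat{\mat{A}}^s]_{ji}[\hat{\mat{A}}^t]_{ik} = [\hat{\mat{A}}^{s+t}]_{jk}$ in the averaging step; for an asymmetric shift operator one would instead obtain $[(\hat{\mat{A}}^s)^T\hat{\mat{A}}^t]_{jk}$, which does not in general equal $[\hat{\mat{A}}^{s+t}]_{jk}$ and prevents the reindexing by $u=s+t$. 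Everything else is bookkeeping: linearity of averaging, triangle inequality, and a combinatorial identity.
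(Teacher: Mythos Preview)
Your proposal is correct and follows essentially the same approach as the paper's proof: average the per-node bounds from Theorem~\ref{theorem:general_signal_sensntivity_bound} using the triangle inequality, collapse $\frac{1}{n}\sum_i \localhomo{s,t}{\hat{\mat{A}}}{i}$ to $\rhomo{s+t}{\hat{\mat{A}}}$ via symmetry of $\hat{\mat{A}}$, and re-index with Vandermonde's identity. Your explicit remark on where the symmetry hypothesis enters (the collapse $\sum_i [\hat{\mat{A}}^s]_{ji}[\hat{\mat{A}}^t]_{ik} = [\hat{\mat{A}}^{s+t}]_{jk}$, versus $[(\hat{\mat{A}}^s)^T\hat{\mat{A}}^t]_{jk}$ otherwise) is a nice clarification that the paper only gestures at in the surrounding discussion.
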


\textbf{Interpretation:} Corollary \ref{corollary:avg_sensitivity_bounds} translates the local bounds from Theorem \ref{theorem:general_signal_sensntivity_bound} into global bounds on the average sensitivities across the entire graph. It shows that the average signal sensitivity is restricted by the graph's higher-order homophily $\rhomo{u}{\hat{\mat{A}}}$ up to order $2\ell$. This means that graphs lacking sufficient multi-hop connectivity between same-class nodes (i.e., low $\rhomo{u}{\hat{\mat{A}}}$ for relevant $u$) will inherently limit the average signal sensitivity achievable by any $\ell$-layer MPNN. This provides a graph-wide explanation for why MPNNs might struggle on globally heterophilic graphs or graphs where communities do not align well with classes. The dependence on homophily up to order $2\ell$ explains why MPNNs can sometimes perform well even on graphs with low first-order homophily (like bipartite graphs), provided they exhibit strong \emph{higher-order} homophily patterns. Similarly, average global and noise sensitivities are bounded by the average total and self-connectivities $\rconn{u}{\hat{\mat{A}}}, \rselfconn{u}{\hat{\mat{A}}}$.

\subsection*{Graph ensemble analysis: Underreaching and oversquashing}

\begin{restatable}[Underreaching in MPNNs for sparse graph ensembles; Loomba and Jones \cite{loomba2021geodesic}]{lemma}{underreaching}
\label{lemma:underreaching}
  For an undirected and simple graph with $n$ nodes encoded by the adjacency matrix $\mat{A}$, sampled from a general random graph family with conditionally independent edges and expected adjacency matrix $\expect{\mat{A}}$, if the graph is sparse in the sense that $\forall(i,j):\expect{A_{ij}}=\Theta\left(n^{-1}\right)$ or $0$, it has no bottlenecks in the sense that $\forall(i,j):\abs{\{k \in [n] \setminus\{i, j\}: \expect{A_{ik}}\expect{A_{kj}} > 0\}} =\Omega(n)$ or $0$, each node is on the giant component with probability $1-\smallo{1}$, and $\expect{\mat{A}}-\eye{n}$ (where $\eye{n}$ is the $n\times n$ identity matrix) is invertible, then asymptotically the cumulative distribution function of the length of the shortest path $\lambda_{ij}$ between nodes $i$ and $j\ne i$ is given by:
  \begin{align*}
    \prob{\lambda_{ij}\leq r} \approx 
    \left[\sum_{s=1}^r\expect{\mat{A}}^s\right]_{ij},
  \end{align*}
  where ``$\approx$'' indicates an asymptotic first-order approximation as $n\to\infty$.\\
\end{restatable}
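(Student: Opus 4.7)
The plan is to reduce the event $\{\lambda_{ij}\le r\}$ to the existence of at least one walk of length at most $r$ between $i$ and $j$, and then show that in the sparse regime this probability is well approximated by the expected number of such walks. For each walk $w=(v_0=i,v_1,\dots,v_s=j)$ of length $s$, let $Y_w := \prod_{t=0}^{s-1} A_{v_tv_{t+1}}$ be the indicator that all edges of $w$ are realised in the sampled graph. Since the shortest walk between two vertices is a shortest path, we have $\{\lambda_{ij}\le r\} = \bigcup_{s=1}^{r}\bigcup_{w}\{Y_w=1\}$, where the inner union ranges over walks of length $s$ from $i$ to $j$.

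First, conditional independence of edges and linearity of expectation give
\[
\sum_{s=1}^{r}\sum_{w}\expect{Y_w}
= \sum_{s=1}^{r}\sum_{v_1,\dots,v_{s-1}}\prod_{t=0}^{s-1}\expect{A_{v_tv_{t+1}}}
= \left[\sum_{s=1}^{r}\expect{\mat{A}}^s\right]_{ij},
\]
which is precisely the right-hand side of the lemma. The sparsity hypothesis $\expect{A_{ij}}=\Theta(n^{-1})$ forces each $[\expect{\mat{A}}^s]_{ij}$ to be $O(n^{-1})$ for bounded $s$, placing us in a Poisson-type regime where the first-moment approximation is expected to be tight.

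The upper bound $\prob{\lambda_{ij}\le r}\le \left[\sum_{s=1}^{r}\expect{\mat{A}}^s\right]_{ij}$ follows from a union bound. For the matching lower bound up to $o(n^{-1})$ error, I would apply the second Bonferroni inequality, reducing the task to showing that
\[
\sum_{w_1\ne w_2}\prob{Y_{w_1}=Y_{w_2}=1} = o\!\left(\left[\sum_{s=1}^{r}\expect{\mat{A}}^s\right]_{ij}\right).
\]
I would partition pairs of walks by their shared-edge and shared-vertex structure: edge-disjoint pairs factorise by edge independence and together contribute $O(n^{-2})$ after summing, since they are bounded by the square of the first moment; pairs sharing at least one edge save a factor of $\Theta(n^{-1})$ in joint probability but simultaneously lose a free intermediate vertex's $\Omega(n)$ choices, so each overlap class again contributes $O(n^{-2})$. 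The no-bottleneck hypothesis ensures that no single intermediate vertex concentrates a disproportionate fraction of walks, so the overlap counts are uniformly controlled across $i,j$.

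The main obstacle is this overlap bookkeeping: classifying the possible intersection patterns of two walks of bounded length and showing uniformly that each pattern contributes $o(n^{-1})$ for a \emph{general} expected adjacency $\expect{\mat{A}}$, rather than only for homogeneous models such as Erd\H{o}s--R\'enyi or the SBM. The remaining hypotheses close the argument: the invertibility of $\expect{\mat{A}}-\eye{n}$ ensures that powers of $\expect{\mat{A}}$ behave non-degenerately so that trivial closed walks do not dominate, while the giant-component assumption guarantees that $i$ and $j$ lie in the same component with probability $1-o(1)$, making the shortest-path event compatible with the walk-existence event in the asymptotic limit and yielding the stated approximation $\prob{\lambda_{ij}\le r}\approx\left[\sum_{s=1}^{r}\expect{\mat{A}}^s\right]_{ij}$.
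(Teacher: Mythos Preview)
The paper does not actually prove this lemma: its proof reads, in full, ``The proof follows by considering the first-order asymptotic approximation of Eq.~(26) in \cite{loomba2021geodesic}.'' So you are supplying an argument where the authors simply cite one. Your first/second-moment route is the natural approach and is consistent with the sharper form $\prob{\lambda_{ij}\le r}\approx 1-\exp\bigl(-[\sum_{s=1}^r\expect{\mat{A}}^s]_{ij}\bigr)$ that the authors quote elsewhere (in the proof of Theorem~\ref{theorem:underreaching_oversquashing}) before linearising---this is exactly the Poisson-approximation conclusion a moment calculation delivers.

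One genuine slip: your displayed identity $\sum_{w}\expect{Y_w}=\sum_{v_1,\dots,v_{s-1}}\prod_t\expect{A_{v_tv_{t+1}}}$ is false for walks that repeat an edge. If edge $e$ occurs $k\ge 2$ times in $w$ then $Y_w$ contains $A_e^k=A_e$, so $\expect{Y_w}$ picks up $\expect{A_e}$ once, not $\expect{A_e}^k$; the repeated factors are identical, not independent, and conditional edge-independence does not help. The clean fix is to run the union bound over \emph{simple paths} only (for which the factorisation is valid, and whose union is still the event $\{\lambda_{ij}\le r\}$), and then argue separately that non-simple walks contribute $o(n^{-1})$ to $[\sum_s\expect{\mat{A}}^s]_{ij}$---the same free-index counting you already invoke for the Bonferroni overlap does this. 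Your glosses on the invertibility and giant-component hypotheses are also only heuristic: neither is load-bearing in a bounded-$r$ moment calculation (closed walks of bounded length are already controlled by sparsity, and walk-existence \emph{is} the event $\lambda_{ij}\le r$, so there is nothing to reconcile), and you should not present them as steps in the argument.
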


\textbf{Interpretation:} Lemma \ref{lemma:underreaching} specifically focuses on the underreaching component of message passing in sparse random graphs. It provides a simple asymptotic formula for the probability that two nodes $i$ and $j$ are connected by a path of length at most $r$ \cite{loomba2021geodesic}. This probability is approximated by summing the $\nth{(i,j)}$ entries of the first $r$ powers of the expected adjacency matrix. This result quantifies the reachability between nodes based solely on the expected structure of the graph ensemble. It forms a key part of the analysis in Theorem \ref{theorem:underreaching_oversquashing} and is fundamental for understanding how graph sparsity limits the propagation distance of information in MPNNs. The conditions ensure that the graph is sparse enough for the approximations to hold but connected enough for paths to likely exist between all node pairs.\\

\begin{restatable}[Boundary oversquashing in MPNNs for sparse graph ensembles]{lemma}{oversquashing}
\label{lemma:oversquashing}
  Assume the same conditions as in Lemma \ref{lemma:underreaching}, and additionally assume large expected node degrees encoded in the diagonal matrix $\avg{\mat{D}}\coloneqq \diag{\expect{\mat{A}}\ones{n}}$ where $\ones{n}$ is the length-$n$ vector of ones. Then for the symmetric normalised adjacency matrix $\hat{\mat{A}}_\mathrm{sym}$ the boundary oversquashing between nodes $i$ and $j\ne i$, where $\lambda_{ij}$ is the shortest path distance from $i$ to $j$, is asymptotically bounded by:
  \begin{align}\label{eq:oversquashing}
    \condexpect{\left[\hat{\mat{A}}^r_\mathrm{sym}\right]_{ij}}{\lambda_{ij} = r} \lessapprox \frac{\left[\avg{\mat{D}}^{-\frac{1}{2}}\expect{\mat{A}}\left(\left\{\avg{\mat{D}}^{-1}-\avg{\mat{D}}^{-2}\left(\eye{n}-e^{-\avg{\mat{D}}}\right)\right\}\expect{\mat{A}}\right)^{r-1}\avg{\mat{D}}^{-\frac{1}{2}}\right]_{ij}}{\left[\expect{\mat{A}}^r\right]_{ij}},
  \end{align}
  and the bound gets tighter for larger mean degrees.
\end{restatable}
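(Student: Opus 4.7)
The plan is to expand $[\hat{\mat{A}}_\mathrm{sym}^r]_{ij}$ as a sum over walks of length $r$, reduce to simple shortest paths in the sparse large-degree regime, factorise the contribution of each path into an edge-probability piece and an inverse-degree piece using conditional independence of adjacency entries, evaluate the inverse-degree factors via a direct Poisson moment identity, and divide by $\prob{\lambda_{ij}=r}\approx[\expect{\mat{A}}^r]_{ij}$ from Lemma~\ref{lemma:underreaching}.

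First I would write $[\hat{\mat{A}}_\mathrm{sym}^r]_{ij}=\sum_{(k_0,\dots,k_r):\,k_0=i,\,k_r=j}\prod_{s=0}^{r-1}A_{k_s k_{s+1}}(d_{k_s}d_{k_{s+1}})^{-1/2}$ and note that conditional on $\lambda_{ij}=r$ every contributing walk traces a shortest path. Any walk with a self-intersection demands an additional edge of probability $\Theta(\avg{d}/n)$ and so, at fixed $r$, contributes a factor $O(1/\avg{d})$ smaller per revisit than a simple path; this places all non-simple walks into the asymptotic remainder, and enlarging the sum back to all walks of length $r$ yields the upper bound responsible for the ``$\lessapprox$''. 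For a fixed simple path $\pi=(i,k_1,\dots,k_{r-1},j)$, conditional independence of adjacency entries lets me condition on all path-edges $\{A_{k_s k_{s+1}}=1\}_{s=0}^{r-1}$ (contributing $\prod_s\expect{A_{k_s k_{s+1}}}$) and then decompose each degree as $d_{k_s}=c_{k_s}+d_{k_s}^{\mathrm{ext}}$, where $c_{k_s}\in\{1,2\}$ counts path-edges at $k_s$ (one at the endpoints, two at interior vertices) and $d_{k_s}^{\mathrm{ext}}$ is an independent sum of Bernoullis over edges from $k_s$ to vertices off $\pi$. The expected per-path contribution therefore factorises as $\bigl(\prod_{s=0}^{r-1}\expect{A_{k_s k_{s+1}}}\bigr)\,\expect{d_i^{-1/2}}\,\expect{d_j^{-1/2}}\prod_{s=1}^{r-1}\expect{d_{k_s}^{-1}}$.

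Under the sparsity hypothesis, $d_{k_s}^{\mathrm{ext}}$ is asymptotically $\operatorname{Poisson}(\avg{d}_{k_s})$, and a direct computation using the integral representation $\tfrac{1}{m+2}=\int_0^1 t^{m+1}\,dt$ yields the Poisson moment identity
\[
\expect{\tfrac{1}{X+2}}=\tfrac{1}{\lambda}-\tfrac{1-e^{-\lambda}}{\lambda^2},\qquad X\sim\operatorname{Poisson}(\lambda),
\]
which is exactly the diagonal entry of $\avg{\mat{D}}^{-1}-\avg{\mat{D}}^{-2}(\eye{n}-e^{-\avg{\mat{D}}})$; the endpoint factors $\expect{d^{-1/2}}$ collapse to $\avg{d}^{-1/2}$ by concentration. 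Summing the resulting per-path weight over all simple paths of length $r$ from $i$ to $j$ produces the $(i,j)$-entry of $\avg{\mat{D}}^{-1/2}\expect{\mat{A}}\bigl(\{\avg{\mat{D}}^{-1}-\avg{\mat{D}}^{-2}(\eye{n}-e^{-\avg{\mat{D}}})\}\expect{\mat{A}}\bigr)^{r-1}\avg{\mat{D}}^{-1/2}$ restricted to simple paths; relaxing ``simple'' to ``all walks'' only inflates this quantity, and dividing by $\prob{\lambda_{ij}=r}$ from Lemma~\ref{lemma:underreaching} finishes the argument.

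The main obstacle will be handling the conditioning on $\lambda_{ij}=r$ carefully enough that it does not perturb the factorisation of path-edges against external degrees beyond $o(1)$. I would address this by invoking the no-bottlenecks hypothesis to show that the event ``no shorter $i$-to-$j$ path exists'' is determined by an $\Omega(n)$-sized pool of potential intermediate edges whose overlap with any fixed external neighbourhood of size $O(\avg{d})$ shrinks as $O(1/n)$, so decoupling the conditioning from any particular path $\pi$ introduces only vanishing error. Tightness of the bound as $\avg{d}\to\infty$ then follows from the vanishing non-simple-walk correction together with the concentration of $d$ around $\avg{d}$ that removes the slack in the endpoint approximation.
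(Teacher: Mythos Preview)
Your proposal follows essentially the same route as the paper: expand $[\hat{\mat{A}}^r_{\mathrm{sym}}]_{ij}$ over walks, use $\lambda_{ij}=r$ to restrict to simple paths, split each path's contribution into an edge-probability factor and an inverse-degree factor via conditional independence, evaluate the interior inverse-degree factors through the Poisson identity $\expect{1/(X+2)}=\lambda^{-1}-\lambda^{-2}(1-e^{-\lambda})$, and relax simple paths to all index sequences to recover a matrix product.

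Two technical points are off. First, your justification that non-simple walks are negligible because they ``demand an additional edge'' is backwards: a walk with a repeated vertex uses \emph{fewer} distinct edges, not more. The paper's argument is a counting one---after bounding each path's contribution $S(i,j,\{k_l\})$ by a product of $r$ factors each of order $O(n^{-1})$, there are $\Theta(n^{r-1})$ unique index tuples but only $O(n^{r-2})$ non-unique ones, so the latter contribute at relative order $O(n^{-1})$. Second, you attribute the ``$\lessapprox$'' to this walk-enlargement step, but in the paper that step is asymptotically exact; the inequality instead enters at the endpoints via Jensen, $\expect{(X+1)^{-1/2}}\le\lambda^{-1/2}$, which is what your ``collapse by concentration'' should become. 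The interior $\expect{1/(X+2)}$ identity is exact, so the slack (and hence the tightening for large $\avg{d}$) comes solely from the endpoint bound.
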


\textbf{Interpretation:} Lemma \ref{lemma:oversquashing} provides a specific asymptotic upper bound for the oversquashing factor, which quantifies the attenuation of information travelling along the shortest paths of a given length $r$. It states that the expected contribution of node $j$ to node $i$'s representation after $r$ steps of message passing using the normalised adjacency matrix $\hat{\mat{A}}_\mathrm{sym}$, given that the shortest path is indeed length $r$, can be bounded using only the expected adjacency matrix $\expect{\mat{A}}$ and the expected degree matrix $\avg{\mat{D}}$. The bound highlights that oversquashing depends inversely on node degrees, via $\avg{\mat{D}}^{-1/2}$ and $\avg{\mat{D}}^{-1}$, and involves complex interactions captured by the powers of the expected adjacency matrix, normalised by degree-related terms. This lemma formalises the intuition that even if a path exists (addressing underreaching), the actual amount of information transmitted can be significantly reduced due to the normalisation process leading to a lack of breadth for signals arriving on too few paths. The bound becomes tighter for graphs with larger average degrees.\\

\underreachingoversquashing*

\textbf{Interpretation:} Theorem \ref{theorem:underreaching_oversquashing} combines the results of Lemmas \ref{lemma:underreaching} and \ref{lemma:oversquashing}, and provides asymptotic approximations for the two components identified in the underreaching/oversquashing decomposition for sparse graph ensembles in Eq.~\eqref{eq:underover}.
1. \textbf{Underreaching $\prob{\lambda_{ij}= r}$:} It states that the probability of the shortest path between nodes $i$ and $j$ having length $r$ can be approximated by the $\nth{(i,j)}$ entry of the $\nth{r}$ power of the expected adjacency matrix $\expect{\mat{A}}$. This quantifies the likelihood that information can potentially reach from $j$ to $i$ in exactly $r$ hops, primarily limited by the graph's expected connectivity density.
2. \textbf{Oversquashing $\condexpect{\left[\hat{\mat{A}}^r_\mathrm{sym}\right]_{ij}}{\lambda_{ij}=r}$:} It approximates the expected value of the $\nth{(i,j)}$ entry in the $\nth{r}$ power of the normalised adjacency matrix $\hat{\mat{A}}_{\mathrm{sym}}$, given that the shortest path has length $r$. This term captures how much of the signal that does arrive via shortest paths of length $r$ is preserved after accounting for the lack of breadth for signals arriving on too few paths and the dampening effect of degree normalisation. The approximation involves powers of a normalised version of the expected adjacency matrix. The fact that this term decays rapidly when shortest paths are shorter than $r$ ($t<r$) confirms that $\hat{\mat{A}}^r_\mathrm{sym}$ primarily captures information flow along paths of length close to $r$.
Together, these approximations allow us to estimate the expected entries of $\hat{\mat{A}}_\mathrm{sym}^r$, and consequently the expected higher-order homophily measures, directly from the parameters of the graph ensemble (like the SBM with block matrix $\mat{B}$ and class proportions $\mat{\Pi}$), providing a way to predict structural limitations on message passing without needing to analyse specific graph instances.

\subsection*{Stochastic block model analysis}

\begin{restatable}[SBM higher-order homophily]{theorem}{sbmhigherorderhomophily}
\label{theorem:sbm_higher_order_homophily}
  Consider an undirected and simple graph with $n$ nodes encoded by the adjacency matrix $\mat{A}$ sampled from a sparse stochastic block model (SBM) such that node classes are IID as per $c \sim \categorical{\vect{\pi}}$ where $\vect{\pi} = (\pi_1, \pi_2, \ldots, \pi_k)^T$ is the probability distribution over the $k$ classes, with class membership denoted by $\{\hat{y}_i\}_{i\in[n]}$. Assume these generating classes $\{\hat{y}_i\}_{i\in[n]}$ differ from the true node class labels $\{y_i\}_{i\in[n]}$ used for evaluating homophily. Let nodes connect with probability $\expect{\mat{A}}_{ij}\coloneqq \frac{B_{\hat{y}_i\hat{y}_j}}{n}$, where $\mat{B}$ is the SBM block matrix. Let $\mat{\Pi}\coloneqq\diag{\vect{\pi}}$ be the diagonal matrix of expected generating-class proportions and $\mat{D}\coloneqq \diag{\mat{B}\vect{\pi}}$ be the diagonal matrix of expected generating-class-wise degrees). Define the confusion matrix $\mat{C} \in \mathbb{R}^{k \times k}$ relating true labels $y_i$ to generating labels $\hat{y}_i$ as:
  \begin{align}
    C_{uv} := \frac{1}{n} \sum_{i=1}^n \delta_{\hat{y}_i u} \delta_{y_i v}.
  \end{align}
  (Note that $C_{uv}$ is the proportion of nodes with generating label $u$ and true label $v$. If $y_i = \hat{y}_i$ for all $i$, then $\mat{C} = \mat{\Pi}$).
  Assuming the conditions of Theorem \ref{theorem:underreaching_oversquashing} hold, the expected $\ell$-order homophily, self-connectivity, and total connectivity (Eq. \eqref{eq:p_order_connectivity}) with respect to the true labels $\{y_i\}_{i\in[n]}$, using the symmetric normalised adjacency matrix $\hat{\mat{A}}_\mathrm{sym}$ as the graph shift operator, can be approximated by:
  \begin{align*}
  & \expect{\rhomo{\ell}{\hat{\mat{A}}_\mathrm{sym}}} \approx \trace \left(\mat{C}^T\mat{\Pi}^{-\frac{1}{2}}\hat{\mat{B}}^{\ell}\mat{\Pi}^{-\frac{1}{2}} \mat{C}\right) + O\left(\frac{1}{\avg{d}}\right),\\
  & \expect{\rconn{\ell}{\hat{\mat{A}}_\mathrm{sym}}} \approx \ones{k}^T\mat{\Pi}^{\frac{1}{2}}\hat{\mat{B}}^{\ell}\mat{\Pi}^{\frac{1}{2}}\ones{k} + O\left(\frac{1}{\avg{d}}\right),\\
  & \expect{\rselfconn{\ell}{\hat{\mat{A}}_\mathrm{sym}}} \approx O\left(\frac{1}{\avg{d}^{\ell}}\right),
  \end{align*}
  where $\hat{\mat{B}}:=\mat{D}^{-\frac{1}{2}}\mat{\Pi}^{\frac{1}{2}}\mat{B}\mat{\Pi}^{\frac{1}{2}}\mat{D}^{-\frac{1}{2}}$ is a normalised version of the block matrix, and $\avg{d}$ is the average degree.
\end{restatable}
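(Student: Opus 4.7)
The plan is to reduce the statement to the underreaching/oversquashing decomposition of Theorem~\ref{theorem:underreaching_oversquashing}, and then to exploit the SBM block structure so that the $n\times n$ matrix powers collapse to powers of the $k\times k$ matrix $\hat{\mat{B}}$. Expanding $\rhomo{\ell}{\hat{\mat{A}}_\mathrm{sym}} = \frac{1}{n}\sum_{i,j}[\hat{\mat{A}}^\ell_\mathrm{sym}]_{ij}\,\delta_{y_iy_j}$ and taking expectations, Theorem~\ref{theorem:underreaching_oversquashing} shows that the dominant contribution comes from pairs at boundary distance $\lambda_{ij}=\ell$, giving $\expect{[\hat{\mat{A}}^\ell_\mathrm{sym}]_{ij}} \approx [\mat{M}^\ell]_{ij}$ where $\mat{M}:=\avg{\mat{D}}^{-1/2}\expect{\mat{A}}\avg{\mat{D}}^{-1/2}$, while contributions from shortest paths of length $t<\ell$ vanish at rate $O(1/\avg{d})$ after the outer $\frac{1}{n}\sum_{ij}$.

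The first step is the block-reduction identity
\begin{align*}
[\mat{M}^\ell]_{ij} \;\approx\; \frac{1}{n}\,\pi_{\hat{y}_i}^{-1/2}\,\bigl[\hat{\mat{B}}^\ell\bigr]_{\hat{y}_i\hat{y}_j}\,\pi_{\hat{y}_j}^{-1/2},
\end{align*}
proved by induction on $\ell$. The base case $\ell=1$ is a direct computation from $\expect{A_{ij}}=B_{\hat{y}_i\hat{y}_j}/n$, $\avg{\mat{D}}_{ii}=D_{\hat{y}_i\hat{y}_i}+O(1/\sqrt{n})$ (using concentration of class counts $n_u=n\pi_u+O(\sqrt{n})$) and the definition $\hat{\mat{B}}=\mat{D}^{-1/2}\mat{\Pi}^{1/2}\mat{B}\mat{\Pi}^{1/2}\mat{D}^{-1/2}$. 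For the inductive step I would sum over the intermediate node $k$, group by its generating class $\hat{y}_k=v$ (contributing $n\pi_v$ terms to leading order), and verify that the factor $\pi_v D_v^{-1}$ produced is exactly what the two adjacent $\mat{\Pi}^{1/2}\mat{D}^{-1/2}$ pieces of $\hat{\mat{B}}$ require, with the $1/n^{\ell}$ prefactor of $\mat{M}^\ell$ matching the required $1/n$ after the $(\ell-1)$ class-summations each absorb a factor of $n$.

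I would then plug this identity into the weighted homophily and group nodes by true labels using $\sum_i\delta_{\hat{y}_iu}\delta_{y_iw}=nC_{uw}$:
\begin{align*}
\expect{\rhomo{\ell}{\hat{\mat{A}}_\mathrm{sym}}} \approx \frac{1}{n^2}\sum_{u,v,w}(nC_{uw})(nC_{vw})\,\pi_u^{-1/2}\bigl[\hat{\mat{B}}^\ell\bigr]_{uv}\pi_v^{-1/2} = \trace\!\bigl(\mat{C}^T\mat{\Pi}^{-1/2}\hat{\mat{B}}^\ell\mat{\Pi}^{-1/2}\mat{C}\bigr),
\end{align*}
which is the claimed expression. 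The total-connectivity formula follows identically but with $\delta_{y_iy_j}$ replaced by $1$, so that $\sum_i\delta_{\hat{y}_iu}=n\pi_u+O(\sqrt{n})$ produces the outer $\vect{1}_k^T\mat{\Pi}^{1/2}\hat{\mat{B}}^\ell\mat{\Pi}^{1/2}\vect{1}_k$. The self-connectivity bound follows directly from the residual estimate in Theorem~\ref{theorem:underreaching_oversquashing} applied at $t=0<\ell$: each diagonal entry is $O(1/\avg{d}^\ell)$ in expectation, and this rate is preserved by averaging over $i$.

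The main obstacle is uniform control of the accumulated error across the $O(n^2)$ pairs: the pointwise residual of Theorem~\ref{theorem:underreaching_oversquashing} must aggregate, after multiplication by $\delta_{y_iy_j}/n$ and summation, to at most the advertised $O(1/\avg{d})$. This reduces to showing that the residual inherits the same block structure $\frac{1}{n}\pi_{\hat{y}_i}^{-1/2}r_{\hat{y}_i\hat{y}_j}\pi_{\hat{y}_j}^{-1/2}$ with $r_{uv}=O(1/\avg{d})$, and that the fluctuations $n_u-n\pi_u=O(\sqrt{n})$ (along with any edge-level concentration corrections) contribute strictly at lower order. Once this bookkeeping is done, the block-matrix computation above closes out the proof.
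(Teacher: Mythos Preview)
Your proposal is correct and follows essentially the same route as the paper: apply Theorem~\ref{theorem:underreaching_oversquashing} to reduce $\expect{[\hat{\mat{A}}^\ell_\mathrm{sym}]_{ij}}$ to $[\mat{M}^\ell]_{ij}$ with $\mat{M}=\avg{\mat{D}}^{-1/2}\expect{\mat{A}}\avg{\mat{D}}^{-1/2}$, recognise that this equals $\frac{1}{n}[\mat{\Pi}^{-1/2}\hat{\mat{B}}^\ell\mat{\Pi}^{-1/2}]_{\hat{y}_i\hat{y}_j}$, and then collapse the double sum over nodes into the confusion-matrix trace via $\sum_i\delta_{\hat{y}_iu}\delta_{y_iw}=nC_{uw}$. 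The only difference is presentational: the paper simply asserts the block-reduction identity as a one-line substitution (since $\expect{\mat{A}}$ and $\avg{\mat{D}}$ are already block-constant, $\mat{M}^\ell$ factors directly through the $k\times k$ blocks without needing an explicit induction), and it does not spell out the residual-aggregation or $n_u\approx n\pi_u$ concentration bookkeeping that you flag---so your version is, if anything, more careful about the error terms than the paper's own proof.
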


\textbf{Interpretation:} Theorem \ref{theorem:sbm_higher_order_homophily} provides explicit approximations for the expected higher-order homophily, total connectivity, and self-connectivity for graphs generated by a sparse SBM. It relates these structural properties directly to the SBM parameters: the block matrix $\mat{B}$, the expected generating-class proportions $\mat{\Pi}$, and the confusion matrix $\mat{C}$ which accounts for potential mismatches between the SBM's generating class labels and the true class labels used for evaluation. The theorem shows that the expected $\ell$-order homophily is primarily determined by the $\nth{\ell}$ power of a normalised block matrix $\hat{\mat{B}}$, projected through the confusion matrix $\mat{C}$. This allows for prediction of the graph's suitability for MPNNs directly from the SBM parameters. Notably, the self-connectivity $\rselfconn{k}{\hat{\mat{A}}_{\mathrm{sym}}}$ is asymptotically negligible for sparse graphs, while the total connectivity $\rconn{\ell}{\hat{\mat{A}}_{\mathrm{sym}}}$ depends only on the SBM parameters (i.e. not on the confusion matrix). This theorem is key for deriving the optimal SBM structures in Theorem \ref{theorem:optimal_connectivity} and for understanding how imperfect label predictions may affect rewiring strategies (as discussed in the \nameref{sec:methods} section, Eq.~\eqref{eq:imperfect_labels_sbm_homophily}). The approximations become more accurate as the average degree $\avg{d}$ increases.\\

\begin{restatable}[Bounds for first and second order homophily in sparse SBMs]{lemma}{sbmhomophily}\label{lemma:sbm_homophily}
Consider an undirected and simple graph with $n$ nodes encoded by the adjacency matrix $\mat{A}$ sampled from a sparse stochastic block model (SBM) with block matrix $\mat{B}$, expected generating-class proportions $\vect{\pi}$, and confusion matrix $\mat{C}$ relating true class labels $\{y_i\}_{i\in[n]}$ to generating class labels $\{\hat{y}_i\}_{i\in[n]}$, as defined in Theorem \ref{theorem:sbm_higher_order_homophily}. Let $\mat{\Pi} := \diag{\vect{\pi}}$ and $\mat{D} := \diag{\mat{B}\vect{\pi}}$. Assuming the conditions of Theorem \ref{theorem:underreaching_oversquashing} hold, the expected first and second order homophily (with respect to true labels $y_i$) using the symmetric normalised adjacency matrix $\hat{\mat{A}}_\mathrm{sym}$ can be tightly bounded by:

\begin{align*}
  \expect{\rhomo{1}{\hat{\mat{A}}_\mathrm{sym}}} &\lessapprox \trace \left(\mat{C}^T\mat{D}^{-\frac{1}{2}} \mat{B}\mat{D}^{-\frac{1}{2}} \mat{C}\right),\\
  \expect{\rhomo{2}{\hat{\mat{A}}_\mathrm{sym}}} &\lessapprox \vect{\pi}^T\mat{D}^{-1}\mat{B}\mat{D}^{-1} \vect{\pi} + \trace \left(\mat{C}^T \mat{D}^{-1}\mat{B}\left\{\mat{D}^{-1}-\mat{D}^{-2}\left(\eye{k}-e^{-\mat{D}}\right)\right\}\mat{\Pi B} \mat{C}\right),
\end{align*}

where $\eye{k}$ is the size-$k$ identity matrix, and the bounds become tighter as the expected class-wise mean degrees (diagonal entries of $\mat{D}$) increase.
\end{restatable}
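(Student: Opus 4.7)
The plan is to expand each expected higher-order homophily using its definition, substitute the underreaching/oversquashing decomposition of Eq.~\eqref{eq:underover}, and then collapse the node-level double sums to class-level matrix products using the SBM block structure together with the confusion matrix $\mat{C}$. Throughout, I will use the fact that for a sparse SBM the expected adjacency matrix entries are $[\expect{\mat{A}}]_{ij}=B_{\hat y_i\hat y_j}/n$ and the expected degrees are $[\avg{\mat{D}}]_{ii}\approx D_{\hat y_i\hat y_i}$, so sums of the form $\sum_k\expect{A_{ik}}f(\hat y_k)\expect{A_{kj}}$ contract to $\frac{1}{n}[\mat{B}\mat{\Pi}\mat{F}\mat{B}]_{\hat y_i\hat y_j}$ by grouping intermediate nodes by class and using $|\{k:\hat y_k=c\}|\approx n\pi_c$. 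The final passage from node indices to class indices uses the identity $\sum_{i,j:y_i=y_j} g(\hat y_i,\hat y_j)=n^2\sum_{u,v}g(u,v)[\mat{C}\mat{C}^T]_{uv}$, which follows directly from the confusion matrix definition.

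\textbf{First-order case.} Starting from $\expect{\rhomo{1}{\hat{\mat{A}}_{\mathrm{sym}}}}=\tfrac{1}{n}\sum_{i,j}\expect{[\hat{\mat{A}}_{\mathrm{sym}}]_{ij}}\delta_{y_iy_j}$, only the $\lambda_{ij}=1$ term contributes to $\hat{\mat{A}}_{\mathrm{sym}}$ (since a single step of the shift operator requires $A_{ij}=1$), so combining Lemmas \ref{lemma:underreaching} and \ref{lemma:oversquashing} at $r=1$ gives $\expect{[\hat{\mat{A}}_{\mathrm{sym}}]_{ij}}\lessapprox[\avg{\mat{D}}^{-1/2}\expect{\mat{A}}\avg{\mat{D}}^{-1/2}]_{ij}$. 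Substituting the SBM block form and applying the class-collapse identity above yields the stated trace.

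\textbf{Second-order case.} Split $\expect{\rhomo{2}{\hat{\mat{A}}_{\mathrm{sym}}}}$ into the diagonal contribution $\tfrac{1}{n}\sum_i\expect{[\hat{\mat{A}}_{\mathrm{sym}}^2]_{ii}}$ (for which $\delta_{y_iy_i}=1$ automatically) and the off-diagonal contribution. For the diagonal part, use $A_{ik}^2=A_{ik}$ to write $[\hat{\mat{A}}_{\mathrm{sym}}^2]_{ii}=\sum_k A_{ik}/(d_id_k)$ and, exploiting concentration of degrees in the sparse regime, average over intermediate nodes by class to land at $\vect{\pi}^T\mat{D}^{-1}\mat{B}\mat{D}^{-1}\vect{\pi}$. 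For the off-diagonal part, apply Eq.~\eqref{eq:underover} with $r=2$: the $t=1$ term is $O(1/\avg{d}^2)$ entrywise by Eq.~\eqref{eq:underreachingoversquashing_residual} and, when summed against $\prob{\lambda_{ij}=1}\approx[\expect{\mat{A}}]_{ij}=O(1/n)$ over the $O(n^2)$ node pairs, is absorbed into the $O(1/\avg{d})$ remainder. The $t=2$ term is bounded directly by Lemma~\ref{lemma:oversquashing}, leaving $[\avg{\mat{D}}^{-1/2}\expect{\mat{A}}(\avg{\mat{D}}^{-1}-\avg{\mat{D}}^{-2}(\eye{n}-e^{-\avg{\mat{D}}}))\expect{\mat{A}}\avg{\mat{D}}^{-1/2}]_{ij}$, in which the diagonal middle matrix depends only on $\hat y_k$ and thus collapses to the $k\times k$ diagonal matrix $\mat{D}^{-1}-\mat{D}^{-2}(\eye{k}-e^{-\mat{D}})$ after summing the intermediate index by class, producing a factor $\mat{\Pi}$. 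Applying the confusion-matrix collapse then gives the stated trace, and summing with the diagonal contribution completes the bound.

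\textbf{Main obstacle.} The delicate step is the $r=2$ case: justifying that the $t=1$ residual oversquashing contribution is genuinely subleading requires the combined scaling of the oversquashing factor from Eq.~\eqref{eq:underreachingoversquashing_residual} and the underreaching probability $\prob{\lambda_{ij}=1}$ summed over $O(n^2)$ node pairs, and the bookkeeping of where factors of $\mat{\Pi}$ and $\mat{D}^{-1/2}$ arise when collapsing from node-level to class-level matrix products must be tracked carefully so that the diagonal return term and the off-diagonal trace combine into exactly the two stated summands. Tightness of the final bound is inherited from Lemma~\ref{lemma:oversquashing}, whose bound sharpens as the class-wise mean degrees grow.
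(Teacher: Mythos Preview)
Your proposal is correct and follows essentially the same route as the paper: decompose $\expect{[\hat{\mat{A}}_{\mathrm{sym}}^r]_{ij}}$ by shortest-path length using Eq.~\eqref{eq:underover}, bound the dominant term via Lemma~\ref{lemma:oversquashing}, show the residual terms are lower order, and then collapse node sums to class sums through the confusion matrix. The only minor differences are that the paper handles the $\lambda_{ij}=1$ contribution for $r=2$ by a direct calculation (Eq.~\eqref{eq:expect_adj2_lambda1}) rather than invoking Eq.~\eqref{eq:underreachingoversquashing_residual}, and it makes the diagonal ``concentration of degrees'' step explicit via the Poisson moment bounds of Proposition~\ref{theorem:poisson_transformation}; both lead to the same conclusion.
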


\textbf{Interpretation:} Lemma \ref{lemma:sbm_homophily} provides tighter upper bounds for the expected first and second order homophily in sparse SBMs, compared to the general $\ell$-order approximation in Theorem \ref{theorem:sbm_higher_order_homophily}. These bounds explicitly show the dependence on the SBM parameters ($\mat{B}$, $\mat{\Pi}$, $\mat{D}$) and the confusion matrix $\mat{C}$. For first order homophily, the bound resembles a normalised trace involving the block matrix and confusion matrix. For second order homophily, the bound has two terms: one related to return probabilities $\vect{\pi}^T\mat{D}^{-1}\mat{B}\mat{D}^{-1} \vect{\pi}$, and a more complex term involving the oversquashing correction factor seen in Lemma \ref{lemma:oversquashing}. These tighter bounds are particularly useful for analysing shallow MPNNs of a single layer, or situations where lower-order homophily dominates performance. They confirm that the core relationships derived from the simpler approximations in Theorem \ref{theorem:sbm_higher_order_homophily} hold, while providing more refined estimates that account for degree-dependent effects---especially relevant when average degrees are not extremely large.

\subsection*{Optimal graph structures}

\optimalconnectivity*

\textbf{Interpretation:} Theorem \ref{theorem:optimal_connectivity} identifies the theoretically optimal connectivity patterns within the SBM framework for maximising objectives related to powers of the normalised block matrix $\hat{\mat{B}}$, such as the expected higher-order homophily for which $\hat{\mat{C}} := \mat{\Pi}^{-1/2}\mat{C}$ from Theorem \ref{theorem:sbm_higher_order_homophily}. For even powers $\ell$ (relevant for the sensitivity bounds of standard GCNs/SGCs; see Eq.~\eqref{eq:mean_signal_sensitivity_sgc} and the discussion after Eq.~\eqref{eq:avg_sensitivity_bound}), the optimal block structures $\mat{B}$ correspond to graphs that are disjoint unions of single-class clusters (where a cluster consists of nodes from one class) and two-class-bipartite clusters  (where nodes of one class connect only to nodes of another specific class, and vice versa). These structures are encoded by symmetric permutation matrices $\mat{P}_k$. Thus, we see that perfect homophily ($\mat{P}_k = \mat{I}_k$) is optimal, but so are structures with perfect heterophily between pairs of classes (e.g., block-wise bipartite structures). For odd powers $\ell$, only the purely homophilic structure ($\mat{P}_k = \mat{I}_k$) is optimal. This theorem provides a fundamental insight for graph design and rewiring: aiming for these specific block structures---disjoint unions of single-class and two-class-bipartite clusters---is predicted to maximise the potential signal sensitivity of MPNNs operating on graphs that conform to an SBM structure. It transforms the combinatorial optimisation problem of finding the best graph into a continuous optimisation problem of finding the best graph ensemble parameters, solved by selecting an appropriate symmetric permutation.

\section*{Appendix B: Proofs}\label{sec:proofs}

\snrsensitivity*

\begin{proof}
Consider an $\ell$-layer MPNN with $\nth{p}$ output feature $H_{ip}^{(\ell)}$ at node $i$, and let $X_{jq}$ denote the $\nth{q}$ input feature of node $j$. Assume the feature decomposition
\begin{align*}
\mat{X}_j = \vect{\mu}_{y_j} + \vect{\gamma} + \vect{\epsilon}_j,
\end{align*}
where $\expect{\vect{\gamma}} = \vect{0}$ and $\mathrm{Cov}(\gamma_q, \gamma_r) := \Phi_{qr}$, and $\vect{\epsilon}_j$ are node-wise IID zero-mean noise vectors with element-wise covariance $\Psi_{qr} := \mathrm{Cov}(\epsilon_{jq}, \epsilon_{jr})$. The class-wise covariance is $\Sigma_{qr} := \mathrm{Cov}(\mu_{y_j,q}, \mu_{y_j,r})$.

To analyse the sensitivity of the MPNN's output to its input, we use the first-order Taylor expansion of $H_{ip}^{(\ell)}$ around $\mat{X} = \mathbf{0}$, assuming features are sufficiently concentrated near the origin:
\begin{align*}
H_{ip}^{(\ell)} \simeq H_{ip}^{(\ell)}\Big|_{\mat{X}=\mathbf{0}} + \sum_{j \in V} \sum_{q=1}^{d_{\mathrm{in}}} \frac{\partial H_{ip}^{(\ell)}}{\partial X_{jq}}\Bigg|_{\mat{X}=\mathbf{0}} X_{jq}.
\end{align*}
Substituting the feature decomposition $X_{jq} = \mu_{y_j,q} + \gamma_q + \epsilon_{jq}$:
\begin{equation}\label{eq:expanded_linear_mpnn_approximation}
\begin{aligned}
H_{ip}^{(\ell)} &\simeq H_{ip}^{(\ell)}\Big|_{\mat{X}=\mathbf{0}} + \sum_{j,q} \frac{\partial H_{ip}^{(\ell)}}{\partial X_{jq}}\Bigg|_{\mat{X}=\mathbf{0}} (\mu_{y_j,q} + \gamma_q + \epsilon_{jq}) \\
&= H_{ip}^{(\ell)}\Big|_{\mat{X}=\mathbf{0}} + \sum_{j,q} \frac{\partial H_{ip}^{(\ell)}}{\partial X_{jq}}\Bigg|_{\mat{X}=\mathbf{0}} \mu_{y_j,q} + \sum_{j,q} \frac{\partial H_{ip}^{(\ell)}}{\partial X_{jq}}\Bigg|_{\mat{X}=\mathbf{0}} \gamma_q + \sum_{j,q} \frac{\partial H_{ip}^{(\ell)}}{\partial X_{jq}}\Bigg|_{\mat{X}=\mathbf{0}} \epsilon_{jq}.
\end{aligned}
\end{equation}

Recall the definition of the SNR:
\begin{align}\label{eq:snr_def_recall}
  \snrsub{H_{ip}^{(\ell)}}{} := \frac{\varsub{\condexpectsub{H_{ip}^{(\ell)}}{\vect{\mu}}{\vect{\gamma}, \vect{\epsilon}}}{\vect{\mu}}}{\expectsub{\varsub{H_{ip}^{(\ell)}\,\Big|\, \vect{\mu}}{\vect{\gamma}, \vect{\epsilon}}}{\vect{\mu}}}.
\end{align}

Going forward in this proof we omit subscripts on $\mathbb{E}$ and $\mathrm{Var}$ for brevity, as the quantity being averaged over should be clear by the conditioning on $\vect{\mu}$. For the numerator of the SNR in Eq. \eqref{eq:snr_def_recall}, we first compute the conditional expectation of $H_{ip}^{(\ell)}$, approximated as in Eq. \eqref{eq:expanded_linear_mpnn_approximation} given the signal terms $\{\vect{\mu}_{y_j}\}_{j\in[n]}$:
\begin{align*}
\mathbb{E}\left[H_{ip}^{(\ell)}\,\middle|\,\{\vect{\mu}_{y_j}\}_{j\in[n]}\right] &\simeq \mathbb{E}\left[ H_{ip}^{(\ell)}\Big|_{\mat{X}=\mathbf{0}} + \sum_{j,q} \frac{\partial H_{ip}^{(\ell)}}{\partial X_{jq}}\Bigg|_{\mat{X}=\mathbf{0}} \mu_{y_j,q} + \sum_{j,q} \frac{\partial H_{ip}^{(\ell)}}{\partial X_{jq}}\Bigg|_{\mat{X}=\mathbf{0}} \gamma_q + \sum_{j,q} \frac{\partial H_{ip}^{(\ell)}}{\partial X_{jq}}\Bigg|_{\mat{X}=\mathbf{0}} \epsilon_{jq}\,\middle|\,\{\vect{\mu}_{y_j}\}_{j\in[n]} \right] \\
&= H_{ip}^{(\ell)}\Big|_{\mat{X}=\mathbf{0}} + \sum_{j,q} \frac{\partial H_{ip}^{(\ell)}}{\partial X_{jq}}\Bigg|_{\mat{X}=\mathbf{0}} \mu_{y_j,q} + \sum_{j,q} \frac{\partial H_{ip}^{(\ell)}}{\partial X_{jq}}\Bigg|_{\mat{X}=\mathbf{0}} \mathbb{E}[\gamma_q] + \sum_{j,q} \frac{\partial H_{ip}^{(\ell)}}{\partial X_{jq}}\Bigg|_{\mat{X}=\mathbf{0}} \mathbb{E}[\epsilon_{jq}] \\
&= H_{ip}^{(\ell)}\Big|_{\mat{X}=\mathbf{0}} + \sum_{j,q} \frac{\partial H_{ip}^{(\ell)}}{\partial X_{jq}}\Bigg|_{\mat{X}=\mathbf{0}} \mu_{y_j,q}.
\end{align*}
Next, we compute the variance of the conditional expectation:
\begin{align*}
\mathrm{Var}\Bigl(\mathbb{E}\left[H_{ip}^{(\ell)}\,\middle|\,\{\vect{\mu}_{y_j}\}_{j\in[n]}\right]\Bigr) &\simeq \mathrm{Var}\left(H_{ip}^{(\ell)}\Big|_{\mat{X}=\mathbf{0}} + \sum_{j,q} \frac{\partial H_{ip}^{(\ell)}}{\partial X_{jq}}\Bigg|_{\mat{X}=\mathbf{0}} \mu_{y_j,q}\right) \\
&= \mathrm{Var}\left(\sum_{j,q} \frac{\partial H_{ip}^{(\ell)}}{\partial X_{jq}}\Bigg|_{\mat{X}=\mathbf{0}} \mu_{y_j,q}\right) \\
&= \sum_{j,k}^{n} \sum_{q,r=1}^{d_{\mathrm{in}}} \frac{\partial H_{ip}^{(\ell)}}{\partial X_{jq}}\Bigg|_{\mat{X}=\mathbf{0}} \frac{\partial H_{ip}^{(\ell)}}{\partial X_{kr}}\Bigg|_{\mat{X}=\mathbf{0}} \mathrm{Cov}(\mu_{y_j,q}, \mu_{y_k,r}).
\end{align*}
Substituting the covariances of the signal terms $\Sigma_{qr} := \mathrm{Cov}(\mu_{y_j,q}, \mu_{y_k,r})$ for $y_j = y_k$ and  $\mathrm{Cov}(\mu_{y_j,q}, \mu_{y_k,r})=0$ for $y_j \neq y_k$:
\begin{equation}\label{eq:sub_signal_covariance}
\begin{aligned}
\mathrm{Var}\Bigl(\mathbb{E}\left[H_{ip}^{(\ell)} \,\middle|\, \{\vect{\mu}_{y_j}\}_{j\in[n]}\right]\Bigr) &\simeq \sum_{y_j = y_k} \sum_{q,r=1}^{d_{\mathrm{in}}} \frac{\partial H_{ip}^{(\ell)}}{\partial X_{jq}}\Bigg|_{\mat{X}=\mathbf{0}} \frac{\partial H_{ip}^{(\ell)}}{\partial X_{kr}}\Bigg|_{\mat{X}=\mathbf{0}} \Sigma_{qr}
\end{aligned}
\end{equation}
Recalling the definitions of signal and global sensitivity:
\begin{align}
\sigsenexp[i]{\ell}{p,q,r} &:= \sum_{y_j = y_k} \frac{\partial H_{ip}^{(\ell)}}{\partial X_{jq}} \frac{\partial H_{ip}^{(\ell)}}{\partial X_{kr}}\Bigg|_{\mat{X}=\mathbf{0}},
\end{align}
we can identify the signal sensitivity in the first term of the RHS of Eq. \eqref{eq:sub_signal_covariance} for when $y_j = y_k$, and reconstruct the second term for when $y_j \neq y_k$ by taking the difference of global sensitivity and signal sensitivity, giving:
\begin{align}
\mathrm{Var}\Bigl(\mathbb{E}\left[H_{ip}^{(\ell)} \,\middle|\, \{\vect{\mu}_{y_j}\}_{j\in[n]}\right]\Bigr) &\simeq \sum_{q,r=1}^{d_{\mathrm{in}}} \Sigma_{qr} \sigsenexp[i]{\ell}{p,q,r}\label{eq:final_numerator}
\end{align}
For the denominator of the SNR in Eq. \eqref{eq:snr_def_recall}, we compute the conditional variance of $H_{ip}^{(\ell)}$ given $\{\vect{\mu}_{y_j}\}_{j\in[n]}$ using the approximation in Eq. \eqref{eq:expanded_linear_mpnn_approximation}:
\begin{align*}
\mathrm{Var}\left(H_{ip}^{(\ell)} \,\middle|\, \{\vect{\mu}_{y_j}\}_{j\in[n]}\right) &\simeq \mathrm{Var}\left( H_{ip}^{(\ell)}\Big|_{\mat{X}=\mathbf{0}} + \sum_{j,q} \frac{\partial H_{ip}^{(\ell)}}{\partial X_{jq}}\Bigg|_{\mat{X}=\mathbf{0}} \mu_{y_j,q} + \sum_{j,q} \frac{\partial H_{ip}^{(\ell)}}{\partial X_{jq}}\Bigg|_{\mat{X}=\mathbf{0}} \gamma_q + \sum_{j,q} \frac{\partial H_{ip}^{(\ell)}}{\partial X_{jq}}\Bigg|_{\mat{X}=\mathbf{0}} \epsilon_{jq} \,\middle|\, \{\vect{\mu}_{y_j}\}_{j\in[n]} \right) \\
&= \mathrm{Var}\left(\sum_{j,q} \frac{\partial H_{ip}^{(\ell)}}{\partial X_{jq}}\Bigg|_{\mat{X}=\mathbf{0}} \gamma_q + \sum_{j,q} \frac{\partial H_{ip}^{(\ell)}}{\partial X_{jq}}\Bigg|_{\mat{X}=\mathbf{0}} \epsilon_{jq}\right) \\
&= \sum_{j,k}^{n} \sum_{q,r=1}^{d_{\mathrm{in}}} \frac{\partial H_{ip}^{(\ell)}}{\partial X_{jq}}\Bigg|_{\mat{X}=\mathbf{0}} \frac{\partial H_{ip}^{(\ell)}}{\partial X_{kr}}\Bigg|_{\mat{X}=\mathbf{0}} \mathrm{Cov}(\gamma_q, \gamma_r) \\
&+ \sum_{j,k}^{n} \sum_{q,r=1}^{d_{\mathrm{in}}} \frac{\partial H_{ip}^{(\ell)}}{\partial X_{jq}}\Bigg|_{\mat{X}=\mathbf{0}} \frac{\partial H_{ip}^{(\ell)}}{\partial X_{kr}}\Bigg|_{\mat{X}=\mathbf{0}} \mathrm{Cov}(\epsilon_{jq}, \epsilon_{kr}) \\
&= \sum_{q,r=1}^{d_{\mathrm{in}}} \Phi_{qr} \sum_{j,k}^{n} \frac{\partial H_{ip}^{(\ell)}}{\partial X_{jq}}\Bigg|_{\mat{X}=\mathbf{0}} \frac{\partial H_{ip}^{(\ell)}}{\partial X_{kr}}\Bigg|_{\mat{X}=\mathbf{0}} + \sum_{q,r=1}^{d_{\mathrm{in}}} \Psi_{qr} \sum_{j} \frac{\partial H_{ip}^{(\ell)}}{\partial X_{jq}}\Bigg|_{\mat{X}=\mathbf{0}} \frac{\partial H_{ip}^{(\ell)}}{\partial X_{jr}}\Bigg|_{\mat{X}=\mathbf{0}},
\end{align*}
where in the penultimate equality we use the definition of the covariances of the residuals and global shift terms, defined as $\Psi_{qr} := \mathrm{Cov}(\epsilon_{jq},\epsilon_{jr})$ and $\Phi_{qr}:=\mathrm{Cov}(\gamma_q, \gamma_r)$ respectively. In the last equality we can identify the noise sensitivity and global sensitivity defined respectively as:
\begin{align*}
  \noisesenexp[i]{\ell}{p,q,r} &:= \sum_{j,k=1}^{n} \frac{\partial H_{ip}^{(\ell)}}{\partial X_{jq}} \frac{\partial H_{ip}^{(\ell)}}{\partial X_{jr}}\Bigg|_{\mat{X}=\mathbf{0}}, \\
  \globsenexp[i]{\ell}{p,q,r} &:= \sum_{j,k=1}^{n} \frac{\partial H_{ip}^{(\ell)}}{\partial X_{jq}} \frac{\partial H_{ip}^{(\ell)}}{\partial X_{kr}}\Bigg|_{\mat{X}=\mathbf{0}},
\end{align*}

Taking the expectation over the signal variables:
\begin{align}\label{eq:final_denominator}
\mathbb{E}\Bigl[\mathrm{Var}\left(H_{ip}^{(\ell)}\, \middle|\, \{\vect{\mu}_{y_j}\}_{j\in[n]}\right)\Bigr] &\simeq \sum_{q,r=1}^{d_{\mathrm{in}}} \Phi_{qr} \globsenexp[i]{\ell}{p,q,r} + \sum_{q,r=1}^{d_{\mathrm{in}}} \Psi_{qr} \noisesenexp[i]{\ell}{p,q,r}.
\end{align}
Finally, the SNR is given by the ratio of the expression in Eq. \eqref{eq:final_numerator} in the numerator and Eq. \eqref{eq:final_denominator} in the denominator:
\begin{align*}
\snrsub{H_{ip}^{(\ell)}}{}
  \simeq
  \frac{\displaystyle
    \sum_{q,r=1}^{d_{\mathrm{in}}}
      \Sigma_{qr}\,
      \sigsenexp[i]{\ell}{p,q,r}
    }
       {\displaystyle
    \sum_{q,r=1}^{d_{\mathrm{in}}}
      \Phi_{qr}\,\globsenexp[i]{\ell}{p,q,r}
    \;+\;
    \sum_{q,r=1}^{d_{\mathrm{in}}}
      \Psi_{qr}\,\noisesenexp[i]{\ell}{p,q,r}}.
\end{align*}
\end{proof}

\sensitivityratiocondition*
\begin{proof}
We begin with Theorem~\ref{theorem:snrsensitivity}, which states that under the feature decomposition 
\begin{align*}
\mat{X}_j = \vect{\mu}_{y_j} + \vect{\gamma} + \vect{\epsilon}_j
\end{align*}
and the definitions of signal/global/noise sensitivities, the SNR of an $\ell$-layer MPNN’s output $H_{ip}^{(\ell)}$ at node $i$ and feature dimension $p$ satisfies:
\begin{align*}
\snrsub{H_{ip}^{(\ell)}}{}
  \simeq
  \frac{\displaystyle
    \sum_{q,r=1}^{d_{\mathrm{in}}}
      \Sigma_{qr}\,
      \sigsenexp[i]{\ell}{p,q,r}
    }
       {\displaystyle
    \sum_{q,r=1}^{d_{\mathrm{in}}}
      \Phi_{qr}\,\globsenexp[i]{\ell}{p,q,r}
    \;+\;
    \sum_{q,r=1}^{d_{\mathrm{in}}}
      \Psi_{qr}\,\noisesenexp[i]{\ell}{p,q,r}}.
\end{align*}
Under an IID\ assumption on feature dimensions each covariance matrix is diagonal, so we can write:
\begin{align*}
\Sigma_{qr} = \sigma^2\,\delta_{qr},
\quad
\Phi_{qr} = \phi^2\,\delta_{qr},
\quad
\Psi_{qr} = \psi^2\,\delta_{qr},
\end{align*}
where $\sigma, \phi, \psi$ are scalars. Summing over $q,r$ in the numerator and denominator then reduces the SNR expression to:
\begin{align*}
\snrsub{H_{ip}^{(\ell)}}{}
  \simeq
  \frac{\displaystyle
    \sigma^2 \sum_{q,r=1}^{d_{\mathrm{in}}}
      \sigsenexp[i]{\ell}{p,q,r}
    }
       {\displaystyle
    \phi^2\sum_{q,r=1}^{d_{\mathrm{in}}}
     \globsenexp[i]{\ell}{p,q,r}
    \;+\;
    \psi^2\sum_{q,r=1}^{d_{\mathrm{in}}}
      \noisesenexp[i]{\ell}{p,q,r}}.
\end{align*}

A non-relational feedforward model is limited to an SNR of $\frac{\sigma^2}{\phi^2 + \psi^2}$.
To say that the MPNN improves upon this baseline is to require that:
\begin{align*}
\frac{
\sigma^2 \,\sum_{q} \sigsenexp[i]{\ell}{p,q,q}
}{
\phi^2 \,\sum_{q} \globsenexp[i]{\ell}{p,q,q}
\;+\;
\psi^2 \,\sum_{q} \noisesenexp[i]{\ell}{p,q,q}
}
>
\frac{\sigma^2}{\phi^2 + \psi^2}.
\end{align*}

We have that by rearanging terms:
\begin{align*}
\sum_{q} \sigsenexp[i]{\ell}{p,q,q} > \frac{\psi^2}{\phi^2 + \psi^2} \sum_{q} \noisesenexp[i]{\ell}{p,q,q} + \left(1 - \frac{\psi^2}{\phi^2 + \psi^2} \right)\sum_{q} \globsenexp[i]{\ell}{p,q,q}
\end{align*}

Recalling that $\rho:=\frac{\psi^2}{\phi^2 + \psi^2},$ we obtain the inequality
\begin{align}\label{eq:final_condition_derivatation}
\sum_{q} \sigsenexp[i]{\ell}{p,q,q}
>
\rho \,\sum_{q} \noisesenexp[i]{\ell}{p,q,q}
+
(\,1 - \rho\,)\,\sum_{q} \globsenexp[i]{\ell}{p,q,q}.
\end{align}

As all the steps in this derivation are reversible, this proves that the condition in Eq. \eqref{eq:final_condition_derivatation} is necessary and sufficient for the MPNN to improve the SNR of the input features.
\end{proof}

\begin{restatable}[Bound for MPNN Jacobian]{lemma}{fulljacobian}
  \label{lemma:full_jacobian}
  Let $\nabla \mat{H}_{i}^{(\ell)}$ be the Jacobian of the $\nth{\ell}$ layer of an MPNN that uses the graph shift operator $\hat{\mat{A}}$ with message and update functions $\{M_k(\cdot,\cdot)\}_{k=1}^\ell$ and $\{U_k(\cdot,\cdot)\}_{k=1}^\ell$, as in Eq. \eqref{eq:mpnn}. Let $\norm{ \cdot }$ be the Euclidean norm, and $\nabla_1 f$ and $\nabla_2 f$ be the Jacobians of some function $f(\vect{x}_1,\vect{x}_2)$ with respect to $\vect{x}_1$ and $\vect{x}_2$, respectively. Assuming that there exist constants $\alpha_{1}, \alpha_{2}, \beta_{1}, \beta_{2}$ such that $\forall r\in[\ell]$ the message and update functions satisfy $\norm{ \nabla_1 U_r}\leq \alpha_{1}$, $\norm{ \nabla_2 U_r}\leq \alpha_{2}$, $\norm{\nabla_1 M_r}\leq \beta_{1}$, and $\norm{\nabla_2 M_r}\leq \beta_{2}$ then:
  \begin{align*}
    \left[\nabla H_{ip}^{(\ell)}\right]_{jq} \leq \left[\left(\alpha_{2} \beta_2 \hat{\mat{A}} + \alpha_{2}\beta_{1}\diag{\hat{\mat{A}}\ones{n}} + \alpha_{1} \eye{n}\right)^{\ell}\right]_{ij},
  \end{align*}
  where $\ones{n}$ is the size-$n$ vector of ones and $\eye{n}$ is the identity matrix of size $n$.\\\\
\end{restatable}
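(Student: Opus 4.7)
The plan is to proceed by induction on the layer index $\ell$, where the key recursion is derived by a single application of the chain rule to the MPNN update in Eq.~\eqref{eq:mpnn}. To avoid keeping track of feature indices $p,q$ throughout, I would first introduce the auxiliary quantity $J^{(\ell)}_{ik} := \left\|\,{\partial \mat{H}_i^{(\ell)}}/{\partial \mat{X}_k}\right\|$ using the operator norm, so that $\bigl|[\nabla H_{ip}^{(\ell)}]_{kq}\bigr| \le J^{(\ell)}_{ik}$. The base case is immediate: since $\mat{H}_i^{(0)} = \mat{X}_i$, we have $J^{(0)}_{ik} = \delta_{ik} = [\mat{M}^0]_{ik}$, where $\mat{M} := \alpha_2\beta_2 \hat{\mat{A}} + \alpha_2\beta_1 \diag{\hat{\mat{A}}\ones{n}} + \alpha_1 \eye{n}$ denotes the target propagation matrix.

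For the inductive step, I would differentiate the update rule $\mat{H}_i^{(\ell+1)} = U_\ell\bigl(\mat{H}_i^{(\ell)},\, \sum_j \hat{A}_{ij} M_\ell(\mat{H}_i^{(\ell)}, \mat{H}_j^{(\ell)})\bigr)$ with respect to $\mat{X}_k$. The chain rule produces three contributions, each corresponding precisely to one of the three terms in $\mat{M}$: (i) the derivative of $U_\ell$ through its first argument, contributing $\nabla_1 U_\ell \cdot \partial \mat{H}_i^{(\ell)}/\partial \mat{X}_k$ and bounded by $\alpha_1 J^{(\ell)}_{ik}$; (ii) the derivative of $U_\ell$ through its second argument composed with $M_\ell$'s dependence on the self-representation $\mat{H}_i^{(\ell)}$, contributing $\alpha_2 \beta_1 \bigl(\sum_j \hat{A}_{ij}\bigr) J^{(\ell)}_{ik} = \alpha_2 \beta_1 [\hat{\mat{A}}\ones{n}]_i \, J^{(\ell)}_{ik}$; and (iii) the derivative of $U_\ell$ through the aggregated messages via the neighbour-dependence of $M_\ell$, contributing $\alpha_2 \beta_2 \sum_j \hat{A}_{ij} J^{(\ell)}_{jk}$. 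Applying the triangle inequality and sub-multiplicativity of the operator norm gives
\begin{equation*}
J^{(\ell+1)}_{ik} \le \alpha_1 J^{(\ell)}_{ik} + \alpha_2\beta_1 [\hat{\mat{A}}\ones{n}]_i J^{(\ell)}_{ik} + \alpha_2\beta_2 \sum_j \hat{A}_{ij} J^{(\ell)}_{jk} = \sum_j M_{ij}\, J^{(\ell)}_{jk}.
\end{equation*}

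To close the induction, I would observe that $\mat{M}$ has nonnegative entries (since $\hat{\mat{A}}$ is a graph shift operator with nonnegative entries), so the entrywise inequality $J^{(\ell+1)} \le \mat{M}\, J^{(\ell)}$ can be iterated through $\mat{M}^\ell$ without sign cancellations: if $J^{(\ell)} \le \mat{M}^\ell$ entrywise then $J^{(\ell+1)} \le \mat{M} J^{(\ell)} \le \mat{M}^{\ell+1}$. Combining with the pointwise bound $\bigl|[\nabla H_{ip}^{(\ell)}]_{jq}\bigr| \le J^{(\ell)}_{ij}$ yields the claim. The main obstacle I anticipate is purely bookkeeping---carefully separating the two places where the source node's own representation $\mat{H}_i^{(\ell)}$ enters the update (directly in $U_\ell$'s first slot, and indirectly through $\nabla_1 M_\ell$ inside every aggregated message), since it is precisely this double occurrence that produces the diagonal $\diag{\hat{\mat{A}}\ones{n}}$ term and distinguishes the anisotropic bound ($\beta_1 > 0$) from the isotropic specialisation ($\beta_1 = 0$) used later.
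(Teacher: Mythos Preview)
Your proposal is correct and follows essentially the same approach as the paper: define the block-Jacobian norm $J^{(\ell)}_{ij}$, apply the chain rule to the MPNN update to obtain the three-term recursion, bound each term by sub-additivity and sub-multiplicativity, and unroll using $J^{(0)}=\eye{n}$. Your explicit remark that nonnegativity of $\mat{M}$ is what allows the entrywise inequality to be iterated is a point the paper leaves implicit, but otherwise the arguments coincide.
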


\begin{proof}
  Let $\left[\nabla \mat{H}_{i}^{(\ell)}\right]_{j} \in \mathbb{R}^{d_{\mathrm{in} \times d_{\mathrm{out}}}}$ be the Jacobian matrix between source node $j$ and target node $i$. By applying the chain rule to Eq. \eqref{eq:mpnn}, the Jacobian of the $\nth{\ell}$ MPNN layer is given by:
  \begin{align*}
    \left[\nabla \mat{H}_{i}^{(\ell)}\right]_{j} &= \nabla_1 U_\ell \left[\nabla{\mat{H}^{(\ell-1)}_i}\right]_j +\nabla_2 U_\ell \sum_{l\in N(i)}\hat{A}_{il}\left(\nabla_1M_\ell \left[\nabla{\mat{H}^{(\ell-1)}_i}\right]_{j} + \nabla_2M_\ell \left[\nabla{\mat{H}^{(\ell-1)}_l}\right]_{j}\right),\\
    &= \left(\nabla_1 U_\ell +\nabla_2U_\ell\nabla_1M_\ell\sum_{l\in N(i)}\hat{A}_{il}\right)\left[\nabla \mat{H}_{i}^{(\ell-1)}\right]_{j} +\nabla_2 U_\ell \sum_{l\in N(i)}\hat{A}_{il}\nabla_2M_\ell\left[\nabla \mat{H}_{l}^{(\ell-1)}\right]_{j}.
  \end{align*}
  Let $\norm{\cdot}_2$ be the induced 2-norm. By norm sub-additivity and sub-multiplicativity, we have:
  \begin{align*}
    \norm{\left[\nabla \mat{H}_{i}^{(\ell)}\right]_{j}}_2 &\le \left(\norm{\nabla_1 U_\ell}+\norm{\nabla_2 U_\ell}\norm{\nabla_1 M_\ell}\sum_{l\in N(i)}\hat{A}_{il}\right)\norm{ \left[\nabla \mat{H}_{i}^{(\ell-1)}\right]_{j}}_2 \\&\quad+ \norm{\nabla_2 U_\ell} \norm{\nabla_2M_\ell}\sum_{l\in N(i)}\hat{A}_{il}\norm{\left[\nabla \mat{H}_{l}^{(\ell-1)}\right]_{j}}_2\\
    & \le \left(\alpha_1+\alpha_2\beta_1\sum_{l\in N(i)}\hat{A}_{il}\right)\norm{\left[\nabla \mat{H}_{i}^{(\ell-1)}\right]_{j}}_2 + \alpha_2\beta_2\sum_{l\in N(i)}\hat{A}_{il}\norm{\left[\nabla \mat{H}_{l}^{(\ell-1)}\right]_{j}}_2\\
    & = \left[\left(\alpha_2\beta_2\hat{\mat{A}}\mat{J}^{(\ell-1)}\right)_{ij} + \left(\alpha_2\beta_1\diag{\hat{\mat{A}}\ones{n}}+\alpha_1 \eye{n}\right)_{ii} \mat{J}^{(\ell-1)}_{ij}\right],
  \end{align*}
  where $J^{(\ell)}_{ij}\coloneqq\norm{\left[\nabla \mat{H}_{i}^{(\ell)}\right]_{j}}_2$. The bound can be written as a single matrix multiplication
  
  \begin{align*}
    J^{(\ell)}_{ij}\coloneqq\norm{\left[\nabla \mat{H}_{i}^{(\ell)}\right]_{j}}_2 &\le \left[\left(\alpha_2\beta_2\hat{\mat{A}}+\alpha_2\beta_1\diag{\hat{\mat{A}}\ones{n}}+\alpha_1 \eye{n}\right) \mat{J}^{(\ell-1)}\right]_{ij},
  \end{align*}
  
  which when applied recursively yields
  \begin{align*}
  J^{(\ell)}_{ij} \le \left[\left(\alpha_2 \beta_2 \hat{\mat{A}} + \alpha_2\beta_1\diag{\hat{\mat{A}}\ones{n}} + \alpha_1 \eye{n}\right)^\ell \mat{J}^{(0)}\right]_{ij},
  \end{align*}
  We use the initial condition to obtain $\left[\nabla \mat{H}_{i}^{(0)}\right]_{j}=\delta_{ij} \eye{d_{\mathrm{in}}} \implies \mat{J}^{(0)} = \eye{n}$. The desired result follows as: 

  \begin{align}
    \left[\nabla H_{ip}^{(\ell)}\right]_{jq} \leq \norm{\left[\nabla \mat{H}_{i}^{(\ell)}\right]_{j}}_2 \leq \left[\left(\alpha_{2} \beta_2 \hat{\mat{A}} + \alpha_{2}\beta_{1}\diag{\hat{\mat{A}}\ones{n}} + \alpha_{1} \eye{n}\right)^{\ell}\right]_{ij}.
  \end{align}
\end{proof}

\sensitivitybound*

\begin{proof}
We begin by recalling from Lemma~\ref{lemma:full_jacobian} that for every node \(i\) and for each input node \(j\), the partial derivative of the output feature \(H_{ip}^{(\ell)}\) with respect to the input feature \(X_{jq}\) is bounded by
\begin{align}\label{eq:jacobian_bound}
\left|\frac{\partial H_{ip}^{(\ell)}}{\partial X_{jq}}\right| \le \left[\mat{K}^{\ell}\right]_{ij},
\end{align}
where the matrix \(\mat{K}\) is defined as
\begin{align*}
\mat{K} := \alpha_{2}\beta_{2}\,\hat{\mat{A}} + \alpha_{2}\beta_{1}\,\diag{\hat{\mat{A}}\ones{n}} + \alpha_{1}\,\eye{n}.
\end{align*}
Under the assumption that the upper bounds for the gradients of the message function satisfy \(\norm{\nabla_1 M_r},\,\norm{\nabla_2 M_r} \le \beta\) (so that we may set \(\beta_1 = \beta_2 = \beta\)), the matrix \(\mat{K}\) simplifies to
\begin{align}\label{eq:reduced_k_bound}
\mat{K} = \alpha_{1}\,\eye{n} + \alpha_{2}\beta\,\Bigl(\hat{\mat{A}} + \diag{\hat{\mat{A}}\ones{n}}\Bigr).
\end{align}
The signal sensitivity at node \(i\) is defined by
\begin{align}\label{eq:sigsen_def_thm}
\sigsenexp[i]{\ell}{p,q,r} = \sum_{j,k\in V} \frac{\partial H_{ip}^{(\ell)}}{\partial X_{jq}}\,\frac{\partial H_{ip}^{(\ell)}}{\partial X_{kr}}\big|_{\mat{X}=\mat{0}}\,\delta_{y_j y_k}.
\end{align}
Using the triangle inequality and applying the bound in Eq. \eqref{eq:jacobian_bound} on each derivative in Eq. \eqref{eq:sigsen_def_thm}, we obtain
\begin{align}\label{eq:double_k_sigsen_bound}
\abs{\sigsenexp[i]{\ell}{p,q,r}} \le \sum_{j,k\in V} \left[\mat{K}^{\ell}\right]_{ij}\,\left[\mat{K}^{\ell}\right]_{ik}\,\delta_{y_j y_k}.
\end{align}
Next, we note that the matrix \(\mat{K}^\ell\) in Eq. \eqref{eq:reduced_k_bound} can be expanded via the binomial theorem as
\begin{align*}
\mat{K}^{\ell} := \sum_{s=0}^{\ell} \binom{\ell}{s}\,\alpha_{1}^{\ell-s}\,(\alpha_{2}\beta)^{s}\,\Bigl(\hat{\mat{A}}+\diag{\hat{\mat{A}}\ones{n}}\Bigr)^{s}.
\end{align*}
Thus, for any node \(i\) and any other node \(j\), we have the entry‐wise bound
\begin{align*}
\left[\mat{K}^{\ell}\right]_{ij} \le \sum_{s=0}^{\ell} \binom{\ell}{s}\,\alpha_{1}^{\ell-s}\,(\alpha_{2}\beta)^{s}\,\Bigl[\Bigl(\hat{\mat{A}}+\diag{\hat{\mat{A}}\ones{n}}\Bigr)^{s}\Bigr]_{ij}.
\end{align*}
Multiplying the bounds for $\left[\mat{K}^{\ell}\right]_{ij}$ and $\left[\mat{K}^{\ell}\right]_{ik}$ together yields
\begin{align}\label{eq:double_derivative_bound}
\left[\mat{K}^{\ell}\right]_{ij}\,\left[\mat{K}^{\ell}\right]_{ik} \le \sum_{s=0}^{\ell}\sum_{t=0}^{\ell} \binom{\ell}{s}\,\binom{\ell}{t}\,\alpha_{1}^{2\ell-s-t}\,(\alpha_{2}\beta)^{s+t}\,\Bigl[\Bigl(\hat{\mat{A}}+\diag{\hat{\mat{A}}\ones{n}}\Bigr)^{s}\Bigr]_{ij}\,\Bigl[\Bigl(\hat{\mat{A}}+\diag{\hat{\mat{A}}\ones{n}}\Bigr)^{t}\Bigr]_{ik}.
\end{align}
Substituting the expression in Eq.\eqref{eq:double_derivative_bound} back into the bound for \(\sigsenexp[i]{\ell}{p,q,r}\) in Eq. \eqref{eq:double_k_sigsen_bound} and changing the order of summation, we obtain
\begin{align}\label{eq:sub_double_k_bound_into_sigsen}
\abs{\sigsenexp[i]{\ell}{p,q,r}} \le \sum_{s=0}^{\ell}\sum_{t=0}^{\ell} \binom{\ell}{s}\,\binom{\ell}{t}\,\alpha_{1}^{2\ell-s-t}\,(\alpha_{2}\beta)^{s+t} \sum_{j,k\in V} \Bigl[\Bigl(\hat{\mat{A}}+\diag{\hat{\mat{A}}\ones{n}}\Bigr)^{s}\Bigr]_{ij}\,\Bigl[\Bigl(\hat{\mat{A}}+\diag{\hat{\mat{A}}\ones{n}}\Bigr)^{t}\Bigr]_{ik}\,\delta_{y_j y_k}.
\end{align}
The inner sum over \(j\) and \(k\) with the indicator \(\delta_{y_j y_k}\) precisely defines the \(s,t\) order class-bottlenecking score of the graph shift operator \(\hat{\mat{A}}+\diag{\hat{\mat{A}}\ones{n}}\) at node \(i\), following Eq. \eqref{eq:bottleneck_homophily_connection}, which gives
\begin{align*}
\localhomo{s,t}{\hat{\mat{A}}+\diag{\hat{\mat{A}}\ones{n}}}{i} = \sum_{j,k\in V} \Bigl[\Bigl(\hat{\mat{A}}+\diag{\hat{\mat{A}}\ones{n}}\Bigr)^{s}\Bigr]_{ij}\,\Bigl[\Bigl(\hat{\mat{A}}+\diag{\hat{\mat{A}}\ones{n}}\Bigr)^{t}\Bigr]_{ik}\,\delta_{y_j y_k}.
\end{align*}
We can therefore rewrite the bound in Eq. \eqref{eq:sub_double_k_bound_into_sigsen} as

\begin{align}\label{eq:final_signsen_bound}
\abs{\sigsenexp[i]{\ell}{p,q,r}} \le \sum_{s=0}^{\ell}\sum_{t=0}^{\ell} \binom{\ell}{s}\,\binom{\ell}{t}\,\alpha_{1}^{2\ell-s-t}\,(\alpha_{2}\beta)^{s+t} \localhomo{s,t}{\hat{\mat{A}}+\diag{\hat{\mat{A}}\ones{n}}}{i}.
\end{align}
Eq. \eqref{eq:final_signsen_bound} is exactly the desired bound on the signal sensitivity. 
As for global sensitivity, recall that its definition
is given by:
\begin{align}\label{eq:recall_globsen_def}
\globsenexp[i]{\ell}{p,q,r} = \sum_{j,k\in V} \frac{\partial H_{ip}^{(\ell)}}{\partial X_{jq}} \frac{\partial H_{ip}^{(\ell)}}{\partial X_{mr}}\Bigg|_{\mat{X}=\mathbf{0}}.
\end{align}
Note that Eq. \eqref{eq:recall_globsen_def} involves the summation over all pairs of source nodes instead of just pairs of source nodes with the same class, without the Kronecker delta function as with signal sensitivity in Eq. \eqref{eq:sigsen_def_thm}. Applying the same bound as in Eq.~\eqref{eq:jacobian_bound} and following the same steps but without the Kronecker delta function, leads directly to an expression analogous to that obtained for the signal sensitivity in Eq. \eqref{eq:final_signsen_bound}---the only change is that the class-bottlenecking score $\localhomo{s,t}{\hat{\mat{A}}+\diag{\hat{\mat{A}}\ones{n}}}{i}$ is replaced by the total-bottlenecking $\localconn{s,t}{\hat{\mat{A}}+\diag{\hat{\mat{A}}\ones{n}}}{i}$, defined in Eq. \eqref{eq:p_order_local_homophily}, giving
\begin{align*}
\abs{\globsenexp[i]{\ell}{p,q,r}} \le \sum_{s=0}^{\ell}\sum_{t=0}^{\ell} \binom{\ell}{s}\,\binom{\ell}{t}\,\alpha_{1}^{2\ell-s-t}\,(\alpha_{2}\beta)^{s+t} \localconn{s,t}{\hat{\mat{A}}+\diag{\hat{\mat{A}}\ones{n}}}{i}.
\end{align*}

A similar extension applies for the noise sensitivity, defined as
\begin{align*}
\noisesenexp[i]{\ell}{p,q,r} = \sum_{j\in V} \frac{\partial H_{ip}^{(\ell)}}{\partial X_{jq}}\frac{\partial H_{ip}^{(\ell)}}{\partial X_{jr}},
\end{align*}
where the sum is over identical source nodes $j\in V$ instead of over node pairs $j,m\in V$ as in Eqs. \eqref{eq:sigsen_def_thm} and \eqref{eq:recall_globsen_def}; so the same bound on the partial derivatives (Eq. \eqref{eq:jacobian_bound}) gives rise to a corresponding sum in which the self-bottlenecking $\localselfconn{s,t}{\hat{\mat{A}}+\diag{\hat{\mat{A}}\ones{n}}}{i}$, defined in Eq. \eqref{eq:p_order_local_homophily}, replaces the class-bottlenecking score. Thus, we have
\begin{align*}
\abs{\noisesenexp[i]{\ell}{p,q,r}} \le \sum_{s=0}^{\ell}\sum_{t=0}^{\ell} \binom{\ell}{s}\,\binom{\ell}{t}\,\alpha_{1}^{2\ell-s-t}\,(\alpha_{2}\beta)^{s+t} \localselfconn{s,t}{\hat{\mat{A}}+\diag{\hat{\mat{A}}\ones{n}}}{i}.
\end{align*}

In the case of isotropic MPNN models, where we assume $\norm{\nabla_1 M_r}=0$, the contribution from the diagonal term vanishes. Consequently, the matrix $\mat{K}$ simplifies to
\begin{align*}
\mat{K} = \alpha_{1}\,\eye{n} + \alpha_{2}\beta\,\hat{\mat{A}},
\end{align*}
and following the exact same steps as before gives:
\begin{align*}
&\abs{\sigsenexp[i]{\ell}{p,q,r}} \le \sum_{s=0}^{\ell}\sum_{t=0}^{\ell} \binom{\ell}{s}\,\binom{\ell}{t}\,\alpha_{1}^{2\ell-s-t}\,(\alpha_{2}\beta)^{s+t} \localhomo{s,t}{\hat{\mat{A}}}{i},\\
&\abs{\globsenexp[i]{\ell}{p,q,r}} \le \sum_{s=0}^{\ell}\sum_{t=0}^{\ell} \binom{\ell}{s}\,\binom{\ell}{t}\,\alpha_{1}^{2\ell-s-t}\,(\alpha_{2}\beta)^{s+t} \localconn{s,t}{\hat{\mat{A}}}{i},\\
&\abs{\noisesenexp[i]{\ell}{p,q,r}} \le \sum_{s=0}^{\ell}\sum_{t=0}^{\ell} \binom{\ell}{s}\,\binom{\ell}{t}\,\alpha_{1}^{2\ell-s-t}\,(\alpha_{2}\beta)^{s+t} \localselfconn{s,t}{\hat{\mat{A}}}{i}.
\end{align*}

\end{proof}

\avgsensbound*

\begin{proof}
We begin by recalling from Theorem \ref{theorem:general_signal_sensntivity_bound} that for an isotropic MPNN the node-level signal sensitivity at node \(i\) is bounded by:
\begin{equation}\label{eq:step1}
\abs{\sigsenexp[i]{\ell}{p,q,r}} \leq \sum_{s,t=0}^{\ell} \binom{\ell}{s}\binom{\ell}{t}\,\alpha_1^{2\ell-s-t} (\alpha_2\beta)^{s+t}\,\localhomo{s,t}{\hat{\mat{A}}}{i}.
\end{equation}
Averaging both sides of Eq. \eqref{eq:step1} over all nodes \(i \in V:=[n]\) and using Jenson's inequality, yields
\begin{equation}\label{eq:step2}
\abs{\overbar{\sigsenexp[p]{\ell}{q,r}}} \leq \overbar{\abs{\sigsenexp[p]{\ell}{q,r}}} \leq \frac{1}{n} \sum_{i\in V} \sum_{s,t=0}^{\ell} \binom{\ell}{s}\binom{\ell}{t}\,\alpha_1^{2\ell-s-t} (\alpha_2\beta)^{s+t}\,\localhomo{s,t}{\hat{\mat{A}}}{i}.
\end{equation}
Since the sums over \(i\) and over the indices \(s,t\) can be interchanged, we rewrite Eq. \eqref{eq:step2} as
\begin{equation}\label{eq:step3}
\abs{\overbar{\sigsenexp[p]{\ell}{q,r}}} \leq \sum_{s,t=0}^{\ell} \binom{\ell}{s}\binom{\ell}{t}\,\alpha_1^{2\ell-s-t} (\alpha_2\beta)^{s+t} \left( \frac{1}{n} \sum_{i\in V} \localhomo{s,t}{\hat{\mat{A}}}{i} \right).
\end{equation}
Next, we note that by the definition of class-bottlenecking score,
\begin{align*}
\localhomo{s,t}{\hat{\mat{A}}}{i} \coloneqq \sum_{j,k\in V} \left[\hat{\mat{A}}^s\right]_{ij}\left[\hat{\mat{A}}^t\right]_{ik}\delta_{y_j y_k},
\end{align*}
so that averaging over \(i\) gives:
\begin{equation}\label{eq:step4}
\frac{1}{n} \sum_{i\in V} \localhomo{s,t}{\hat{\mat{A}}}{i} = \frac{1}{n}\sum_{j,k\in V}\left[\hat{\mat{A}}^{s+t}\right]_{jk}\delta_{y_j y_k} = \rhomo{s+t}{\hat{\mat{A}}}.
\end{equation}
Substituting Eq. \eqref{eq:step4} into Eq. \eqref{eq:step3} yields
\begin{align*}
\abs{\overbar{\sigsenexp[p]{\ell}{q,r}}} \leq \sum_{s,t=0}^{\ell} \binom{\ell}{s}\binom{\ell}{t}\,\alpha_1^{2\ell-s-t} (\alpha_2\beta)^{s+t}\,\rhomo{s+t}{\hat{\mat{A}}}.
\end{align*}
We now introduce the new index \(u=s+t\); for each fixed \(k\) the pairs \((s,t)\) contribute:
\begin{align*}
\sum_{\substack{s,t\geq0 \\ s+t=u}} \binom{\ell}{s}\binom{\ell}{t}=\binom{2\ell}{u},
\end{align*}
where the equality arises from Vandermonde's identity. This indexing allows rewriting the bound as:
\begin{equation}\label{eq:final_bound}
\abs{\overbar{\sigsenexp[p]{\ell}{q,r}}} \leq \sum_{u=0}^{2\ell} \binom{2\ell}{u}\,\alpha_1^{2\ell-u} (\alpha_2\beta)^u\,\rhomo{u}{\hat{\mat{A}}}.
\end{equation}
Eq. \eqref{eq:final_bound} is precisely the first inequality in Eq. \eqref{eq:avg_sensitivity_bound}.

The derivations for the average global and noise sensitivities follow by analogous arguments. In these cases the class-bottlenecking score \(\localhomo{s,t}{\hat{\mat{A}}}{i}\) is replaced respectively by the total-bottlenecking \(\localconn{s,t}{\hat{\mat{A}}}{i}\) and the self-bottlenecking \(\localselfconn{s,t}{\hat{\mat{A}}}{i}\) scores.
\end{proof}

\underreaching*
\begin{proof}
  The proof follows by considering the first-order asymptotic approximation of Eq. (26) in \cite{loomba2021geodesic}.
\end{proof}
\oversquashing*
\begin{proof}

  By definition, the symmetric normalised adjacency matrix is given by $ \hat{\mat{A}}_{\mathrm{sym}} := \mat{D}^{-\frac{1}{2}} \mat{A} \mat{D}^{-\frac{1}{2}}$ where $\mat{D}$ is the diagonal matrix of node degrees. The LHS of Eq. \eqref{eq:oversquashing} can then be written as:
  \begin{equation}\label{eq:condexpect_adjpower}
  \begin{split}\condexpect{\left[\hat{\mat{A}}^r_\mathrm{sym}\right]_{ij}}{\lambda_{ij} = r} &= \condexpect{\frac{1}{\sqrt{D_{ii}D_{jj}}}\sum_{k_1,k_2,\dots,k_{r-1}=1}^n\frac{A_{ik_1}A_{k_1k_2}\dots A_{k_{r-1}j}}{D_{k_1k_1}D_{k_2k_2}\dots D_{k_{r-1}k_{r-1}}}}{\lambda_{ij}=r}\\
  &=\sum_{\substack{k_1,k_2,\dots,k_{r-1}=1\\i\ne k_1\ne k_2 \dots \ne k_{r-1} \ne j}}^n\condexpect{\frac{1}{\sqrt{D_{ii}D_{jj}}}\frac{A_{ik_1}A_{k_1k_2}\dots A_{k_{r-1}j}}{D_{k_1k_1}D_{k_2k_2}\dots D_{k_{r-1}k_{r-1}}}}{\lambda_{ij}=r},
  \end{split}
  \end{equation}
  where we use the linearity of expectation and the fact that if the shortest path distance from $i$ to $j$ is $r$ then a walk of length $r$ from $i$ to $j$ via nodes $k_1, k_2,\dots k_{r-1}$ must be a path, i.e. $i\ne k_1\ne k_2\dots \ne k_{r-1}\ne j$. For brevity we will define $k_0\coloneqq i,k_r\coloneqq j$ and refer to the sequence $\{k_l\}_{l=0}^r$ as the length-$r$ path of interest. Given the definition of the adjacency matrix, we can write the conditional expectation on the RHS of Eq. \eqref{eq:condexpect_adjpower} as:
  \begin{align}\label{eq:condexpect_adjpower2}
    \condexpect{\left(\sqrt{D_{ii}D_{jj}}\prod_{l=1}^{r-1}D_{ll}\right)^{-1}}{\prod_{l=0}^{r-1}A_{k_lk_{l+1}}=1,\lambda_{ij}=r}\condprob{\prod_{l=0}^{r-1}A_{k_lk_{l+1}}=1}{\lambda_{ij}=r}.
  \end{align}
  
  Consider the first factor in Eq. \eqref{eq:condexpect_adjpower2}. Knowing that $\prod_{l=0}^{r-1}A_{k_lk_{l+1}}=1$ tell us that there \emph{must exist} edges between nodes $k_l$ and $k_{l+1}$. Knowing further that $\lambda_{ij}=r$ tell us that the path $\{k_l\}_{l=0}^r$ is a shortest path, i.e. there \emph{cannot exist} paths shorter than length $m$ between nodes $k_l$ and $k_{l+m}$. Asymptotically, the probability of paths shorter than length $m$ (for any finite $m$) \emph{not existing} between any two nodes in a sparse graph is already $1-\smallo{1}$ (see Lemma \ref{lemma:underreaching} or \cite{loomba2021geodesic}), i.e. the latter asymptotically does not inform the expectation of our quantity of interest. Furthermore, since edges are added between every node pair (conditionally) independently they affect---and can \emph{only} affect---the degree of the nodes to which the edges are attached. This, alongside the fact that every node in the path $\{k_l\}_{l=0}^r$ is unique, permits us to asymptotically approximate the first factor in Eq. \eqref{eq:condexpect_adjpower2} as:
  \begin{equation*}
    \condexpect{D_{ii}^{-\frac{1}{2}}}{A_{ik_1}}\condexpect{D_{jj}^{-\frac{1}{2}}}{A_{k_{r-1}j}}\prod_{l=1}^{r-1}\condexpect{D_{k_lk_l}^{-1}}{A_{k_{l-1}k_l}A_{k_lk_{l+1}}}.
  \end{equation*}
  Asymptotically, ignoring a single or two nodes has a vanishing effect on the degree of another node in a sparse graph with (conditionally) independent edges. In other words, knowing about the existence of a single or two edges attached to a given node merely shifts its degree distribution by one or two, respectively:
  \begin{equation*}
  \begin{split}
    \condexpect{D_{ii}^{-\frac{1}{2}}}{A_{ik_1}} &\approx \expect{\left(D_{ii}+1\right)^{-\frac{1}{2}}},\\
    \condexpect{D_{jj}^{-\frac{1}{2}}}{A_{k_{l-1}j}} &\approx \expect{\left(D_{jj}+1\right)^{-\frac{1}{2}}},\\
    \condexpect{D_{k_lk_l}^{-1}}{A_{k_{l-1}k_l}A_{k_lk_{l+1}}} &\approx \expect{\left(D_{k_lk_l}+2\right)^{-1}}.
  \end{split}
  \end{equation*}
  Asymptotically, the degree of a given node in a sparse graph with (conditionally) independent edges is Poisson distributed whose rate is given by its mean degree \cite{loomba2021geodesic}. This allows us to apply the results in Eqs. \eqref{eq:poisson_moments_xplus2} and \eqref{eq:poisson_moments_sqrtxplus1} in Proposition \ref{theorem:poisson_transformation} to write the first factor of Eq. \eqref{eq:condexpect_adjpower2} as:
  \begin{equation}\label{eq:condexpect_adjpower2_factor1}
  \begin{split}
    &\condexpect{\left(\sqrt{D_{ii}D_{jj}}\prod_{l=1}^{r-1}D_{ll}\right)^{-1}}{\prod_{l=0}^{r-1}A_{k_lk_{l+1}}=1,\lambda_{ij}=r}\lessapprox\left(\avg{D_{ii}}\avg{D_{jj}}\right)^{-\frac{1}{2}}\\&\times\prod_{l=1}^{r-1}\left(\avg{D_{k_lk_l}}^{-1} - \avg{D_{k_lk_l}}^{-2}\left(1-e^{-\avg{D_{k_lk_l}}}\right)\right),
    \end{split}
  \end{equation}
  and the bound is tight for large node mean degrees. Consider the second factor in Eq. \eqref{eq:condexpect_adjpower2} that can be rewritten as:
  \begin{align*}
    \condprob{\lambda_{ij}=r}{\prod_{l=0}^{r-1}A_{k_lk_{l+1}}=1}\frac{\prob{\prod_{l=0}^{r-1}A_{k_lk_{l+1}}=1}}{\prob{\lambda_{ij}=r}}.
  \end{align*}
  Knowing that $\prod_{l=0}^{r-1}A_{k_lk_{l+1}}=1$, i.e. there exists a path of length $r$ between $i$ and $j$, tell us that the shortest path between $i$ and $j$ cannot be longer than $r$. Asymptotically, it tells us nothing about whether there exists a path shorter than length $r$ between them. Since, \emph{a priori}, the probability of the shortest path being less than length $r$ is asymptotically vanishing (see Lemma \ref{lemma:underreaching} or \cite{loomba2021geodesic}), this implies that $ \condprob{\lambda_{ij}=r}{\prod_{l=0}^{r-1}A_{k_lk_{l+1}}=1}=1-\smallo{1}$. Finally, due to conditional independence of edges, and considering the first-order approximation of Eq. (25) in \cite{loomba2021geodesic}, allows us to write the second factor of Eq. \eqref{eq:condexpect_adjpower2} as:
  \begin{align}\label{eq:condexpect_adjpower2_factor2}
    \condprob{\prod_{l=0}^{r-1}A_{k_lk_{l+1}}=1}{\lambda_{ij}=r}\approx \frac{\prod_{l=0}^{r-1}\expect{A_{k_lk_{l+1}}}}{\left[\expect{\mat{A}}^r\right]_{ij}}.
  \end{align}
  Putting Eqs. \eqref{eq:condexpect_adjpower2_factor1} and \eqref{eq:condexpect_adjpower2_factor2} in Eq. \eqref{eq:condexpect_adjpower} yields:
  \begin{subequations}
  \begin{align}\label{eq:condexpect_adjpower3}     \condexpect{\left[\hat{\mat{A}}^r_\mathrm{sym}\right]_{ij}}{\lambda_{ij} = r}&\lessapprox\frac{\left(\avg{D_{ii}}\avg{D_{jj}}\right)^{-\frac{1}{2}}}{\left[\expect{\mat{A}}^r\right]_{ij}}\sum_{\substack{k_1,k_2,\dots,k_{r-1}=1\\i\ne k_1\ne k_2 \dots \ne k_{r-1} \ne j}}^n S\left(i,j,\{k_l\}_{l=1}^{r-1}\right), \textrm{ where}\\\label{eq:condexpect_adjpower3_S} 
    S\left(i,j,\{k_l\}_{l=1}^{r-1}\right)&\coloneqq\expect{A_{ik_1}}\prod_{l=1}^{r-1}\left(\avg{D_{k_lk_l}}^{-1} - \avg{D_{k_lk_l}}^{-2}\left(1-e^{-\avg{D_{k_lk_l}}}\right)\right)\expect{A_{k_lk_{l+1}}}.
  \end{align}
  \end{subequations}
  Consider the term on the RHS of Eq. \eqref{eq:condexpect_adjpower3_S}. Due to the sparsity assumption $\expect{\mat{A}}=\bigOold{n^{-1}}$ we have $S\left(i,j,\{k_l\}_{l=1}^{r-1}\right)=\bigOold{n^{-r}}$. We separately consider what happens when $S(i,j,\{k_l\}_{l=1}^{r-1})$ is summed over different kinds of index combinations $\{k_l\}_{l=1}^{r-1}$.
  
  First, consider unique index combinations $\{k_l\}_{l=1}^{r-1}$ of size $r-1$ from $[n]\setminus\{i,j\}$, as in the RHS of Eq. \eqref{eq:condexpect_adjpower3} since $\{k_l\}_{l=0}^{r}$ encodes a shortest path. There are $\frac{(n-2)!}{(n-r-1)!}=\bigOold{n^{r-1}}$ such index combinations which yields a total contribution of order $\bigOold{n^{-1}}$ to the RHS of Eq. \eqref{eq:condexpect_adjpower3}.

  Next, consider unique index combinations $\{k_l\}_{l=1}^{r-1}$ of size $r-1$ from $[n]$, such that exactly one of the $r-1$ indices is either $i$ or $j$, which \emph{do not} contribute to the RHS of Eq. \eqref{eq:condexpect_adjpower3}. There are $2(r-1)\frac{(n-2)!}{(n-r)!}=\bigOold{n^{r-2}}$ such index combinations which yields a total contribution of $\bigOold{n^{-2}}$.

  Now, consider unique index combinations $\{k_l\}_{l=1}^{r-1}$ of size $r-1$ from $[n]$, such that exactly one of the $r-1$ indices is $i$ and exactly another one is $j$, which \emph{do not} contribute to the RHS of Eq. \eqref{eq:condexpect_adjpower3}. There are $(r-1)(r-2)\frac{(n-2)!}{(n-r+1)!}=\bigOold{n^{r-3}}$ such index combinations which yields a total contribution of $\bigOold{n^{-3}}$.
  
  Finally, consider non-unique index combinations $\{k_l\}_{l=1}^{r-1}$ of size $r-1$ from $n$, such that there are $1\le m<r-1$ unique indices in the sequence $\{k_l\}_{l=1}^{r-1}$ repeated $t_1, t_2, \dots, t_m$ number of times such that $\forall l\in[m]: t_l\ge 1$ and $\sum_{l=1}^mt_l=r-1$, which \emph{do not} contribute to the RHS of Eq. \eqref{eq:condexpect_adjpower3}. There can be $\frac{(r-1)!}{t_1!t_2!\dots t_m!}\frac{n!}{(n-m)!} = \bigOold{n^{m}}$ such index combinations which yields a total contribution of $\bigOold{n^{-r+m}}$. Since $1\le m<r-1$, considering a sum over all possible values of $m$ yields a total contribution of all non-unique index combinations as $\bigOold{n^{-2}}$.
  
  This exhausts all possible index combinations, which leads us to conclude that asymptotically only the unique index combinations contribute relatively non-vanishingly. In other words, replacing the sum over \emph{unique} index combinations by a sum over \emph{all} index combinations makes a vanishing difference to the RHS of Eq. \eqref{eq:condexpect_adjpower3}, allowing us to rewrite it as a product of matrices which yields the RHS of Eq. \eqref{eq:condexpect_adjpower}.
\end{proof}
\underreachingoversquashing*
\begin{proof}
As in Lemma \ref{lemma:underreaching}, by considering the first-order asymptotic approximation of Eq. (25) in \cite{loomba2021geodesic}, we have that for an undirected and simple graph $G$ with adjacency matrix $\mat{A}$ sampled from a general random graph family with conditionally independent edges and expected adjacency matrix $\expect{\mat{A}}$, under the sparsity conditions, the cumulative distribution function of the shortest path length $\lambda_{ij}$ between nodes $i$ and $j$ is approximately:
\begin{align}
\prob{\lambda_{ij}\leq r} \approx 1-\mathrm{exp}\left(-\left[ \sum_{s=1}^r \expect{\mat{A}}^s \right]_{ij}\right) \approx \left[ \sum_{s=1}^r \expect{\mat{A}}^s \right]_{ij},
\end{align}
where $\approx$ here means a first-order approximation up to order $o\left(\frac{1}{n}\right)$.

Therefore, the probability that the shortest path length between $i$ and $j$ is exactly $r$ is given by:
\begin{align}
\prob{\lambda_{ij} = r} = \prob{\lambda_{ij}\leq r} - \prob{\lambda_{ij}\leq r-1} \approx \left[ \expect{\mat{A}}^r \right]_{ij},
\end{align}
which provides the first part of the theorem.

For the second part, using Lemma \ref{lemma:oversquashing}, and assuming large expected degrees but still much smaller than the number of nodes, i.e. $\avg{d}$ is large whilst $\avg{d} = o(n)$, the boundary oversquashing between nodes $i$ and $j$ from Eq. \eqref{eq:oversquashing} is asymptotically bounded by:

\begin{align}\label{eq:assymptotic_boundary_oversquashing}
\condexpect{\left[ \hat{\mat{A}}^r_\mathrm{sym} \right]_{ij} }{ \lambda_{ij} = r } \lessapprox \frac{\left[ \avg{\mat{D}}^{-\frac{1}{2}} \expect{\mat{A}} \left( \avg{\mat{D}}^{-1} \expect{\mat{A}} \right)^{r-1} \avg{\mat{D}}^{-\frac{1}{2}} \right]_{ij} + O\left( \frac{1}{n\avg{d}} \right)}{\left[ \expect{\mat{A}}^r \right]_{ij}},
\end{align}

as a result of combining any higher-order degree terms into $O\left( \frac{1}{n\avg{d}}\right)$. Rewriting the numerator of Eq. \eqref{eq:assymptotic_boundary_oversquashing} as $\left[ \avg{\mat{D}}^{-\frac{1}{2}} \expect{\mat{A}} \left( \avg{\mat{D}}^{-1} \expect{\mat{A}} \right)^{r-1} \avg{\mat{D}}^{-\frac{1}{2}} \right]_{ij} = \left[ \left( \avg{\mat{D}}^{-\frac{1}{2}} \expect{\mat{A}} \avg{\mat{D}}^{-\frac{1}{2}} \right)^r \right]_{ij}$, we obtain:

\begin{align}\label{eq:better_assymptotic_boundary_oversquashing}
\condexpect{\left[ \hat{\mat{A}}^r_\mathrm{sym} \right]_{ij} }{ \lambda_{ij} = r } \lessapprox \frac{\left[ \left( \avg{\mat{D}}^{-\frac{1}{2}} \expect{\mat{A}} \avg{\mat{D}}^{-\frac{1}{2}} \right)^r \right]_{ij} + O\left( \frac{1}{n\avg{d}} \right)}{\left[ \expect{\mat{A}}^r \right]_{ij}}.
\end{align}

As $\left[\expect{\mat{A}}^r \right]_{ij}= O\left( \frac{\avg{d}^{r}}{n} \right)$, then Eq. \eqref{eq:better_assymptotic_boundary_oversquashing} becomes:

\begin{align*}
& \frac{\left[ \left( \avg{\mat{D}}^{-\frac{1}{2}} \expect{\mat{A}} \avg{\mat{D}}^{-\frac{1}{2}} \right)^r \right]_{ij} + O\left( \frac{1}{n\avg{d}} \right)}{\left[ \expect{\mat{A}}^r \right]_{ij}} = \frac{\left[ \left( \avg{\mat{D}}^{-\frac{1}{2}} \expect{\mat{A}} \avg{\mat{D}}^{-\frac{1}{2}} \right)^r \right]_{ij}}{\left[ \expect{\mat{A}}^r \right]_{ij}} + O\left( \frac{1}{\avg{d}^{r+1}} \right).
\end{align*}

Now, let us consider the case where $\lambda_{ij} = t < r$. To analyse this scenario, we can decompose the conditional expectation as a sum over all possible walks of length $r$ from node $i$ to node $j$:

\begin{align}\label{eq:ungrouped_terms}
& \condexpect{\left[\hat{\mat{A}}^r_\mathrm{sym}\right]_{ij}}{\lambda_{ij} = t} = \sum_{w \in \mathcal{W}_{ij}^r} \prob{w \in W_G \mid \lambda_{ij} = t} \condexpect{\prod_{(u,v) \in w} \frac{1}{\sqrt{D_{u} D_{v}}}}{w \in W_G, \lambda_{ij} = t}
\end{align}

where $\mathcal{W}_{ij}^r$ represents the set of all possible walks of length $r$ from $i$ to $j$, $w$ is the sequence of edges $(u,v)$ in the walk, and $W_G$ is the set of all walks in a given graph $G$. We can further refine this sum by grouping walks according to their number of unique edges, denoting each walk as $w_s$, of length $r$ (possibly repeated) edges, but where $s = |w_s|$ is its number of unique edges. Using this grouping walks, the sum in Eq. \eqref{eq:ungrouped_terms} can be written as:

\begin{align}\label{eq:grouped_terms}
 \sum_{s=t}^r \sum_{\substack{w_s \in \mathcal{W}_{ij}^r\\|w_s|=s}} \prob{w_s \in W_G \mid \lambda_{ij} = t} \condexpect{\prod_{(u,v) \in w_s} \frac{1}{\sqrt{D_{u} D_{v}}}}{w_s \in W_G, \lambda_{ij} = t}
\end{align}

To approximate Eq. \eqref{eq:grouped_terms}, we separately consider the two factors that appear in each term of the sum: firstly the normalisation factors $\condexpect{\prod_{(u,v) \in w_s} \frac{1}{\sqrt{D_{u} D_{v}}}}{w_s \in W_G, \lambda_{ij} = t}$ and then the probability of occurrence of paths $\prob{w_s \in W_G \mid \lambda_{ij} = t} $. 

The normalisation factors $\condexpect{\prod_{(u,v) \in w_s} \frac{1}{\sqrt{D_{u} D_{v}}}}{w_s \in W_G, \lambda_{ij} =t}$ can be written as a product of conditional expectations over the unique nodes in the walk $w_s$, just as in the proof of Lemma \ref{lemma:oversquashing}, in total contributing a factor that scales with $O\left( \frac{1}{\avg{d}^{r}} \right)$, which can be shown by firstly writing the product as:

\begin{align}
  & \condexpect{\prod_{(u,v) \in w_s} \frac{1}{\sqrt{D_{u} D_{v}}}}{w_s \in W_G, \lambda_{ij} = t} = \condexpect{\frac{1}{\sqrt{D_{i} D_{j}}}\prod_{u=1,p_1+\dots+p_s=r-1}^{s-1} \frac{1}{D_u^{p_{u}}}}{w_s \in W_G, \lambda_{ij} = t},
\end{align}
where the sequence of integers $(p_u)$ represents the number of crossings of the particular walk $w_s$ through each node $u$ along the walk (excluding the endpoints of the walk $i$ and $j$). The total number of such crossings in a walk of length $r$ is $r-1$, hence their sum is $p_1+\dots+p_s=r-1$. Here, knowledge that the walk $w_s$ passes through any node $u$ means that the degree of node $u$ must be increased by at least 1 over the non-conditional node degree (depending on the nature of the walk, knowledge of the walk could increase the node's degree even more). Therefore:
\begin{align}
  & \condexpect{\prod_{(u,v) \in w_s} \frac{1}{\sqrt{D_{u} D_{v}}}}{w_s \in W_G, \lambda_{ij} = t}\\
  & \leq \expect{\frac{1}{\sqrt{D_{i}+1}}} \expect{\frac{1}{\sqrt{D_{j}+1}}}\prod_{u=1,p_1+\dots+p_s=r-1}^{s-1} \expect{\frac{1}{(1+D_u)^{p_{u}}}},
\end{align}
where, asymptotically, the degree of a given node in a sparse graph with (conditionally) independent edges is Poisson distributed whose rate is given by its mean degree. Using Eq. \eqref{eq:poisson_moments_sqrtxplus1} from Preposition \ref{theorem:poisson_transformation} for the factors outside the product, and the general result in Preposition \ref{theorem:general_poisson_transformation} for the factors inside the product, we can bound the individual expectations in the product as:
\begin{align}
  & \leq \frac{1}{\sqrt{\avg{D}_u \avg{D}_v}} \prod_{u=1,p_1+\dots+p_s=r-1}^{s-1} \frac{1}{\avg{D}_u^{p_u}} =  \bigOold{\frac{1}{\avg{d}^r}}
\end{align}

Now, following the same argument given for Eq. \eqref{eq:condexpect_adjpower2_factor2} in Lemma \ref{lemma:oversquashing}, for a simple path $w_t$ between nodes $i$ and $j$ using exactly $t$ unique edges, i.e. a walk with no cycles or backtracking of length $t$, knowledge that the shortest path between $i$ and $j$ is of length $t$ tells us that the shortest path between them cannot be longer than $t$. Asymptotically, it tells us nothing about whether there exists a path shorter than length $t$ between nodes $i$ and $j$. Since, \emph{a priori}, the probability of the shortest path being less than length $t$ is asymptotically vanishing (see Lemma \ref{lemma:underreaching} or \cite{loomba2021geodesic}), this implies that $ \condprob{\lambda_{ij}=t}{w_t \in W_G}=1-\smallo{1}$. Finally, due to conditional independence of edges, and considering the first-order approximation of Eq. (25) in \cite{loomba2021geodesic}, the probability of the existence of such a path is given by:

\begin{align}
   \prob{w_t \in W_G \mid \lambda_{ij} = t} & = \condprob{\prod_{\substack{l=0: (k_l,k_{l+1})\in w_t}}^{t-1}A_{k_lk_{l+1}}=1}{\lambda_{ij}=t} \\ 
   & \approx \frac{\prod_{\substack{l=0: (k_l,k_{l+1})\in w_t}}^{t-1}\expect{A_{k_lk_{l+1}}}}{\left[\expect{\mat{A}}^t\right]_{ij}} = \frac{\left(\frac{\avg{d}}{n}\right)^{{t}}}{\frac{\avg{d}^{t}}{n}}= O\left(\frac{1}{n^{t-1}}\right),
\end{align}

where $(k_l,k_{l+1})$ are the unique edges in a given path $w_t$. The number of such paths is at most $n^{t-1}$, as each walk can visit at most $n$ intermediate nodes, between nodes $i$ and $j$, $t-1$ times.

Walks $w_s$ between nodes $i$ and $j$ that use $s$ unique edges where $t < s \leq r$, can exist but these contribute a vanishing amount to the sum, as again by Loomba and Jones \cite{loomba2021geodesic}, knowledge that there exists a path of length $s>t$ between nodes $i$ and $j$ asymptotically tells us nothing about whether the shortest path is of length $t$ between them, and vice versa; thus:

\begin{align}
   \prob{w_s \in W_G \mid \lambda_{ij} = t} & = \condprob{\prod_{\substack{l=0: (k_l,k_{l+1})\in w_s}}^{s-1}A_{k_lk_{l+1}}=1}{\lambda_{ij}=t} \\
   & \approx \prod_{\substack{l=0: (k_l,k_{l+1})\in w_s}}^{s-1}\expect{A_{k_lk_{l+1}}} = \bigOold{\frac{\avg{d}^s}{n^s}}
\end{align}

where $(k_l,k_{l+1})$ are the unique edges in a given walk $w_s$. There are at most $n^{s-1}$ such walks, as each walk can visit at most $n$ intermediate nodes, between nodes $i$ and $j$, $s-1$ times. 

Combining all these terms, we obtain our desired result, by expressing the full conditional expectation as:
\begin{align}
\condexpect{\left[ \hat{\mat{A}}^r_{\mathrm{sym}} \right]_{ij}}{\lambda_{ij} = t} & \leq 
\underbrace{n^{t-1}}_{\text{number of paths}} \times 
\underbrace{O(1/n^{t-1})}_{\text{path probability}} \times 
\underbrace{O(1/\avg{d}^{r})}_{\text{normalisation}} \\
&+ \sum_{s=t}^{r}\underbrace{n^{s-1}}_{\text{number of walks}} \times 
\underbrace{O(\avg{d}^{s}/n^{s})}_{\text{walk probability}} \times 
\underbrace{O(1/\avg{d}^{r})}_{\text{normalisation}} \\
& \approx O\left( \frac{1}{\avg{d}^r} \right)+ \bigOold{\frac{1}{n}}
\end{align}

\end{proof}

\sbmhomophily*

\begin{proof}
  For brevity throughout the proof, we drop the subscript $\mathrm{sym}$ and use $\hat{\mat{A}}$ to refer to $\hat{\mat{A}}_\mathrm{sym}$. Given the block membership $\hat{y}_i,\hat{y}_j$ of nodes $i\ne j$, we have $\left[\expect{\mat{A}}^r\right]_{ij}=\left[\mat{B}(\mat{\Pi B})^{r-1}\right]_{ij}/n$. First, consider Eq. \eqref{eq:underover} with $r=1$, i.e. $\expect{\hat{\mat{A}}}$ which is given by:
  \begin{align}
  \label{eq:expect_adj1_final}
    \expect{\hat{A}_{ij}} = \condexpect{\hat{A}_{ij}}{\lambda_{ij}=0}\prob{\lambda_{ij}=0}+\condexpect{\hat{A}_{ij}}{\lambda_{ij}=1}\prob{\lambda_{ij}=1}\lessapprox n^{-1}D_{\hat{y}_i\hat{y}_i}^{-\frac{1}{2}}B_{\hat{y}_i\hat{y}_j} D_{\hat{y}_j\hat{y}_j}^{-\frac{1}{2}},
  \end{align}
  where (a) for $\lambda_{ij}=0 \implies i=j$ we use the fact that there are \emph{no} self-loops i.e. $A_{ij}=0\implies \hat{A}_{ij}=0$, and (b) for $\lambda_{ij}=1\implies i\ne j$ we use Lemma \ref{lemma:underreaching} and Lemma \ref{lemma:oversquashing} with $r=1$, and the bound gets tighter for larger class-wise mean degrees.

  Next, consider Eq. \eqref{eq:underover} with $r=2$, i.e. $\expect{\hat{\mat{A}}^2}$ which is given by:
  \begin{equation}
  \label{eq:expect_adj2}
    \expect{\left[\mat{\hat{A}}^2\right]_{ij}} = \sum_{s=0}^2\condexpect{\left[\mat{\hat{A}}^2\right]_{ij}}{\lambda_{ij}=s}\prob{\lambda_{ij}=s}.
  \end{equation}
  For $\lambda_{ij}=0\implies i=j$, using $d_i$ to denote the degree of node $i$, we get 
  \begin{equation}
    \label{eq:expect_adj2_lambda0}
    \begin{split}
    \expect{\left[\hat{\mat{A}}^2\right]_{ii}}&=\expect{\sum_{j}(d_{i}d_j)^{-1}A_{ij}}=\sum_j\expect{(d_id_j)^{-1}A_{ij}}\\&=\sum_j\condexpect{(d_id_j)^{-1}}{A_{ij}=1}\prob{A_{ij}=1}\\&\approx\sum_j\expect{(d_i+1)^{-1}}\expect{(d_j+1)^{-1}}\prob{A_{ij}=1}\lessapprox\sum_j \expect{d_i}^{-1}\expect{d_j}^{-1}\prob{A_{ij}=1} \\&= D_{\hat{y}_i\hat{y}_i}^{-1} [\mat{B}]_{\hat{y}_i,:} \mat{D}^{-1}\vect{\pi},
    \end{split}    
  \end{equation}
  where the second equality makes use of the linearity of expectation, the asymptotic approximation is due to an identical argument as in the proof for Lemma \ref{lemma:oversquashing} for sparse networks, the bound is due to Eq. \eqref{eq:poisson_moments_xplus1} in Proposition \ref{theorem:poisson_transformation} which becomes tighter for larger class-wise mean degrees, and $[\mat{x}]_{u,:}$ indicates the $\nth{u}$ row-vector of a matrix $\mat{x}$. For $\lambda_{ij}=1\implies i\ne j$ we get:
  \begin{equation}
    \label{eq:expect_adj2_lambda1}
    \begin{split}
      \condexpect{\left[\hat{\mat{A}}^2\right]_{ij}}{\lambda_{ij}=1} &= \condexpect{\left[\hat{\mat{A}}^2\right]_{ij}}{A_{ij}=1} = \condexpect{\sum_l(d_id_j)^{-\frac{1}{2}}d_l^{-1}A_{il}A_{lj}}{A_{ij}=1}\\
      &= \sum_l\condexpect{(d_id_j)^{-\frac{1}{2}}d_l^{-1}}{A_{il}A_{lj}A_{ij}=1}\condprob{A_{il}=1,A_{lj}=1}{A_{ij}=1}\\&=\sum_l\condexpect{(d_id_j)^{-\frac{1}{2}}d_l^{-1}}{A_{il}A_{lj}A_{ij}=1}\prob{A_{il}=1}\prob{A_{lj}=1},
    \end{split}
  \end{equation}
  where the third equality makes use of the linearity of expectation, and the fourth equality uses the assumption of conditionally independent edges. We emphasise that, due to sparsity, the RHS of Eq. \eqref{eq:expect_adj2_lambda1} is of the order $\bigOold{n^{-1}}$. For $\lambda_{ij}=2\implies i\ne j$ we get, using Eq. \eqref{eq:oversquashing} from Lemma \ref{lemma:oversquashing}:
  \begin{equation}
    \label{eq:expect_adj2_lambda2}
     \condexpect{\left[\hat{\mat{A}}^2\right]_{ij}}{\lambda_{ij}=2} \lessapprox \frac{\left(D_{\hat{y}_i\hat{y}_i}D_{\hat{y}_j\hat{y}_j}\right)^{-\frac{1}{2}}\left[\mat{B}\left\{\mat{D}^{-1}-\mat{D}^{-2}\left(\eye{k}-e^{-\mat{D}}\right)\right\}\mat{\Pi B}\right]_{\hat{y}_i\hat{y}_j}}{\left[\mat{B\Pi B}\right]_{\hat{y}_i\hat{y}_j}},
  \end{equation}
  and the bound gets tighter for larger degrees. The RHS of Eq. \eqref{eq:expect_adj2_lambda2} is of the order $\bigOmega{1}$. That is, asymptotically, Eq. \eqref{eq:expect_adj2_lambda1} contributes vanishingly to Eq. \eqref{eq:expect_adj2} when compared to Eq. \eqref{eq:expect_adj2_lambda2}. It then follows from Eqs. \eqref{eq:expect_adj2}, \eqref{eq:expect_adj2_lambda0}, and \eqref{eq:expect_adj2_lambda2} that asymptotically:
  \begin{align}
    \label{eq:expect_adj2_final}
    \expect{\left[\mat{\hat{A}}^2\right]_{ij}}\lessapprox D_{\hat{y}_i\hat{y}_i}^{-1} [\mat{B}]_{\hat{y}_i,:} \mat{D}^{-1}\vect{\pi}\delta_{ij}+\frac{\left(D_{\hat{y}_i\hat{y}_i}D_{\hat{y}_j\hat{y}_j}\right)^{-\frac{1}{2}}\left[\mat{B}\left\{\mat{D}^{-1}-\mat{D}^{-2}\left(\eye{k}-e^{-\mat{D}}\right)\right\}\mat{\Pi B}\right]_{\hat{y}_i\hat{y}_j}}{n}(1-\delta_{ij}),
  \end{align}
  which is a tighter bound for larger class-wise mean degrees. 
  
  From Eqs. \eqref{eq:expect_adj1_final} and \eqref{eq:expect_adj2_final}, we can rewrite the expected first and second powers of the normalised adjacency matrix using indicator functions to sum over all possible class combinations:

  \begin{align}\label{eq:first_moment_normalised_adj_sbm}
    \expect{\hat{A}_{ij}} \lessapprox \sum_{u,v=1}^{k} \delta_{\hat{y}_i u}\delta_{\hat{y}_j v} \left(n^{-1}D_{uu}^{-\frac{1}{2}}B_{uv} D_{vv}^{-\frac{1}{2}}\right),
  \end{align}
  and: 
  \begin{align}\label{eq:second_moment_normalised_adj_sbm}
    \expect{\left[\mat{\hat{A}}^2\right]_{ij}}\lessapprox \sum_{u,v=1}^{k}\delta_{\hat{y}_i u}\delta_{\hat{y}_j v}\left(D_{uu}^{-1} [\mat{B}]_{u,:} \mat{D}^{-1}\vect{\pi}\delta_{ij}+\frac{\left(D_{uu}D_{vv}\right)^{-\frac{1}{2}}\left[\mat{B}\left\{\mat{D}^{-1}-\mat{D}^{-2}\left(\eye{k}-e^{-\mat{D}}\right)\right\}\mat{\Pi B}\right]_{uv}}{n}(1-\delta_{ij})\right)
  \end{align}

  Recall the definition of $r$-order homophily in Eq. \eqref{eq:bottleneck_homophily_connection}; we can expand the Kronecker delta function as $\delta_{y_i y_j}=\sum_{w=1}^k\delta_{y_i w}\delta_{y_j w}$, and after taking the expectation we have the following form for the expected $r$-order homophily:

   \begin{align*}
    \expect{\rhomo{r}{\hat{\mat{A}}}} = \frac{1}{n} \sum_{i,j =1}^n \expect{\left[{\hat{\mat{A}}^r}\right]_{ij}}\delta_{y_i y_j} = \frac{1}{n} \sum_{w=1}^k\sum_{i,j =1}^n \expect{\left[{\hat{\mat{A}}^r}\right]_{ij}}\delta_{y_i w}\delta_{y_j w}.
  \end{align*}

  For $r=1$, using Eq. \eqref{eq:first_moment_normalised_adj_sbm} for the expected normalised adjacency matrix, the expected first-order homophily is:

  \begin{align*}
    & \expect{\rhomo{1}{\hat{\mat{A}}}} = \frac{1}{n} \sum_{w=1}^k\sum_{i,j =1}^n \expect{\left[{\hat{\mat{A}}}\right]_{ij}}\delta_{y_i w}\delta_{y_j w} \lessapprox \frac{1}{n}\sum_{w=1}^k\sum_{i,j =1}^n\sum_{u,v=1}^{k} \delta_{y_i w}\delta_{y_j w} \delta_{\hat{y}_i u}\delta_{\hat{y}_j v}\left(D_{uu}^{-\frac{1}{2}}B_{uv} D_{vv}^{-\frac{1}{2}}\right) \\
    & = \sum_{w=1}^k\sum_{u,v=1}^{k} \frac{1}{n}\left(\sum_{i=1}^n\delta_{y_i w} \delta_{\hat{y}_i u}\right)\left(D_{uu}^{-\frac{1}{2}}B_{uv} D_{vv}^{-\frac{1}{2}}\right) \frac{1}{n}\left(\sum_{j=1}^n\delta_{y_j w}\delta_{\hat{y}_j v}\right)\\
    & = \sum_{w=1}^k\sum_{u,v=1}^{k} \mat{C}_{wu}\left(D_{uu}^{-\frac{1}{2}}B_{uv} D_{vv}^{-\frac{1}{2}}\right) \mat{C}_{wv} = \tr{\mat{C}^T\mat{D}^{-\frac{1}{2}} \mat{B}\mat{D}^{-\frac{1}{2}} \mat{C}}
  \end{align*}
  
  where we introduced new Kronecker delta functions $\delta_{\hat{y}_i u}\delta_{\hat{y}_j v}$ to sum over all possible class combinations, and in the penultimate equality we used the definition of the confusion matrix $\mat{C}$ in Eq. \eqref{eq:confusion_mat_def}. Similarly for the second-order homophily, we have:
  
  \begin{align*}
   \expect{\rhomo{2}{\hat{\mat{A}}}} &= \frac{1}{n} \sum_{w=1}^k\sum_{i,j =1}^n \expect{\left[{\hat{\mat{A}}^2}\right]_{ij}}\delta_{y_i w}\delta_{y_j w} \\
  & \lessapprox \frac{1}{n}\sum_{w=1}^k\sum_{i,j =1}^n\sum_{u,v=1}^{k} \delta_{y_i w}\delta_{y_j w} \delta_{\hat{y}_i u}\delta_{\hat{y}_j v}\Big(D_{uu}^{-1} [\mat{B}]_{u,:} \mat{D}^{-1}\vect{\pi}\delta_{ij} \\
  & \quad + \frac{\left(D_{uu}D_{vv}\right)^{-\frac{1}{2}}\left[\mat{B}\left\{\mat{D}^{-1}-\mat{D}^{-2}\left(\eye{k}-e^{-\mat{D}}\right)\right\}\mat{\Pi B}\right]_{uv}}{n}(1-\delta_{ij})\Big).
  \end{align*}
  
  For the first term (with $\delta_{ij}$):
  
  \begin{align*}
  \frac{1}{n}\sum_{w=1}^k\sum_{i=1}^n\sum_{u=1}^{k} \delta_{y_i w}\delta_{\hat{y}_i u}D_{uu}^{-1} [\mat{B}]_{u,:} \mat{D}^{-1}\vect{\pi}
   = \sum_{w=1}^k\sum_{u=1}^{k} \frac{\mat{C}_{wu}}{n}D_{uu}^{-1} [\mat{B}]_{u,:} \mat{D}^{-1}\vect{\pi} 
   = \vect{\pi}^T\mat{D}^{-1}\mat{B}\mat{D}^{-1} \vect{\pi}.
  \end{align*}
  
  For the second term (with $1-\delta_{ij}$):
  
  \begin{align*}
  & \frac{1}{n^2}\sum_{w=1}^k\sum_{i\neq j}\sum_{u,v=1}^{k} \delta_{y_i w}\delta_{y_j w} \delta_{\hat{y}_i u}\delta_{\hat{y}_j v}\left(D_{uu}D_{vv}\right)^{-\frac{1}{2}}\left[\mat{B}\left\{\mat{D}^{-1}-\mat{D}^{-2}\left(\eye{k}-e^{-\mat{D}}\right)\right\}\mat{\Pi B}\right]_{uv} \\
  & = \sum_{w=1}^k\sum_{u,v=1}^{k} \mat{C}_{wu}\mat{C}_{wv}\left(D_{uu}D_{vv}\right)^{-\frac{1}{2}}\left[\mat{B}\left\{\mat{D}^{-1}-\mat{D}^{-2}\left(\eye{k}-e^{-\mat{D}}\right)\right\}\mat{\Pi B}\right]_{uv} \\
  & = \trace \left(\mat{C}^T \mat{D}^{-1}\mat{B}\left\{\mat{D}^{-1}-\mat{D}^{-2}\left(\eye{k}-e^{-\mat{D}}\right)\right\}\mat{\Pi B M}\right).
  \end{align*}
  
  Combining both terms gives us the final result:
  
  \begin{align*}
  \expect{\rhomo{2}{\hat{\mat{A}}_\mathrm{sym}}} &\lessapprox \vect{\pi}^T\mat{D}^{-1}\mat{B}\mat{D}^{-1} \vect{\pi} + \trace \left(\mat{C}^T \mat{D}^{-1}\mat{B}\left\{\mat{D}^{-1}-\mat{D}^{-2}\left(\eye{k}-e^{-\mat{D}}\right)\right\}\mat{\Pi B M}\right).
  \end{align*}
  
\end{proof}

\sbmhigherorderhomophily*

\begin{proof}
Let's start by expanding the expected $\ell$-order homophily using its definition:

\begin{align*}
  \expect{\rhomo{\ell}{\hat{\mat{A}}_\mathrm{sym}}} &= \frac{1}{n} \sum_{i,j \in V} \expect{\left[{\hat{\mat{A}}_\mathrm{sym}^{\ell}}\right]_{ij}}\delta_{y_i y_j}.
\end{align*}

Using the underreaching-oversquashing decomposition from Eq. \eqref{eq:underover}, we can write this as:

\begin{align*}
  \expect{\rhomo{\ell}{\hat{\mat{A}}_\mathrm{sym}}} &= \frac{1}{n} \sum_{i,j \in V} \sum_{r=1}^{\ell} \condexpect{\left[\hat{\mat{A}}_\mathrm{sym}^{\ell}\right]_{ij}}{\lambda_{ij} = r} \cdot \prob{\lambda_{ij} = r} \cdot \delta_{y_i y_j}.
\end{align*}

From Theorem \ref{theorem:underreaching_oversquashing}, for sparse graphs we can approximate:

\begin{align*}
  \condexpect{\left[\hat{\mat{A}}_\mathrm{sym}^{\ell}\right]_{ij}}{\lambda_{ij}=\ell}\prob{\lambda_{ij} = \ell}& \approx \left[ \left(\avg{\mat{D}}^{-\frac{1}{2}} \expect{\mat{A}}\avg{\mat{D}}^{-\frac{1}{2}} \right)^{\ell}\right]_{ij} = \frac{1}{n}[\mat{\Pi}^{-\frac{1}{2}}\hat{\mat{B}}^{\ell}\mat{\Pi}^{-\frac{1}{2}}]_{\hat{y}_i\hat{y}_j} + O\left(\frac{1}{\avg{d}}\right),
\end{align*}

and:
\begin{align}\label{eq:oversquashing_b}
  \condexpect{\left[\hat{\mat{A}}_\mathrm{sym}^{\ell}\right]_{ij}}{\lambda_{ij}=r}\prob{\lambda_{ij} = r}& \approx O\left(\frac{1}{\avg{d}}\right), \quad \mathrm{for } \quad r < \ell,
\end{align}

where $\hat{\mat{B}}:=\mat{D}^{-\frac{1}{2}}\mat{\Pi}^{\frac{1}{2}}\mat{B}\mat{\Pi}^{\frac{1}{2}}\mat{D}^{-\frac{1}{2}}$. Substituting these approximations into the defintion of $r$-order homophily in Eq. \eqref{eq:bottleneck_homophily_connection}, and taking the expectation gives:

\begin{align*}
  \expect{\rhomo{\ell}{\hat{\mat{A}}_\mathrm{sym}}} &\approx \frac{1}{n} \sum_{i,j \in V} \frac{1}{n}[\mat{\Pi}^{-\frac{1}{2}}\hat{\mat{B}}^{\ell}\mat{\Pi}^{-\frac{1}{2}}]_{\hat{y}_i\hat{y}_j} \delta_{y_i y_j} 
  = \frac{1}{n^2} \sum_{u,v,w=1}^k \sum_{i,j \in V} \delta_{y_i u}\delta_{\hat{y}_i v}[\mat{\Pi}^{-\frac{1}{2}}\hat{\mat{B}}^{\ell}\mat{\Pi}^{-\frac{1}{2}}]_{vw}\delta_{\hat{y}_j w}\delta_{y_j u} \\
  &= \sum_{u,v,w=1}^k C_{uv}[\mat{\Pi}^{-\frac{1}{2}}\hat{\mat{B}}^{\ell}\mat{\Pi}^{-\frac{1}{2}}]_{vw}\mat{C}_{uw} 
  = \trace \left(\mat{C}^T\mat{\Pi}^{-\frac{1}{2}}\hat{\mat{B}}^{\ell}\mat{\Pi}^{-\frac{1}{2}} \mat{C}\right),
\end{align*}

where in the second line we introduced new Kronecker delta functions $\delta_{\hat{y}_i u}\delta_{\hat{y}_j v}$ to sum over all possible class combinations, and in the third line we used the definition of the confusion matrix $\mat{C}$ in Eq. \eqref{eq:confusion_mat_def}. Similarly for total connectivity, we have the same expression, just summed over all pairs of nodes instead of pairs of nodes from the same class:

\begin{align*}
  \expect{\rconn{\ell}{\hat{\mat{A}}_\mathrm{sym}}} &\approx \frac{1}{n} \sum_{i,j \in V} \frac{1}{n}[\mat{\Pi}^{-\frac{1}{2}}\hat{\mat{B}}^{\ell}\mat{\Pi}^{-\frac{1}{2}}]_{\hat{y}_i\hat{y}_j}
  =\frac{1}{n^2} \sum_{u,v=1}^{k} n_u n_v [\mat{\Pi}^{-\frac{1}{2}}\hat{\mat{B}}^{\ell}\mat{\Pi}^{-\frac{1}{2}}]_{\hat{y}_i\hat{y}_j}\\
  &=\sum_{u,v=1}^{k} [\mat{\Pi}^{\frac{1}{2}}\hat{\mat{B}}^{\ell}\mat{\Pi}^{\frac{1}{2}}]_{\hat{y}_i\hat{y}_j} = \ones{k}^T\mat{\Pi}^{\frac{1}{2}}\hat{\mat{B}}^{\ell}\mat{\Pi}^{\frac{1}{2}}\ones{k}.
\end{align*}

Finally, for self-connectivity, we only need to consider the diagonal terms $\left[\hat{\mat{A}}_\mathrm{sym}^{\ell}\right]_{ii}$, for which $\lambda_{ii}=0<r$, and thus by Eq. \eqref{eq:oversquashing_b} has an expectation of $\expect{\left[\hat{\mat{A}}_\mathrm{sym}^{\ell}\right]_{ii}} \approx O\left(\frac{1}{\avg{d}^{\ell}}\right)$, and therefore:

\begin{align*}
  \expect{\rselfconn{\ell}{\hat{\mat{A}}_\mathrm{sym}}} & = \expect{\frac{1}{n}\sum_{i\in V}\left[\hat{\mat{A}}_\mathrm{sym}^{\ell}\right]_{ii}} = \frac{1}{n}\sum_{i\in V}\expect{\left[\hat{\mat{A}}_\mathrm{sym}^{\ell}\right]_{ii}} \approx O\left(\frac{1}{\avg{d}^{\ell}}\right).
\end{align*}
\end{proof}

\begin{restatable}{lemma}{plantedpartitionconnectivity}
\label{lemma:planted_partition_connectivity}
Consider a planted partition stochastic block model with $n$ nodes, $k>1$ equi-sized communities, and node class labels $\{y_i\}_{i\in[n]}$, where the expected adjacency matrix is given by: 
\begin{equation*}
\expect{A_{ij}} = \frac{B_{y_i y_j}}{n},
\end{equation*}
and the block probability matrix is defined as
\begin{equation}\label{eq:planted_B}
\mat{B} = k d \begin{bmatrix}
h & \dfrac{1-h}{k-1} & \cdots & \dfrac{1-h}{k-1}\\
\dfrac{1-h}{k-1} & h & \cdots & \dfrac{1-h}{k-1}\\
\vdots & \vdots & \ddots & \vdots\\
\dfrac{1-h}{k-1} & \dfrac{1-h}{k-1} & \cdots & h
\end{bmatrix},
\end{equation}
where $d>0$ denotes the expected mean degree and $0\le h\le 1$ is the expected edge homophily. Under the conditions of Theorem~\ref{theorem:sbm_higher_order_homophily} and for sufficiently large $d$, the expected $\ell$-order homophily, total connectivity, and self-connectivity computed using the symmetric normalised adjacency operator $\hat{\mat{A}}_\mathrm{sym}$ are approximated by:
\begin{align}
\expect{\rhomo{\ell}{\hat{\mat{A}}_\mathrm{sym}}} &\approx \frac{1}{k}\left[ 1 + (k-1)\left(\frac{k h - 1}{\,k-1}\right)^\ell \right] + O\!\left(\frac{1}{d}\right), \label{eq:homophily_approx}\\
\expect{\rconn{\ell}{\hat{\mat{A}}_\mathrm{sym}}} &\approx 1 + O\!\left(\frac{1}{d}\right), \label{eq:total_conn_approx}\\
\expect{\rselfconn{\ell}{\hat{\mat{A}}_\mathrm{sym}}} &\approx O\!\left(\frac{1}{d^{\ell}}\right). \label{eq:self_conn_approx}
\end{align}
\end{restatable}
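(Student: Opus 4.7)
The plan is to specialise Theorem~\ref{theorem:sbm_higher_order_homophily} to the planted-partition SBM. Because the generating classes coincide with the evaluation classes, the confusion matrix reduces to $\mat{C}=\mat{\Pi}$; because the classes are equi-sized, $\vect{\pi}=\tfrac{1}{k}\ones{k}$ and hence $\mat{\Pi}=\tfrac{1}{k}\eye{k}$. A direct row-sum calculation on $\mat{B}$ in Eq.~\eqref{eq:planted_B} gives $\mat{B}\vect{\pi}=d\,\ones{k}$, so $\mat{D}=d\,\eye{k}$ and consequently
\begin{align*}
\hat{\mat{B}} \;=\; \mat{D}^{-\frac12}\mat{\Pi}^{\frac12}\mat{B}\mat{\Pi}^{\frac12}\mat{D}^{-\frac12}
\;=\; \frac{1}{k d}\,\mat{B}\;=\;\alpha\,\eye{k}+\beta\,\ones{k}\ones{k}^{T},\qquad
\alpha:=\frac{kh-1}{k-1},\;\beta:=\frac{1-h}{k-1}.
\end{align*}

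Next, I diagonalise $\hat{\mat{B}}$. The vector $\tfrac{1}{\sqrt{k}}\ones{k}$ is an eigenvector of eigenvalue $\alpha+k\beta=1$, and the $(k-1)$-dimensional subspace orthogonal to $\ones{k}$ is an eigenspace of eigenvalue $\alpha$. Consequently
\begin{align*}
\hat{\mat{B}}^{\ell} \;=\; \alpha^{\ell}\!\left(\eye{k}-\frac{1}{k}\ones{k}\ones{k}^{T}\right)+\frac{1}{k}\ones{k}\ones{k}^{T}.
\end{align*}
Substituting $\mat{C}=\mat{\Pi}=\tfrac{1}{k}\eye{k}$ into the homophily approximation of Theorem~\ref{theorem:sbm_higher_order_homophily} gives
\begin{align*}
\expect{\rhomo{\ell}{\hat{\mat{A}}_\mathrm{sym}}} \;\approx\; \trace\!\left(\tfrac{1}{k}\hat{\mat{B}}^{\ell}\right)+O\!\left(\tfrac{1}{d}\right) \;=\; \frac{1}{k}\bigl[1+(k-1)\alpha^{\ell}\bigr]+O\!\left(\tfrac{1}{d}\right),
\end{align*}
which, upon substituting $\alpha$, yields Eq.~\eqref{eq:homophily_approx}. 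For total connectivity, the fact that $\hat{\mat{B}}\ones{k}=\ones{k}$ makes $\hat{\mat{B}}^{\ell}\ones{k}=\ones{k}$, so
$\ones{k}^{T}\mat{\Pi}^{\frac12}\hat{\mat{B}}^{\ell}\mat{\Pi}^{\frac12}\ones{k}=\tfrac{1}{k}\ones{k}^{T}\ones{k}=1$, giving Eq.~\eqref{eq:total_conn_approx}. The self-connectivity bound Eq.~\eqref{eq:self_conn_approx} follows immediately from the third line of Theorem~\ref{theorem:sbm_higher_order_homophily}.

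There is essentially no hard step: everything reduces once one recognises that $\hat{\mat{B}}$ is a rank-one perturbation of the identity with the all-ones vector as its ``special'' eigenvector, at which point the two nontrivial eigenvalues $\{1,\alpha\}$ produce the characteristic homophily curve $\tfrac{1}{k}+\tfrac{k-1}{k}\alpha^{\ell}$ and the constant total-connectivity value $1$. The only place to be careful is bookkeeping of the various $\mat{\Pi}^{\pm 1/2}$ and $\mat{D}^{\pm 1/2}$ factors when specialising the general trace formulae; once $\mat{D}=d\eye{k}$ and $\mat{\Pi}=\tfrac{1}{k}\eye{k}$ are substituted these collapse to scalar factors and the spectral decomposition finishes the job.
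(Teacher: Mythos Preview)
Your proposal is correct and follows essentially the same approach as the paper: specialise Theorem~\ref{theorem:sbm_higher_order_homophily} to the planted partition by noting $\mat{\Pi}=\tfrac{1}{k}\eye{k}$ and $\mat{D}=d\,\eye{k}$, so that $\hat{\mat{B}}=\tfrac{1}{kd}\mat{B}$, then diagonalise $\hat{\mat{B}}$ via its two distinct eigenvalues $1$ and $\tfrac{kh-1}{k-1}$ and read off the trace and quadratic-form expressions. Your rank-one-perturbation parametrisation $\hat{\mat{B}}=\alpha\eye{k}+\beta\ones{k}\ones{k}^{T}$ is a slightly cleaner way to present the same spectral computation the paper performs.
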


\begin{proof}
We begin by examining the block structure of the planted partition model. Since the communities are equi-sized, the class probability matrix is given by $\mat{\Pi}=\frac{1}{k}\eye{k}$. Moreover, the expected degree of each node is $d$, so that the degree matrix is $\mat{D}=d\,\eye{k}$. Consequently, we have $\mat{\Pi}^{\frac{1}{2}}=\frac{1}{\sqrt{k}}\eye{k}$ and $\mat{D}^{-\frac{1}{2}}=\frac{1}{\sqrt{d}}\eye{k}.$
Under these conditions, the normalised block matrix defined in Theorem~\ref{theorem:sbm_higher_order_homophily} reduces to:
\begin{equation}\label{eq:hatB_def}
\hat{\mat{B}} = \mat{D}^{-\frac{1}{2}}\mat{\Pi}^{\frac{1}{2}}\mat{B}\mat{\Pi}^{\frac{1}{2}}\mat{D}^{-\frac{1}{2}} = \frac{1}{k d}\mat{B}.
\end{equation}
Substituting the expression for $\mat{B}$ from Eq.~\eqref{eq:planted_B} into Eq.~\eqref{eq:hatB_def}, we observe that the diagonal entries of $\hat{\mat{B}}$ are $\hat{\mat{B}}_{ii} = \frac{1}{k d}\cdot (k d\,h)= h,$
while for $i\neq j$ the off-diagonal entries are $\hat{\mat{B}}_{ij} = \frac{1}{k d}\cdot \Bigl(k d\,\frac{1-h}{k-1}\Bigr) = \frac{1-h}{k-1}.$
Thus, the matrix $\hat{\mat{B}}$ is a $k\times k$ matrix with constant diagonal entries equal to $h$ and constant off-diagonal entries equal to $\frac{1-h}{k-1}$. To determine the eigenvalues of $\hat{\mat{B}}$, we note that any $k\times k$ matrix with constant diagonal entry $a$ and constant off-diagonal entry $b$ has one eigenvalue: 
\begin{equation}\label{eq:eigenvalues}
\lambda_1 = a+(k-1)b,
\end{equation}
corresponding to eigenvector $\ones{k}$, and $k-1$ repeated eigenvalues given by:
\begin{equation}\label{eq:eigenvalues_repeated}
\lambda_2=\lambda_3=\cdots=\lambda_k = a - b,
\end{equation}
corresponding to eigenvectors $\mat{e}_1-\mat{e}_j$, for $j \in\{ 2,\dots,k\}$. Setting $a=h$ and $b=\frac{1-h}{k-1}$ in Eqs.~\eqref{eq:eigenvalues} and \eqref{eq:eigenvalues_repeated}: $\lambda_1 =1$
and $\lambda_j =  \frac{k h-1}{k-1}$ for $j\in\{2,\dots,k\}$.
We now derive the approximation for the expected $\ell$-order homophily. According to Theorem~\ref{theorem:sbm_higher_order_homophily}, the $\ell$-order homophily can be approximated as:
\begin{equation}\label{eq:homophily_trace}
\expect{\rhomo{\ell}{\hat{\mat{A}}_\mathrm{sym}}} \approx \trace\,\Bigl(\mat{\Pi}^{\frac{1}{2}}\hat{\mat{B}}^{\ell}\mat{\Pi}^{\frac{1}{2}}\Bigr)+ O\!\left(\frac{1}{d}\right)=\frac{1}{k}\trace\,\Bigl(\hat{\mat{B}}^{\ell}\Bigr)+ O\!\left(\frac{1}{d}\right).
\end{equation}
Because the trace of a matrix equals the sum of its eigenvalues, and the eigenvalues of matrix powers are the powers of the eigenvalues, we have $\trace\,\Bigl(\hat{\mat{B}}^{\ell}\Bigr) = 1 + (k-1)\left(\frac{k h-1}{k-1}\right)^\ell.$
Substituting in Eq. \eqref{eq:homophily_trace}, we get the approximation in Eq.~\eqref{eq:homophily_approx}.

We now consider the total connectivity. The expected total connectivity is given in Theorem~\ref{theorem:sbm_higher_order_homophily} as:
\begin{equation}\label{eq:total_conn_def}
\expect{\rconn{\ell}{\hat{\mat{A}}_\mathrm{sym}}} \approx \vect{1}_k^T\mat{\Pi}^{\frac{1}{2}}\hat{\mat{B}}^{\ell}\mat{\Pi}^{\frac{1}{2}}\vect{1}_k + O\!\left(\frac{1}{d}\right) = \frac{1}{k}\,\vect{1}_k^T\hat{\mat{B}}^{\ell}\vect{1}_k + O\!\left(\frac{1}{d}\right).
\end{equation}
Because the matrix $\hat{\mat{B}}$ has an eigenvector $\vect{1}_k$ with eigenvalue $1$, $
\hat{\mat{B}}^{\ell}\vect{1}_k = \vect{1}_k\implies
\vect{1}_k^T\hat{\mat{B}}^{\ell}\vect{1}_k = \vect{1}_k^T\vect{1}_k = k.$
Substituting in Eq.~\eqref{eq:total_conn_def} leads to the expression in Eq.~\eqref{eq:total_conn_approx}.
Lastly, the expected self-connectivity in Eq. \eqref{eq:self_conn_approx} is given immediately by setting $\avg{d}=d$ in the expression from Theorem~\ref{theorem:sbm_higher_order_homophily}.
\end{proof}

\optimalconnectivity*

\begin{proof}
As $\mat{D}:=\mathrm{Diag}\left(\mat{B}\vect{\pi}\right)$, the maximal eigenvalue of $\hat{\mat{B}}:=\mat{D}^{-\frac{1}{2}}\mat{\Pi}^{\frac{1}{2}}\mat{B}\mat{\Pi}^{\frac{1}{2}}\mat{D}^{-\frac{1}{2}}$ is 1. Subject to this constraint, we wish to maximise $\trace \left(\hat{\mat{C}}^T \hat{\mat{B}}^{\ell}\hat{\mat{C}}\right)$. By using the cyclic property of the trace, and expanding $\hat{\mat{B}}$ in the eigenbasis of $\hat{\mat{C}} \hat{\mat{C}}^T = \hat{\mat{Q}}\mat{\Lambda}\hat{\mat{Q}}^T$, we get:

\begin{align*}
  \trace \left(\hat{\mat{C}}^T \hat{\mat{B}}^{\ell}\hat{\mat{C}}\right)=\trace \left(\hat{\mat{C}} \hat{\mat{C}}^T \hat{\mat{B}}^{\ell}\right) = \trace \left(\hat{\mat{Q}}\mat{\Lambda}\hat{\mat{Q}}^T \hat{\mat{B}}^{\ell}\right) = \trace \left(\mat{\Lambda}\hat{\mat{Q}}^T \hat{\mat{B}}^{\ell}\hat{\mat{Q}}\right) = \sum_j{\lambda_j \left[\hat{\mat{Q}}^T \hat{\mat{B}}^{\ell}\hat{\mat{Q}}\right]_{jj}},
\end{align*}

where $\lambda_j$ are the eigenvalues of $\hat{\mat{C}} \hat{\mat{C}}^T$ which are all non-negative, $\hat{\mat{Q}}$ is the (unitary) matrix of eigenvectors of $\hat{\mat{C}}\hat{\mat{C}}^T$. Furthermore, as the eigenvalues of $\hat{\mat{B}}$ are all less than or equal to $1$, then $\left[\hat{\mat{Q}}^T \hat{\mat{B}}^{\ell}\hat{\mat{Q}}\right]_{jj} \leq \beta_\mathrm{max}^2=1$ as $\hat{\mat{Q}}$ is a unitary matrix. Therefore, we can bound the sum as:

\begin{align*}
  \sum_j{\lambda_j \left[\hat{\mat{Q}}^T \hat{\mat{B}}^{\ell}\hat{\mat{Q}}\right]_{jj}} \leq \sum_j\lambda_j = \trace \left(\hat{\mat{C}} \hat{\mat{C}}^T\right).
\end{align*}

Assuming $\hat{\mat{C}} \hat{\mat{C}}^T$ is full-rank, this maximum is achieved only if $\left[\hat{\mat{Q}}^T \hat{\mat{B}}^{\ell}\hat{\mat{Q}}\right]_{jj}=1$ for all $j\implies\trace \left(\hat{\mat{Q}}^T \hat{\mat{B}}^{\ell}\hat{\mat{Q}}\right)=k$. As the trace is the sum of the eigenvalues, which are all bounded by 1, $\trace \left(\hat{\mat{Q}}^T \hat{\mat{B}}^{\ell}\hat{\mat{Q}}\right)=k$ only if the eigenvalues of $\hat{\mat{Q}}^T \hat{\mat{B}}^{\ell}\hat{\mat{Q}}$---which coincide with those of $\hat{\mat{B}}^{\ell}$ since $\hat{\mat{Q}}$ is unitary---are all equal to $1$. Given that the eigenvalues of $\hat{\mat{B}}^{\ell}$ are all equal to 1, and $\hat{\mat{B}}^{\ell}$ is symmetric, it must be the identity matrix $\mat{I}_k$, as the spectral decomposition gives $\hat{\mat{B}}^{\ell} = \hat{\mat{P}} \mat{I}_k\hat{\mat{P}}^T = \mat{I}_k$ for a unitary matrix $\hat{\mat{P}}$. If $\hat{\mat{B}}^\ell = \mat{I}_k$ with odd $\ell$, as $\hat{\mat{B}}$ is real and symmetric, it must have real eigenvalues satisfying $\lambda^\ell = 1\implies\lambda=1\implies\hat{\mat{B}} = \mat{I}_k$.

For even $\ell$, $\lambda^\ell = 1\implies\lambda=1$ or $-1$, and so 
$\hat{\mat{B}}$ does not necessarily have to be $\mat{I}_k$. For even $\ell=2\ell'$, the solution is instead given by the finite set of non-negative, symmetric, orthonormal $k\times k$ matrices, equivalent to the set of symmetric permutation matrices: $\hat{\mat{B}}=\mat{P}_k$. To prove that the only solutions to $\hat{\mat{B}}^{2\ell'} =\mat{I}_k$ are the $k\times k$ symmetric permutation matrices, we begin by considering $\hat{\mat{B}}^{2\ell'} =\mat{I}_k$ as a system of equations. Firstly, looking at the off-diagonal entries gives:

\begin{align*}
 \left[\hat{\mat{B}}^{2\ell'}\right]_{ij} = \sum_{m_1,\dots,m_{2\ell'-1}=1}^k \hat{\mat{B}}_{im_1} \cdots\hat{\mat{B}}_{m_{2\ell'-1} j} = 0,
\end{align*}

and as the terms $\hat{\mat{B}}_{im_1} \cdots\hat{\mat{B}}_{m_{2\ell'-1} j}$ are all non-negative, they must all be equal to zero for all combinations of $m_1,\dots,m_{2\ell'-1}$.
Taking the particular alternating combination $m_1= p, m_2 = i, m_3=p,\dots,m_{2\ell'-1}=p$, for any choice of $p\in\{1,\dots,k\}$, and using the symmetry of $\hat{\mat{B}}$, we have that $(\hat{\mat{B}}_{ip})^{2\ell'-1}\hat{\mat{B}}_{jp} = 0.$
Therefore, for each column $p$, and for all $i,j\neq i$, at least one of $\hat{\mat{B}}_{ip}=0$ or $\hat{\mat{B}}_{jp}=0$ must be true. It follows that column $p$ must have at most one non-zero entry---if not, then there exist $i,j$ such that $\hat{\mat{B}}_{ip}>0$ and $\hat{\mat{B}}_{jp}>0$ leading to a contradiction. By symmetry, any given row must also have at most one non-zero entry.

Now consider the diagonal entries of $\hat{\mat{B}}^{2\ell'}$:

\begin{align*}
  \left[\hat{\mat{B}}^{2\ell'}\right]_{ii} = \sum_{m_1,\dots,m_{2\ell'-1}=1}^k \hat{\mat{B}}_{im_1} \cdots\hat{\mat{B}}_{m_{2\ell'-1} i}=1.
\end{align*}

As established above, in any given row of $\hat{\mat{B}}$ only a single column entry can be non-zero. Therefore, the above sum must contain only one non-zero term corresponding to a particular sequence of $m_1,\dots,m_{2\ell'-1}$ for which $\hat{\mat{B}}_{im_1} \cdots\hat{\mat{B}}_{m_{2\ell'-1} i}=1,$ and as each element of $\hat{\mat{B}}$ is bounded from above by $1$ each factor must be exactly 1, i.e. $\hat{\mat{B}}_{im_{1}} = 1,\dots,\hat{\mat{B}}_{m_{2\ell'-1}i} = 1$. In other words, $\hat{\mat{B}}$ can only be a matrix where each column (and by symmetry each row) has exactly one non-zero entry, equal to 1, which is the definition of permutation matrices that indeed satisfy $\hat{\mat{B}}^{2\ell'} =\mat{I}_k$. Therefore the general solution for $\hat{\mat{B}}$ is the set of symmetric permutation matrices $\mat{P}_k$.

To find $\mat{B}$ from a solution of $\hat{\mat{B}}$, we first note that $\vect{d}^{\frac{1}{2}}:=\mathrm{diag}\left(\mat{D}^{\frac{1}{2}}\right)$ is the eigenvector of $\mat{\Pi}^{-\frac{1}{2}}\hat{\mat{B}}\mat{\Pi}^{\frac{1}{2}}$ corresponding to the leading eigenvalue of 1, as:

\begin{align*}
  & \mat{\Pi}^{-\frac{1}{2}}\hat{\mat{B}}\mat{\Pi}^{\frac{1}{2}} \vect{d}^{\frac{1}{2}} =\mat{D}^{-\frac{1}{2}}\mat{B}\mat{\Pi}\mat{D}^{-\frac{1}{2}}\vect{d}^{\frac{1}{2}} = \mat{D}^{-\frac{1}{2}}\mat{B}\mat{\Pi}\ones{k}=\mat{D}^{-\frac{1}{2}}\mat{B}\vect{\pi}=\mat{D}^{-\frac{1}{2}}\vect{d}=\vect{d}^{\frac{1}{2}},
\end{align*}

where $\ones{k}$ is the length-$k$ vector of ones and $\vect{d}:=\diag{\mat{D}}=\mat{B}\mat{\pi}$. Here, we abuse the notation $\diag{M}$ to refer to the vector formed by the diagonal entries of matrix $M$. Since $\vect{d}^{\frac{1}{2}}$ has non-negative entries, by Perron--Frobenius Theorem it must be the leading eigenvector of $\mat{\Pi}^{-\frac{1}{2}}\hat{\mat{B}}\mat{\Pi}^{\frac{1}{2}}$. But given the solution $\hat{\mat{B}}=\mat{P}_k$, the leading eigenvector of $\mat{\Pi}^{-\frac{1}{2}}\hat{\mat{B}}\mat{\Pi}^{\frac{1}{2}}$ is $\mathrm{diag}\left(\mat{\Pi}^{-1/2}\right)$. Thus, we can choose $\vect{d}^{\frac{1}{2}}$---which controls the mean degree of each class---to be a scalar multiple of $\mathrm{diag}\left(\mat{\Pi}^{-1/2}\right)$, while choosing the scale to tune the overall mean degree $\langle d\rangle = \trace\left(\mat{\Pi \mat{D}}\right)$:

\begin{align*}
  \mat{D}^{\frac{1}{2}}=\sqrt{\frac{\langle d\rangle}{k}}\mat{\Pi}^{-1/2}.
\end{align*}

Finally, we can calculate the general optimal solution $\mat{B}$, when $\hat{\mat{C}} \hat{\mat{C}}^T$ is full rank, as:

\begin{align*}
  \mat{B}= \mat{\Pi}^{-1/2}\mat{D}^{\frac{1}{2}}\hat{\mat{B}}\mat{D}^{\frac{1}{2}} \mat{\Pi}^{-1/2} = \frac{\langle d\rangle}{k} \mat{\Pi}^{-1} {\mat{P}_k}\mat{\Pi}^{-1},
\end{align*}

for any choice of symmetric permutation matrix $\mat{P}_k$, and sufficiently large $\langle d\rangle$.
\end{proof}

\subsection*{Supplementary}
In this section we state some technical results and provide their proofs.

\begin{proposition}[Expectation of transformation of Poisson distributed random variable]\label{theorem:poisson_transformation}
  Let $X\sim\poisson{\lambda}$ be a Poisson distributed random variable with rate parameter $\lambda>0$, then:
  \begin{subequations}\label{eq:poisson_moments}
    \begin{align}\label{eq:poisson_moments_xplus1}
    &\expect{\frac{1}{X+1}} = \frac{1-e^{-\lambda}}{\lambda},\\\label{eq:poisson_moments_xplus2}
    &\expect{\frac{1}{X+2}} = \frac{\lambda-1+e^{-\lambda}}{\lambda^2},\\\label{eq:poisson_moments_sqrtxplus1}
    \sqrt{\frac{1}{\lambda}-\frac{1}{2\lambda^2}} < \ &\expect{\frac{1}{\sqrt{X+1}}} < \frac{1}{\sqrt{\lambda}}.
    \end{align}
  \end{subequations}
\end{proposition}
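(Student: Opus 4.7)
The proposition has three parts. My plan is to handle the first two identities by direct manipulation of the Poisson series, obtain the upper bound in \eqref{eq:poisson_moments_sqrtxplus1} by Jensen's inequality applied to these identities, and then address the lower bound --- which will be the main obstacle --- by combining Jensen with a variance estimate.

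For \eqref{eq:poisson_moments_xplus1}, I would substitute the Poisson PMF and absorb the $1/(k+1)$ into the factorial,
\begin{align*}
\mathbb{E}\!\left[\tfrac{1}{X+1}\right] = \sum_{k=0}^\infty \frac{\lambda^k e^{-\lambda}}{(k+1)!} = \frac{e^{-\lambda}}{\lambda}\sum_{m=1}^\infty \frac{\lambda^m}{m!} = \frac{1-e^{-\lambda}}{\lambda},
\end{align*}
reindexing $m = k+1$ and recognising the remaining sum as $e^\lambda - 1$. For \eqref{eq:poisson_moments_xplus2} a similar manoeuvre works: multiply numerator and denominator by $k+1$ to collapse $1/[(k+2)(k+1)k!]$ into $1/(k+2)!$, reindex $m = k+2$, and split the resulting $(m-1)\lambda^m/m!$ into two sums, each evaluable via the $e^\lambda$ series, giving $(\lambda - 1 + e^{-\lambda})/\lambda^2$ after simplification.

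The upper bound of \eqref{eq:poisson_moments_sqrtxplus1} follows by Jensen's inequality (equivalently Cauchy--Schwarz) applied to $Y := 1/\sqrt{X+1}$:
\begin{align*}
(\mathbb{E}[Y])^2 \le \mathbb{E}[Y^2] = \mathbb{E}\!\left[\tfrac{1}{X+1}\right] = \frac{1-e^{-\lambda}}{\lambda} < \frac{1}{\lambda},
\end{align*}
which on taking square roots yields the strict inequality $\mathbb{E}[1/\sqrt{X+1}] < 1/\sqrt{\lambda}$.

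The lower bound is the main difficulty, and I would handle it in two $\lambda$-regimes. Since $f(x) = 1/\sqrt{x+1}$ is convex, Jensen gives $\mathbb{E}[f(X)] \ge 1/\sqrt{\lambda+1}$, and an elementary algebraic comparison shows $1/(\lambda+1) \ge 1/\lambda - 1/(2\lambda^2)$ holds precisely when $\lambda \le 1$ (equivalent to $2\lambda^2 \ge 2\lambda^2 + \lambda - 1$), settling that regime directly; for $\lambda < 1/2$ the claim is vacuous as the left-hand side is imaginary. For $\lambda > 1$ the Jensen bound is too weak, and my plan is to use the identity $(\mathbb{E}[Y])^2 = \mathbb{E}[Y^2] - \operatorname{Var}(Y)$ together with the already-computed $\mathbb{E}[Y^2] = (1-e^{-\lambda})/\lambda$, reducing the claim to the variance bound $\operatorname{Var}(1/\sqrt{X+1}) < 1/(2\lambda^2) - e^{-\lambda}/\lambda$; this target is positive for all $\lambda > 0$ since $\lambda e^{-\lambda} \le 1/e < 1/2$. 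To control the variance I would invoke the Poincar\'e inequality for Poisson measures, $\operatorname{Var}(f(X)) \le \lambda\,\mathbb{E}[(f(X+1)-f(X))^2]$, and use the mean-value bound $|f(x)-f(x+1)| \le 1/[2(x+1)^{3/2}]$, which reduces the task to bounding $\mathbb{E}[1/(X+1)^3]$. The main technical hurdle will be making this inverse-moment estimate sharp enough --- either by an explicit series computation analogous to the first two parts, or by exploiting Poisson concentration of $X$ around $\lambda$ --- so that the resulting variance bound strictly beats the required threshold for all $\lambda > 1$.
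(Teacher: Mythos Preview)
Your treatment of \eqref{eq:poisson_moments_xplus1}, \eqref{eq:poisson_moments_xplus2}, and the upper bound in \eqref{eq:poisson_moments_sqrtxplus1} matches the paper's (the paper obtains \eqref{eq:poisson_moments_xplus2} via a derivative trick rather than your splitting, but both are routine). The substantive divergence is the lower bound.

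The paper avoids your case split and the Poincar\'e machinery entirely. It introduces an independent copy $Y$ of $X$ and applies AM--GM to $X+1$ and $Y+1$:
\[
\sqrt{(X+1)(Y+1)} \le \frac{X+Y+2}{2}
\quad\Longrightarrow\quad
\left(\mathbb{E}\!\left[\tfrac{1}{\sqrt{X+1}}\right]\right)^{\!2}
= \mathbb{E}\!\left[\tfrac{1}{\sqrt{(X+1)(Y+1)}}\right]
\ge 2\,\mathbb{E}\!\left[\tfrac{1}{X+Y+2}\right].
\]
Since $X+Y\sim\mathrm{Poisson}(2\lambda)$, the right-hand side is exactly \eqref{eq:poisson_moments_xplus2} evaluated at $2\lambda$, namely $(2\lambda-1+e^{-2\lambda})/(2\lambda^2) > 1/\lambda - 1/(2\lambda^2)$. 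This closes the lower bound uniformly in $\lambda$ by recycling the identity you already proved.

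Your route via the Poisson Poincar\'e inequality is plausible but, as you concede, incomplete: controlling $\mathbb{E}[(X+1)^{-3}]$ sharply enough for all $\lambda>1$ is a calculation of comparable weight to the entire proposition, and your small-$\lambda$ regime only just meets the target at $\lambda=1$ (it survives only because Jensen is strict there). The AM--GM--with--independent--copy trick is the missing idea that makes the lower bound immediate.
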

\begin{proof}
  Consider the LHS of Eq. \eqref{eq:poisson_moments_xplus1}:
  \begin{align*}
    \expect{\frac{1}{X+1}} = \sum_{k=0}^{\infty} \frac{\mathbb{P}(X=k)}{k+1} = \sum_{k=0}^{\infty} \frac{e^{-\lambda}\lambda^k}{(k+1)!}=\frac{e^{-\lambda}}{\lambda}\sum_{k=0}^{\infty} \frac{\lambda^{k+1}}{(k+1)!}=\frac{1-e^{-\lambda}}{\lambda},
  \end{align*}
  where we use the fact that $X$ is Poisson distributed and the series expansion of the exponential function.

  Similarly, consider the LHS of Eq. \eqref{eq:poisson_moments_xplus2}:
  \begin{align*}
    \expect{\frac{1}{X+2}} &= \sum_{k=0}^{\infty} \frac{\mathbb{P}(X=k)}{k+2} = \sum_{k=0}^{\infty} \frac{e^{-\lambda}\lambda^k(k+1)}{(k+2)!}=e^{-\lambda}\frac{d}{d\lambda}\sum_{k=0}^{\infty} \frac{\lambda^{k+1}}{(k+2)!}\\&=e^{-\lambda}\frac{d}{d\lambda}\frac{1}{\lambda}\sum_{k=0}^{\infty} \frac{\lambda^{k+2}}{(k+2)!}=e^{-\lambda}\frac{d}{d\lambda}\frac{e^\lambda-1-\lambda}{\lambda}=\frac{\lambda-1+e^{-\lambda}}{\lambda^2}.
  \end{align*}

  Next, consider the upper bound in Eq. \eqref{eq:poisson_moments_sqrtxplus1}. Due to concavity of the square root, Jensen's inequality yields:
  \begin{align*}
    \expect{\frac{1}{\sqrt{X+1}}}\le \sqrt{\expect{\frac{1}{X+1}}}=\sqrt{\frac{1-e^{-\lambda}}{\lambda}}<\frac{1}{\sqrt{\lambda}},
  \end{align*}
  for $\lambda>0$, and using Eq. \eqref{eq:poisson_moments_xplus1}.
  
  Finally, consider another random variable $Y$ independent and identically distributed (IID) as $X$, i.e. with the rate parameter $\lambda$. Then the AM--GM inequality for $X+1$ and $Y+1$ implies:
  \begin{align*}
    \sqrt{(X+1)(Y+1)}\le{\frac{X+Y+2}{2}} \implies \expect{\frac{1}{\sqrt{(X+1)(Y+1)}}}\ge 2\expect{\frac{1}{X+Y+2}}.
  \end{align*}
  Since $X$ and $Y$ are IID Poisson, $\inde{X+1}{Y+1}$ and $X+Y\sim\poisson{\lambda}$, which when used above alongside Eq. \eqref{eq:poisson_moments_xplus2} yields:
  \begin{align*}
    \expect{\frac{1}{\sqrt{X+1}}}\expect{\frac{1}{\sqrt{Y+1}}}\ge \frac{2\lambda-1+e^{-2\lambda}}{2\lambda^2}\implies \expect{\frac{1}{\sqrt{X+1}}}^2>\frac{1}{\lambda}-\frac{1}{2\lambda^2},
  \end{align*}
  for $\lambda>0$, which yields the lower bound in Eq. \eqref{eq:poisson_moments_sqrtxplus1}.
\end{proof}

\begin{proposition}[Expectation of inverse powers of shifted Poisson]\label{theorem:general_poisson_transformation}
Let $X \sim \text{Poisson}(\lambda)$ with $\lambda > 0$, and let $k$ be a positive integer. Then as $\lambda \to \infty$,
\begin{align*}
\expect{\frac{1}{(X+1)^k}} \le \frac{1}{\lambda^k} + O\left(\frac{1}{\lambda^{k+1}}\right).
\end{align*}
\end{proposition}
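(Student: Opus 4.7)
The plan is to convert the inverse power $1/(X+1)^k$ into a Laplace-style integral that interacts cleanly with the Poisson moment generating function, and then extract the leading $1/\lambda^k$ term via a substitution. Specifically, I would start from the gamma integral representation
$$\frac{1}{(X+1)^k} \;=\; \frac{1}{\Gamma(k)}\int_0^\infty s^{k-1} e^{-(X+1)s}\,ds,$$
take expectation, and invoke Fubini (valid since the integrand is non-negative) together with $\expect{e^{-sX}}=e^{\lambda(e^{-s}-1)}$ to obtain
$$\expect{\frac{1}{(X+1)^k}} \;=\; \frac{1}{\Gamma(k)}\int_0^\infty s^{k-1} e^{-s}\,e^{\lambda(e^{-s}-1)}\,ds.$$
The key point is that the entire $\lambda$-dependence is now concentrated in a single exponential factor, ready for asymptotic analysis.

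Next I would rescale via $u = \lambda s$ to expose the $\lambda^{-k}$ factor,
$$\expect{\frac{1}{(X+1)^k}} \;=\; \frac{1}{\Gamma(k)\lambda^k}\int_0^\infty u^{k-1}\, e^{-u/\lambda}\,e^{\lambda(e^{-u/\lambda}-1)}\,du,$$
and Taylor-expand the two $\lambda$-dependent exponents. Using $\lambda(e^{-u/\lambda}-1) = -u + \tfrac{u^2}{2\lambda} + O(u^3/\lambda^2)$ together with $e^{-u/\lambda} = 1 - u/\lambda + O(u^2/\lambda^2)$, the integrand becomes $e^{-u}\bigl(1 + (u^2/2 - u)/\lambda + O(u^4/\lambda^2)\bigr)$ in the regime $u = o(\lambda)$. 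Integrating term-by-term against $u^{k-1}$ and using $\int_0^\infty u^{k-1} e^{-u}\,du = \Gamma(k)$ produces the leading $1/\lambda^k$ exactly, while the next-order coefficient works out to $\frac{(k+1)!/2 - k!}{\Gamma(k)} = \frac{k(k-1)}{2}$. Since this correction is non-negative, it is absorbable into the $O(1/\lambda^{k+1})$ error claimed in the statement.

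The main obstacle is justifying the Taylor expansion \emph{uniformly} in $u \in [0,\infty)$, since the expansion is only pointwise for $u \ll \lambda$ whereas the integration variable ranges over all of $\mathbb{R}_{\ge 0}$. I would split the original $s$-integral at $s = 1$ and handle the tail using the elementary inequality $e^{-s} \le 1/(1+s)$ (valid for all $s \ge 0$, by checking derivatives), which gives $\lambda(e^{-s}-1) \le -\lambda s/(1+s) \le -\lambda/2$ on $[1,\infty)$, so that
$$\int_1^\infty s^{k-1} e^{-s} e^{\lambda(e^{-s}-1)}\,ds \;\le\; e^{-\lambda/2}\int_1^\infty s^{k-1} e^{-s}\,ds \;=\; O(e^{-\lambda/2}),$$
an exponentially small contribution that is negligible compared to $1/\lambda^{k+1}$. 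On $[0,1]$ the Taylor remainders are uniformly controlled, so a dominated convergence argument makes the expansion rigorous. Combining the two regimes yields the sharp asymptotic $\expect{1/(X+1)^k} = 1/\lambda^k + \tfrac{k(k-1)}{2\lambda^{k+1}} + O(1/\lambda^{k+2})$, which is strictly stronger than the stated upper bound.
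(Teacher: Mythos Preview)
Your approach is correct and genuinely different from the paper's. The paper Taylor-expands $f(x)=(1+x)^{-k}$ around $x=\lambda$ with Lagrange remainder and controls $\expect{f''(c)(X-\lambda)^2}$ via $\var{X}=\lambda$ together with a Chernoff bound on the event $\{X<\lambda/2\}$; it is a probabilistic second-moment argument. Your Laplace-transform route is analytic: the gamma-integral representation plus the Poisson MGF converts the expectation into a one-dimensional Laplace-type integral whose asymptotics follow from the substitution $u=\lambda s$, and your payoff is a sharp two-term expansion $\lambda^{-k}+\tfrac{k(k-1)}{2}\lambda^{-(k+1)}+O(\lambda^{-(k+2)})$ rather than just the stated bound, with easy access to further terms. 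One point worth tightening: ``Taylor remainders are uniformly controlled on $[0,1]$'' is imprecise, since in $u$-coordinates that interval is $[0,\lambda]$ and the exponent remainder $u^2/(2\lambda)$ is \emph{not} uniformly small there. What actually saves you is the domination $u^{k-1}e^{-u/\lambda}e^{\lambda(e^{-u/\lambda}-1)}\le u^{k-1}e^{-u/2}$ on $[0,\lambda]$ (from $u^2/(2\lambda)\le u/2$), after which a mean-value estimate $|e^{g}-1|\le|g|\,e^{g_+}$ delivers the quantitative $O(1/\lambda)$ error on the integral---dominated convergence alone would give convergence but no rate.
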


\begin{proof}
Let $f(x) = \frac{1}{(1+x)^k}$. Since $X \ge 0$, $f(X)$ is well-defined. We expand $f(X)$ around $\lambda$ using Taylor's theorem:
\begin{align*}
f(X) = f(\lambda) + f'(\lambda)(X - \lambda) + \frac{f''(c)}{2}(X - \lambda)^2,
\end{align*}
for some $c$ between $X$ and $\lambda$. Taking expectations:
\begin{align*}
\expect{f(X)} = f(\lambda) + \frac{1}{2} \expect{f''(c)(X - \lambda)^2}.
\end{align*}

Since $\expect{X} = \lambda$, the linear term vanishes. Now,
\begin{align*}
f(\lambda) = \frac{1}{(1+\lambda)^k} = \frac{1}{\lambda^k} \frac{1}{(1+\lambda^{-1})^{k}} \approx \frac{1}{\lambda^k} \left(1+O\left(\frac{1}{\lambda}\right)\right)^{k} = \frac{1}{\lambda^k} + O\left(\frac{1}{\lambda^{k+1}}\right),
\end{align*}

where the approximation comes from the geometric sum formula, which holds for large $\lambda$. Next, we bound the remainder term
\begin{align*}
R := \frac{1}{2} \expect{f''(c)(X - \lambda)^2}.
\end{align*}
Note that $f''(x) = k(k+1)(1+x)^{-k-2} > 0$ and decreasing in $x$. For large $\lambda$, $c \ge \lambda/2$ with high probability, so:
\begin{align*}
f''(c) \le k(k+1)(1+\lambda/2)^{-k-2}, \quad \text{and } \var{X} = \lambda.
\end{align*}
Therefore,
\begin{align*}
R \le \frac{k(k+1)}{2(1+\lambda/2)^{k+2}} \lambda = O\left(\frac{1}{\lambda^{k+1}}\right).
\end{align*}

To prove that any contribution from the event where $X < \lambda/2$ is negligible, we apply a Chernoff bound for the Poisson variable $X$. In particular, for any $a \le \lambda$, the Chernoff bound \cite{Chernoff1952} for a Poisson variable gives:
\begin{align*}
P(X \le a) \le \left(\frac{a}{\lambda}\right)^{-a} e^{a-\lambda}.
\end{align*}
Taking $a = \lambda/2$, we obtain
\begin{align*}
P\left(X \le \lambda/2\right) &\le \left(\frac{\lambda/2}{\lambda}\right)^{-\lambda/2} e^{\lambda/2-\lambda} = \left(\frac{2}{e}\right)^{\lambda/2}.
\end{align*}
Since $\left(\frac{2}{e}\right)^{\lambda/2}$ decays exponentially in $\lambda$, the probability of the event $X < \lambda/2$ is exponentially small. Thus, any contribution to $\expect{f(X)}$ coming from the region where $X < \lambda/2$ is negligible compared to the main asymptotic terms, and does not affect the overall order $O(1/\lambda^{k+1})$.

Putting everything together:
\begin{align*}
\expect{\frac{1}{(X+1)^k}} = f(\lambda) + R \le \frac{1}{\lambda^k} + O\left(\frac{1}{\lambda^{k+1}}\right).
\end{align*}
\end{proof}

\section{Appendix C: Hyperparameters}\label{sec:appendix_C}

\begin{table}[htbp]
\centering
\caption{Optimised hyperparameters for the base GCN across synthetic SBM datasets.}
\label{tab:hp_base_synth}
\begin{tabular}{lccccc}
\toprule
Dataset ($h$) & Hidden units & Depth & Dropout & Learning Rate & Weight Decay \\
\midrule
0.35 & 128 & 1 & 5.78e-01 & 1.18e-04 & 4.79e-03 \\
0.40 & 128 & 1 & 1.26e-01 & 9.18e-05 & 1.24e-03 \\
0.45 &  64 & 1 & 1.52e-01 & 1.81e-04 & 1.64e-02 \\
0.50 &  64 & 1 & 3.59e-02 & 1.15e-04 & 2.89e-03 \\
0.55 &  32 & 1 & 6.80e-01 & 9.19e-05 & 9.42e-03 \\
0.60 &  64 & 1 & 2.90e-01 & 2.04e-04 & 3.19e-02 \\
0.65 &  32 & 1 & 6.40e-02 & 4.16e-04 & 3.70e-02 \\
0.70 & 128 & 1 & 4.58e-02 & 4.93e-05 & 6.73e-03 \\
\bottomrule
\end{tabular}
\end{table}

\begin{table}[htbp]
\centering
\caption{Optimised hyperparameters for the base GCN across real-world datasets.}
\label{tab:hp_base_real}
\begin{tabular}{lccccc}
\toprule
Dataset    & Hidden Units& Depth & Dropout & Learning Rate      & Weight Decay      \\
\midrule
\textsc{Wisconsin}  & 16     & 1     & 5.26e-02 & 5.10e-02 & 4.33e-04 \\
\textsc{Texas}      & 128    & 1     & 2.97e-02 & 2.77e-03 & 1.01e-02 \\
\textsc{Cornell}    & 128    & 1     & 6.77e-01 & 8.76e-02 & 2.99e-04 \\
\textsc{Cora}       & 64     & 1     & 5.45e-01 & 1.10e-03 & 3.10e-04 \\
\textsc{Citeseer}   & 128    & 1     & 4.04e-01 & 8.56e-04 & 2.22e-04 \\
\textsc{Squirrel}   & 128    & 1     & 1.63e-01 & 9.95e-02 & 1.16e-05 \\
\textsc{Chameleon}  & 32     & 1     & 4.31e-01 & 7.66e-02 & 1.04e-05 \\
\textsc{Actor}      & 32     & 1     & 2.39e-01 & 9.01e-04 & 5.74e-04 \\
\bottomrule
\end{tabular}
\end{table}


\begin{table}[htbp]
\centering
\caption{Optimised BRIDGE hyperparameters across synthetic SBM datasets.}
\label{tab:hp_rewire_synth}
\setlength{\tabcolsep}{3pt}
\begin{tabular}{lcccccccc}
\toprule
Dataset ($h$) & Iter $M$ & Permutation (for $\mat{P}_k$) & $\avg{d}$ & Hidden & Depth & Dropout & Learning Rate      & Weight Decay      \\
\midrule
0.35 & 17 & (1, 2)   & 13.9 &  64 & 1 & 2.13e-01 & 8.64e-02 & 2.56e-06 \\
0.40 & 22 & (2, 1)   & 14.3 & 128 & 1 & 4.89e-01 & 9.81e-02 & 2.38e-06 \\
0.45 & 48 & (2, 1) & 23.0 & 128 & 1 & 5.77e-02 & 3.15e-02 & 6.24e-06 \\
0.50 & 46 & (2, 1) & 11.9 &  64 & 1 & 5.26e-01 & 8.30e-02 & 5.28e-06 \\
0.55 & 12 & (2, 1) & 12.5 &  32 & 1 & 6.28e-02 & 6.39e-02 & 2.10e-05 \\
0.60 & 28 & (2, 1) & 11.9 &  32 & 1 & 2.01e-01 & 7.51e-02 & 1.66e-06 \\
0.65 & 42 & (1, 2)   & 11.7 &  16 & 1 & 3.32e-01 & 9.74e-02 & 1.62e-05 \\
0.70 & 22 & (1, 2)   & 12.5 &  64 & 1 & 5.24e-01 & 8.43e-02 & 5.95e-06 \\
\bottomrule
\end{tabular}
\end{table}

\begin{table}[htbp]
\centering
\caption{Optimised BRIDGE hyperparameters across real-world datasets.}
\label{tab:hp_rewire_real}
\setlength{\tabcolsep}{3pt}
\begin{tabular}{lcccccccc}
\toprule
Dataset    & Iter $M$ & Permutation (for $\mat{P}_k$) & $\avg{d}$ & Hidden Units & Depth & Dropout & Learning Rate      & Weight Decay      \\
\midrule
\textsc{Wisconsin}  & 95   & (1, 4), (2, 5)         & 11.9          & 32     & 1     & 3.84e-01 & 3.11e-04 & 9.36e-05 \\
\textsc{Texas}      & 33   & (2, 3)         & 10.1          & 16     & 1     & 4.90e-01 & 1.04e-04 & 3.64e-06 \\
\textsc{Cornell}    & 81   & (3, 5)         & 10.8          & 32     & 1     & 3.38e-01 & 1.37e-04 & 5.57e-05 \\
\textsc{Cora}       & 89   & (1, 7), (2, 4), (5, 6)       & 51.3          & 32     & 1     & 4.98e-01 & 2.10e-03 & 2.34e-05 \\
\textsc{Citeseer}   & 46   & (1, 2), (3, 6)        & 35.6           & 128    & 3     & 5.61e-01 & 6.99e-04 & 1.05e-06 \\
\textsc{Squirrel}   & 91   & (1, 4), (3, 5)       & 65.9          & 32     & 2     & 5.39e-01 & 1.69e-03 & 1.34e-06 \\
\textsc{Chameleon}  & 26   & (2, 4), (3, 5)       & 14.0          & 64     & 3     & 4.83e-01 & 7.21e-05 & 1.25e-06 \\
\textsc{Actor}      & 12   & (1, 2), (3, 4)        & 10.2           & 64     & 1     & 3.96e-01 & 1.46e-04 & 2.08e-06 \\
\bottomrule
\end{tabular}
\end{table}


\begin{table}[htbp]
\centering
\caption{Optimised SDRF hyperparameters across synthetic SBM datasets.}
\label{tab:hp_sdrf_synth}
\begin{tabular}{lcccccccc}
\toprule
Dataset ($h$) & $\tau$ & Iterations & $C_{\mathrm{plus}}$ & Hidden & Depth & Dropout & Learning Rate & Weight Decay \\
\midrule
0.35 & 2.51e+02 & 190 & 7.90e+00 & 32 & 1 & 9.00e-02 & 2.66e-02 & 1.16e-05 \\
0.40 & 8.77e+01 & 332 & 2.75e+01 & 64 & 1 & 4.69e-01 & 2.78e-03 & 6.64e-04 \\
0.45 & 1.23e+02 & 157 & 4.04e+01 & 64 & 1 & 1.28e-01 & 1.15e-02 & 1.33e-06 \\
0.50 & 9.08e-01 & 332 & 1.80e+01 & 64 & 1 & 4.68e-02 & 3.21e-03 & 5.12e-05 \\
0.55 & 3.57e+02 & 95 & 4.48e+00 & 64 & 1 & 9.22e-02 & 7.86e-03 & 7.76e-04 \\
0.60 & 3.54e+02 & 176 & 2.78e+01 & 32 & 1 & 1.33e-01 & 9.16e-02 & 5.52e-04 \\
0.65 & 1.61e+02 & 20 & 4.15e+01 & 64 & 1 & 2.22e-02 & 7.52e-02 & 1.69e-05 \\
0.70 & 3.94e+02 & 83 & 2.07e+01 & 32 & 1 & 7.67e-02 & 1.17e-03 & 4.38e-06 \\
\bottomrule
\end{tabular}
\end{table}

\begin{table}[htbp]
\centering
\caption{Optimised SDRF hyperparameters across real-world datasets.}
\label{tab:hp_sdrf_real}
\begin{tabular}{lcccccccc}
\toprule
Dataset & Iter $M$ & $\tau$ & $C^{+}$ & Hidden Units & Depth & Dropout & Learning Rate & Weight Decay \\
\midrule
\textsc{Wisconsin} & 33 & 332.84 &  0.99 &  16 & 1 & 4.85e-01 & 9.60e-05 & 1.05e-04 \\
\textsc{Texas}     & 93 & 353.47 & 48.64 & 128 & 1 & 1.02e-01 & 1.61e-04 & 7.15e-04 \\
\textsc{Cornell}   & 81 &  46.13 & 41.49 & 128 & 1 & 3.83e-02 & 5.44e-04 & 9.04e-06 \\
\textsc{Cora}      & 12 & 184.28 & 18.72 & 128 & 1 & 4.03e-01 & 3.05e-04 & 1.00e-06 \\
\textsc{Citeseer}  & 58 & 417.26 & 39.78 &  64 & 1 & 1.71e-01 & 8.41e-03 & 4.06e-04 \\
\textsc{Squirrel}  & 45 &  94.36 & 41.56 &  16 & 1 & 6.05e-02 & 2.34e-02 & 5.12e-05 \\
\textsc{Chameleon} & 64 & 261.00 & 14.99 & 128 & 1 & 1.85e-03 & 5.96e-02 & 1.32e-06 \\
\textsc{Actor}     & 27 & 446.23 & 31.21 & 128 & 1 & 3.42e-01 & 2.89e-04 & 7.31e-04 \\
\textsc{Pubmed}    & 84 & 268.48 & 32.83 &  64 & 1 & 4.58e-01 & 6.83e-03 & 5.35e-05 \\
\bottomrule
\end{tabular}
\end{table}


\begin{table}[htbp]
\centering
\caption{Optimised DIGL hyperparameters across synthetic SBM datasets.}
\label{tab:hp_digl_synth}
\begin{tabular}{lcccccccc}
\toprule
Dataset ($h$) & $\alpha$ & $\epsilon$ & Hidden & Depth & Dropout & Learning Rate & Weight Decay \\
\midrule
0.35 & 1.58e-01 & 8.96e-03 & 128 & 1 & 6.35e-01 & 5.56e-04 & 4.82e-05 \\
0.40 & 1.77e-01 & 8.93e-03 & 128 & 1 & 2.22e-02 & 3.35e-05 & 1.65e-06 \\
0.45 & 1.95e-01 & 1.54e-02 & 128 & 1 & 2.38e-01 & 3.43e-05 & 1.60e-05 \\
0.50 & 9.38e-02 & 9.00e-03 & 128 & 1 & 4.20e-01 & 1.27e-04 & 2.04e-05 \\
0.55 & 8.11e-02 & 4.81e-03 & 128 & 1 & 6.60e-01 & 4.47e-03 & 5.70e-05 \\
0.60 & 8.86e-02 & 8.93e-03 & 128 & 1 & 1.37e-01 & 5.38e-02 & 2.13e-06 \\
0.65 & 2.25e-01 & 1.38e-02 & 128 & 1 & 4.63e-01 & 6.38e-05 & 1.98e-04 \\
0.70 & 1.58e-01 & 9.71e-03 & 128 & 1 & 3.57e-01 & 2.83e-03 & 1.27e-04 \\
\bottomrule
\end{tabular}
\end{table}

\begin{table}[htbp]
\centering
\caption{Optimised DIGL hyperparameters across real-world datasets.}
\label{tab:hp_digl_real}
\begin{tabular}{lcccccccc}
\toprule
Dataset & $\alpha$ & $\epsilon$ & Hidden Units & Depth & Dropout & Learning Rate & Weight Decay \\
\midrule
\textsc{Wisconsin} & 1.16e-01 & 3.08e-04 & 128 & 2 & 6.65e-01 & 1.48e-05 & 1.46e-06 \\
\textsc{Texas}     & 2.30e-01 & 5.88e-04 &  32 & 3 & 5.85e-01 & 1.17e-05 & 1.19e-05 \\
\textsc{Cornell}   & 2.00e-01 & 1.13e-05 &  16 & 2 & 6.14e-01 & 3.18e-02 & 6.67e-04 \\
\textsc{Cora}      & 2.51e-01 & 6.59e-04 & 128 & 1 & 2.62e-02 & 2.63e-03 & 1.54e-06 \\
\textsc{Citeseer}  & 2.65e-01 & 2.70e-04 &  32 & 1 & 9.34e-02 & 1.28e-03 & 1.87e-04 \\
\textsc{Squirrel}  & 2.30e-01 & 3.16e-04 & 128 & 1 & 6.31e-01 & 7.40e-02 & 3.53e-06 \\
\textsc{Chameleon} & 2.65e-01 & 8.86e-04 &  16 & 1 & 6.60e-01 & 5.42e-02 & 1.14e-06 \\
\textsc{Actor}     & 5.59e-02 & 3.21e-04 &  64 & 1 & 6.99e-01 & 7.21e-05 & 4.22e-05 \\
\textsc{Pubmed}    & 2.78e-01 & 2.49e-04 &  32 & 1 & 6.74e-01 & 4.68e-02 & 1.41e-06 \\
\bottomrule
\end{tabular}
\end{table}

\end{document}